\def\eqref#1{equation~\ref{#1}}
\def\1{\bm{1}}
\DeclareMathAlphabet{\mathsfit}{\encodingdefault}{\sfdefault}{m}{sl}
\SetMathAlphabet{\mathsfit}{bold}{\encodingdefault}{\sfdefault}{bx}{n}
\renewcommand{\d}[1]{\ensuremath{\operatorname{d}\!{#1}}}
\theoremstyle{plain}
\newtheorem{theorem}{Theorem}[section]
\theoremstyle{definition}
\providecommand{\csiszar}{Csisz\`{a}r}
\title{Analyzing and Improving Optimal-Transport-based Adversarial Networks}
\author{%
  Jaemoo Choi\thanks{Equal contribution. Correspondence to: Myungjoo Kang \texttt{<mkang@snu.ac.kr>}.} \\
  Seoul National University \\
  \texttt{toony42@snu.ac.kr} \\
  \And
  Jaewoong Choi \footnotemark[1]  \\
  Korea Institute for Advanced Study \\
  \texttt{chwj1475@kias.re.kr} \\
  \And
  Myungjoo Kang \\
  Seoul National University \\
  \texttt{mkang@snu.ac.kr} \\
}
\begin{document}

\maketitle

\begin{abstract}
Optimal Transport (OT) problem aims to find a transport plan that bridges two distributions while minimizing a given cost function. OT theory has been widely utilized in generative modeling. In the beginning, OT distance has been used as a measure for assessing the distance between data and generated distributions. Recently, OT transport map between data and prior distributions has been utilized as a generative model. These OT-based generative models share a similar adversarial training objective.
In this paper, we begin by unifying these OT-based adversarial methods within a single framework. 
Then, we elucidate the role of each component in training dynamics through a comprehensive analysis of this unified framework. 
Moreover, we suggest a simple but novel method that improves the previously best-performing OT-based model.
Intuitively, our approach conducts a gradual refinement of the generated distribution, progressively aligning it with the data distribution.
Our approach achieves a FID score of 2.51 on CIFAR-10 and 5.99 on CelebA-HQ-256, outperforming unified OT-based adversarial approaches.
\end{abstract}

% \newpage
\section{Introduction}
Optimal Transport (OT) theory addresses the most cost-efficient way to transport one probability distribution to another \citep{villani, ComputationalOT}. OT theory has been widely exploited in various machine learning applications, such as generative modeling \citep{wgan, otm}, domain adaptation \citep{guan2021multi, shen2018wasserstein}, unpaired image-to-image translation \citep{not, xie2019scalable}, point cloud approximation \citep{particle-ot}, and data augmentation \citep{alvarez2020geometric, da-ot}. In this work, we focus on OT-based generative modeling. During its early stages, WGAN \citep{wgan} and its variants \citep{wgan-gp, wgan-lp, wgan-qc, sngan} introduced OT theory to define loss functions in GANs \citep{gan} (\textit{OT Loss}). More precisely, OT-based Wasserstein distance is introduced for measuring a distance between data and generated distributions. These approaches have shown relative stability and improved performance compared to the vanilla GAN \citep{wgan-gp}.
However, these models still face challenges, such as an unstable training process and limited expressivity \citep{convergence-robustness, which-converge}.

Recently, an alternative approach has been introduced in OT-based generative modeling. 
These works consider OT problems between noise prior and data distributions, aiming to learn the transport map between them \citep{ae-ot, ae-ot-gan, fanscalable} (\textit{OT Map}). This transport map serves as a generative model because it moves a noise sample into a data sample. 
%Notably, two approaches, OTM \citep{otm} and UOTM \citep{uotm} have been proposed. 
In this context, two noteworthy methods have been proposed: OTM \citep{otm} and UOTM \citep{uotm}.
Interestingly, these two algorithms present similar adversarial training algorithms as previous OT Loss approaches, like WGAN, but with additional cost function and composition with convex functions (Eq. \ref{eq:otm} and \ref{eq:uotm}). These models, especially UOTM, demonstrated promising outcomes, particularly in terms of stability in convergence and performance. Nevertheless, despite the success of OT Map approaches, there is a lack of understanding about why they achieve such high performance and what their limitations are.

(i) In this paper, we propose a unified framework that integrates previous OT Loss and OT Map approaches. Since both of these approaches utilize GAN-like adversarial training, we collectively refer to them as \textbf{OT-based GANs}. (ii) Utilizing this framework, we conduct a comprehensive analysis of previous OT-based GANs for an in-depth analysis of each constituent factor of OT Map. Our analysis reveals that the cost function mitigates the mode collapse problem, and the incorporation of a strictly convex function into discriminator loss is beneficial for the stability of the algorithm. 
(iii) Moreover, we propose a straightforward but novel method for improving the previous best-performing OT-based GANs, i.e., UOTM. Our method involves a gradual up-weighting of divergence terms in the Unbalanced Optimal Transport problem. In this respect, we refer to our model as \textit{UOTM with Scheduled Divergence (UOTM-SD)}. This gradual up-weighting of divergence terms in UOTM-SD leads to the convergence of the optimal transport plan from the UOT problem toward that of the OT problem. Our UOTM-SD outperforms UOTM and significantly improves the sensitivity of UOTM to the cost-intensity hyperparameter. Our contributions can be summarized as follows:
%\begin{itemize}[topsep=-1pt, itemsep=0pt, leftmargin=*]
\begin{itemize}[topsep=-1pt, itemsep=0pt, leftmargin=*]
    \item We introduce an integrated framework that encompasses previous OT-based GANs.
    \item We present a comparative analysis of these OT-based GANs to elucidate the role of each component.
    \item We propose a simple and well-motivated modification to UOTM that improves both generation results and $\tau$-robustness of UOTM for the cost function $c(x,y)=\tau \| x-y \|_2^2$.
\end{itemize}

\paragraph{Notations}
Let $\mathcal{X}$, $\mathcal{Y}$ be compact Polish spaces and $\mu$ and $\nu$ be probability distributions on $\mathcal{X}$ and $\mathcal{Y}$, respectively. 
Assume that these probability spaces satisfy some regularity conditions described in Appendix \ref{appen:proofs}.
%We denote $\mu$ and $\nu$ as the prior (source) distribution and data (target) distributions, respectively.
We denote the prior (source) distribution as $\mu$ and data (target) distributions as $\nu$.
%Let $\mathcal{M}_+(\mathcal{X}\times\mathcal{Y})$ be a set of positive measures on $\mathcal{X}\times\mathcal{Y}$.
$\mathcal{M}_+(\mathcal{X}\times\mathcal{Y})$ represents the set of positive measures on $\mathcal{X}\times\mathcal{Y}$.
For $\pi \in \mathcal{M}_+(\mathcal{X}\times\mathcal{Y})$, we denote the marginals with respect to $\mathcal{X}$ and $\mathcal{Y}$ as $\pi_0$ and $\pi_1$.
$\Pi(\mu,\nu)$ denote the set of joint probability distributions on $\mathcal{X}\times\mathcal{Y}$ whose marginals are $\mu$ and $\nu$, respectively.
For a measurable map $T:\mathcal{X} \rightarrow \mathcal{Y}$, $T_{\#}\mu$ denotes the associated pushforward distribution of $\mu$.
$c(x,y)$ refers to the transport cost function defined on $\mathcal{X}\times\mathcal{Y}$.
In this paper, we assume $\mathcal{X}, \mathcal{Y} \subset \mathbb{R}^d$ and the quadratic cost $c(x,y)=\tau \lVert x-y \rVert_2^2$, where $\tau$ is a given positive constant.
For a detailed explanation of assumptions, see Appendix \ref{appen:proofs}.

\section{Background and Related Works} \label{sec:background}
In this section, we introduce several OT problems and their equivalent forms. 
Then, we provide an overview of various OT-based GANs, which will be the subject of our analysis.

\paragraph{Kantorovich OT}
\citet{Kantorovich1948} formulated the OT problem through the cost-minimizing coupling $\pi \in \Pi(\mu, \nu )$ between the source distribution $\mu$ and the target distribution $\nu$ as follows:
\begin{equation}\label{eq:kantorovich} 
    C(\mu, \nu) := \inf_{\pi \in \Pi(\mu, \nu )} \left[ \int_{\mathcal{X}\times \mathcal{Y}} c(x,y) \d\pi (x,y) \right],
\end{equation}
Under mild assumptions, this Kantorovich problem can be reformulated into several equivalent forms, such as the \textit{dual} (Eq. \ref{eq:kantorovich-dual}) and \textit{semi-dual} (Eq. \ref{eq:kantorovich-semi-dual}) formulation \citep{villani}:
\begin{align} 
    C(\mu,\nu) &= \sup_{u(x)+v(y) \leq c(x,y)} \left[ \int_{\mathcal{X}}u(x) \d\mu(x) + \int_{\mathcal{Y}}v(y) \d\nu(y)  \right], \label{eq:kantorovich-dual} \\
               &= \sup_{v \in L^1 (\nu)} \left[ \int_{\mathcal{X}}v^c(x) \d \mu(x) + \int_{\mathcal{Y}}v(y) \d \nu(y) \right], \label{eq:kantorovich-semi-dual}
\end{align} 
where $u$ and $v$ are Lebesgue integrable with respect to measure $\mu$ and $\nu$, i.e., $u\in L^1(\mu)$ and $v \in L^1(\nu)$, and the $c$-transform $v^c(x):=\inf_{y\in \mathcal{Y}} \left( c(x,y) - v(y) \right)$.
For a particular case where the cost $c(\cdot, \cdot)$ is a distance function, i.e., the Wasserstein-1 distance, then $u=-v$ and $u$ is 1-Lipschitz \citep{villani}.
In such case, we call Eq. \ref{eq:kantorovich-dual} a \textit{Kantorovich-Rubinstein duality}.

\paragraph{OT Loss in GANs}
WGAN \citep{wgan} introduced the Wasserstein-1 distance to define a loss function in GAN. 
This Wasserstein distance serves as a distance measure between generated distribution and data distribution.
From Kantorovich-Rubinstein duality, the optimization problem for WGAN is given as follows:
\begin{equation} \label{eq:wgan}
    \mathcal{L}_{v_\phi, T_{\theta}} = \sup_{\lVert v_\phi \rVert_L \leq 1} \inf_{T_\theta} \left[ - \int_{\mathcal{X}} v_\phi \left( T_\theta(x) \right)  \d\mu(x) + \int_{\mathcal{Y}} v_\phi(y)  \d\nu(y) \right],
\end{equation}
where the potential (critic) $v_\phi$ is 1-Lipschitz, i.e., $\lVert v_\phi \rVert_L \leq 1$, and $T_\theta$ is a generator.
WGAN-GP \citep{wgan-gp} suggested a gradient penalty regularizer to enhance the stability of WGAN training. 
For optimal coupling $\pi^*$, the optimal potential $v_\phi^*$ satisfies $\lVert \nabla v_\phi(\hat{y}) \rVert_2 = 1$ $\pi^*$-almost surely, where $\hat{y}=t T_\theta(x)+ (1-t)y$ for some $0\leq t \leq 1$ with $\left(T_\theta(x),y\right)\sim \pi^*$. 
WGAN-GP exploits this optimality condition by introducing $\mathcal{R}(x,y)=\left(\lVert \nabla v_\phi(\hat{y}) \rVert_2 - 1 \right)^2$ as the penalty term.

\paragraph{OT Map as Generative model}
Parallel to OT Loss approaches, there has been a surge of research on directly modeling the optimal transport map between the input prior distribution and the real data distribution \citep{otm, ae-ot, ae-ot-gan, icnn-ot, scalable-uot, uotm}. 
In this case, the optimal transport map serves as the generator itself. 
In particular, \citet{otm} and \citet{fanscalable} leverage the semi-dual formulation (Eq. \ref{eq:kantorovich-semi-dual}) of the Kantorovich problem for generative modeling.
Specifically, these models parametrize $v=v_\phi$ in Eq. \ref{eq:kantorovich-semi-dual} and represent its $c$-transform $v^{c}$ through the transport map\footnote{Note that this parametrization does not precisely characterize the optimal transport map \citep{otm}. %\jw{The optimal transport map satisfies this relationship, but it is not unique.}
The optimal transport map satisfies this relationship, but not all functions satisfying this condition are transport maps.
However, investigating a better parametrization of the optimal transport is beyond the scope of this work.}
$T_\theta:\mathcal{X}\rightarrow\mathcal{Y}, \,\, x \mapsto \arg\inf_{y \in \mathcal{Y}} \left[c(x, y) - v\left( y \right)\right]$. Then, we obtain the following optimization problem:
\begin{equation} \label{eq:otm}
    \mathcal{L}_{v_{\phi}, T_{\theta}} = \sup_{v_{\phi}} \left[ \int_{\mathcal{X}} \inf_{T_{\theta}} \left[ c\left(x,T_\theta(x)\right)-v_\phi \left( T_\theta(x) \right) \right] \d\mu(x) + \int_{\mathcal{Y}} v_\phi(y)  \d\nu(y) \right].
\end{equation}
Intuitively, $T_\theta$ and $v_\phi$ serve as the generator and the discriminator of a GAN. 
For convenience, we denote the optimization problem of Eq. \ref{eq:otm} as an OT-based generative model (OTM) \citep{fanscalable}. \textit{Note that, if we set $c=0$ and introduce a 1-Lipschitz constraint on $v_{\phi}$, this objective has the same form as WGAN (Eq. \ref{eq:wgan}).} In OT map models, the quadratic cost is usually employed.

Recently, \citet{uotm} extended OTM by leveraging the semi-dual form of the Unbalanced Optimal Transport (UOT) problem \citep{uot2}.
UOT extends the OT problem by relaxing the strict marginal constraints using the \csiszar{} divergences $D_{\Psi_{i}}$ (See the Appendix \ref{appen:proofs} for precise definition).
Formally, the UOT problem (Eq. \ref{eq:uot}) and its semi-dual form (Eq. \ref{eq:semi-dual-uot}) are defined as follows:
\begin{align} 
    C_{ub}(\mu, \nu) &= \inf_{\pi \in \mathcal{M}_+(\mathcal{X}\times\mathcal{Y})} \left[ \int_{\mathcal{X}\times \mathcal{Y}} c(x,y) \d\pi(x,y) + D_{\Psi_1}(\pi_0|\mu) + D_{\Psi_2}(\pi_1|\nu) \right], \label{eq:uot} \\
                    &= \sup_{v\in \mathcal{C(Y)}} \left[\int_\mathcal{X} -\Psi_1^*\left(-v^c(x))\right) \d\mu(x) + \int_\mathcal{Y} -\Psi^*_2(-v(y)) \d\nu (y) \right] \label{eq:semi-dual-uot},
\end{align}
where $\mathcal{C(Y)}$ denotes a set of continuous functions over $\mathcal{Y}$. Here, the entropy function $\Psi_{i} : \mathbb{R} \rightarrow [0, \infty]$ is a convex, lower semi-continuous, and non-negative function, and $\Psi_{i}(x) = \infty \text{ for } x < 0$. $\Psi_{i}^{*}$ denotes its convex conjugate. Note that for non-negative $\Psi_{i}$, \textbf{$\Psi_{i}^{*}$ is a non-decreasinig convex function}. For simplicity, we assume $\Psi_{i}^{*}(0)=0, (\Psi_{i}^{*})'(0)=1$. The reason for this assumption will be clarified in Sec \ref{sec:uotm_sd}.
By using the same parametrization as in Eq. \ref{eq:otm}, we arrive at the following:
\begin{equation} \label{eq:uotm}
    \mathcal{L}_{v_{\phi}, T_{\theta}} = \inf_{v_{\phi}} \left[ \int_{\mathcal{X}} \Psi_1^* \left( -\inf_{T_{\theta}} \left[ c\left(x,T_\theta(x)\right)-v_\phi \left( T_\theta(x) \right) \right] \right) \d\mu(x) + \int_{\mathcal{Y}} \Psi^*_2 \left(-v_\phi(y) \right) \d\nu(y) \right].
\end{equation}
We call such an optimization problem a UOT-based generative model (UOTM) \citep{uotm}.

\section{Analyzing OT-based Adversarial Approaches} \label{sec:analyzing}
In this section, we suggest a unified framework for OT-based GANs (Sec \ref{sec:unified}). Using this unified framework, we compare the dynamics of each algorithm through various experimental results (Sec \ref{sec:generation-result}). 
This comparative ablation study delves into the impact of employing strictly convex $g_1, g_2$ within discriminator loss, and the influence of cost function $c(x,y)=\tau \lVert x-y \rVert_2^2$.
Furthermore, we present an additional explanation for the success of UOTM (Sec \ref{sec:lipshitz}). 

\subsection{A Unified Framework} \label{sec:unified}
We present an integrated framework, Algorithm \ref{alg:uotm}, that includes various OT-based adversarial networks. 
These models are derived by directly parameterizing the potential and generator, utilizing the dual or semi-dual formulations of OT or UOT problems. 
Specifically, Algorithm \ref{alg:uotm} can represent the following models, depending on \textit{the choice of the cost function $c(x,y)$, the convex functions $g_1, g_2, g_3$, and the regularization term $\mathcal{R}$}. 
(Note that $g_1, g_2$ correspond to $\Psi_{1,2}^{*}$ in Eq \ref{eq:uotm}.)
Here, we denote two convex functions, Identity and Softplus, as $\text{Id}(x)=x$ and $\text{SP}(x) = 2\log(1+e^x) - 2\log 2$.\footnote{The softplus function is scaled and translated to satisfy $\text{SP}(0)=0$ and $\text{SP}'(0)=1$.} Also, the Gaussian noise $z$ represents the auxiliary variable and is different from the input prior noise $x \sim \mu$. This auxiliary variable $z$ is introduced to represent the stochastic transport map $T_{\theta}$ in the OT map models, such as UOTM.

\begin{algorithm}[t]
\caption{Unified training algorithm}
\begin{algorithmic}[1]
\Require 
Functions $g_1$, $g_2$, $g_3$. Generator network $T_\theta$ and the discriminator network $v_\phi$. 
The number of iterations per network $K_v$, $K_T$. 
Total iteration number $K$. Regularizer $\mathcal{R}$ with regularization hyperparameter $\lambda$.
\For{$k = 0, 1, 2 , \dots, K$}
    \For{$k=1$ to $K_v$}
    \State Sample a batch $X\sim \mu$, $Y\sim \nu$, $z\sim \mathcal{N}(\mathbf{0}, \mathbf{I})$.
    \State $\hat{y} = T_\theta(x, z)$.
    \State $\mathcal{L}_v = \frac{1}{|X|}\sum_{x\in X} g_1\left(-c\left(x, \hat{y} \right) + v_\phi \left(\hat{y} \right)\right) + \frac{1}{|Y|} \sum_{y\in Y} g_2 (-v_\phi(y)) + \lambda \mathcal{R}(y, \hat{y})$.
    \State Update $\phi$ by using the loss $\mathcal{L}_v$.
    \EndFor
    \For{$k=1$ to $K_T$}
    \State Sample a batch $X\sim \mu$, $z\sim \mathcal{N}(\mathbf{0}, \mathbf{I})$.
    \State $\mathcal{L}_T = \frac{1}{|X|}\sum_{x\in X} g_3(\left(c\left(x, T_\theta(x, z)\right)-v_\phi(T_\theta(x, z))\right))$.
    \State Update $\theta$ by using the loss $\mathcal{L}_T$.
    \EndFor
\EndFor
\end{algorithmic}
\label{alg:uotm}
\end{algorithm}
\vspace{-3pt}
\begin{itemize}[topsep=-1.5pt, itemsep=0pt]
    %\item \textbf{(Vanilla) GAN} \citep{gan} if $c\equiv 0$ ($\Leftrightarrow \tau=0$) and $g_1=g_2=g_3=\text{SP}$.
    \item \textbf{WGAN}  \citep{wgan} if $c\equiv 0$ and $g_1=g_2=g_3=\text{Id}$.\footnote{For the vanilla WGAN, we employed a weight clipping strategy following \cite{wgan}.}
    \item \textbf{WGAN-GP} \citep{wgan-gp} if $c\equiv 0$, $g_1=g_2=g_3=\text{Id}$, and $\mathcal{R}$ a gradient penalty.
    \item \textbf{OTM} \citep{otm} if $\tau>0$ and $g_1=g_2=g_3=\text{Id}$.
    \item \textbf{UOTM} \citep{uotm} if $\tau>0$, $g_1=g_2=\text{SP}$ and $g_3=\text{Id}$.
    %$g_3=\text{Id}$, and $g_1, g_2$ are convex, non-decreasing, differentiable functions.
    \item \textbf{UOTM w/o cost} if $\tau=0$, $g_1=g_2=\text{SP}$ and $g_3=\text{Id}$.
\end{itemize}

\begin{wraptable}{r}{0.35\textwidth}
    \vspace{-13pt}
    \centering
    \caption{
    \textbf{Unified Framework for OT-based GANs} ($g_3=\text{Id}$).
    %Description of OT-based GANs
    } \label{tab:unified}
    \vspace{-5pt}
    \scalebox{0.70}{
        \begin{tabular}{ccc}
        \toprule
          & $g_1=g_2=\text{Id}$ & $g_1=g_2=\text{SP}$ \\
        \midrule
        $\tau=0$ & WGAN & UOTM w/o cost \\
        $\tau>0$ & OTM  & UOTM \\
        \bottomrule
        \end{tabular}
    }
    \vspace{-15pt}
\end{wraptable}
In this work, we conduct a comprehensive comparative analysis of OT-based GANs: \textit{\textbf{WGAN, OTM, UOTM w/o cost, and UOTM}} (Table \ref{tab:unified}). This comparative analysis serves as an ablation study of two building blocks of OT-based GANs. Thus, we focus on investigating the influence of cost $c(\cdot, \cdot)$ and $g_1$, $g_2$. 
% ($g_3=\text{Id}$ for all models.)

\subsection{Comparative Analysis of OT-based GANs} \label{sec:generation-result}
In this section, we present qualitative and quantitative generation results of OT-based GANs on both toy and CIFAR-10 \citep{cifar10} datasets.
We particularly discuss how the algorithms differ with respect to functions $g_1$\&$g_2$, and the cost $c(\cdot,\cdot)$.
This analysis is conducted in terms of the well-known challenges associated with adversarial training procedures, namely, \textit{Unstable training and Mode collapse/mixture}. (See Appendix \ref{appen:GAN-problem} for the introduction of these challenges.)
Moreover, we provide an in-depth analysis of the underlying reasons behind these observed phenomena.

\paragraph{Experimental Settings} 
For visual analysis of the training dynamics, we evaluated these models on 2D multivariate Gaussian distribution, where the source distribution $\mu$ is a standard Gaussian. 
The network architecture is fixed for a fair comparison. 
Note that we impose $R_1$ regularization \citep{r1_reg} to WGAN and OTM because they diverge without any regularizations. 
Moreover, to investigate the scalability of the algorithms, we assessed these models on CIFAR-10 with various network architectures and hyperparameters.
See Appendix \ref{appen:B} for detailed experiment settings.

\begin{figure}[t]
    \includegraphics[page=1,width=0.97\textwidth]{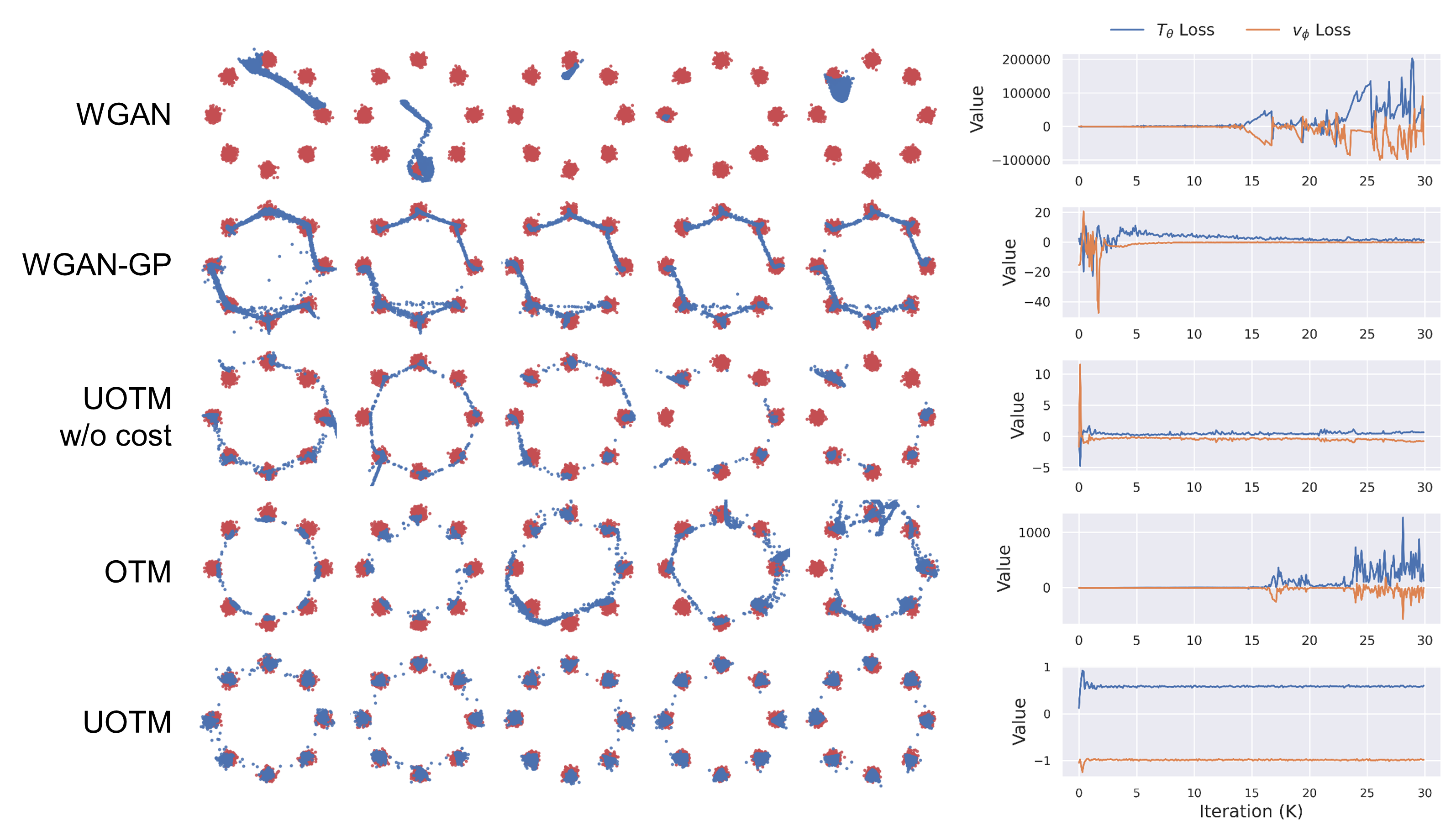}
    \vspace{-10pt}
    \caption{
    \textbf{Comparison of Training Dynamics between OT-based GANs.} \textbf{Left}: Visualization of generated samples (blue) and data samples (red) for every 6K iterations. \textbf{Right}: Training loss of the generator ($T_\theta$ Loss) and discriminator ($v_\phi$ Loss) for each algorithm.
    }
    \label{fig:gmm-samples}
    \vspace{-8pt}
\end{figure}
\begin{figure}[t]
    \centering
    \begin{minipage}{.42\linewidth}
        \centering
        \captionof{table}{
        \textbf{Quantitative Evaluation of OT-based GANs} on CIFAR-10.
        }
        \vspace{-5pt}
        \label{tab:architecture-cifar10}
        \scalebox{0.65}{
            \begin{tabular}{cccc}
                \toprule
                \multirow{2}{*}{Model}  &  \multicolumn{3}{c}{Metric}       \\
                \cmidrule{2-4}
                                    &  FID ($\downarrow$)  &  Precision ($\uparrow$) & Recall ($\uparrow$) \\
                \midrule
                WGAN                &  48.8 & 0.45 & 0.02  \\
                WGAN-GP             &  4.5 & 0.71 & 0.55  \\
                OTM                 &  4.3 & 0.71 & 0.49  \\
                \midrule
                UOTM w/o cost       &  19.7 & \textbf{0.80} & 0.13  \\                
                UOTM (SP)           &  \textbf{2.7} & 0.78 & \textbf{0.62}  \\
                UOTM (KL)           &  2.9  & - & -\\
                \bottomrule
            \end{tabular}
            }
    \end{minipage}
    \hfill
    \centering
    \begin{minipage}{.28\linewidth}
        \includegraphics[width=\textwidth]{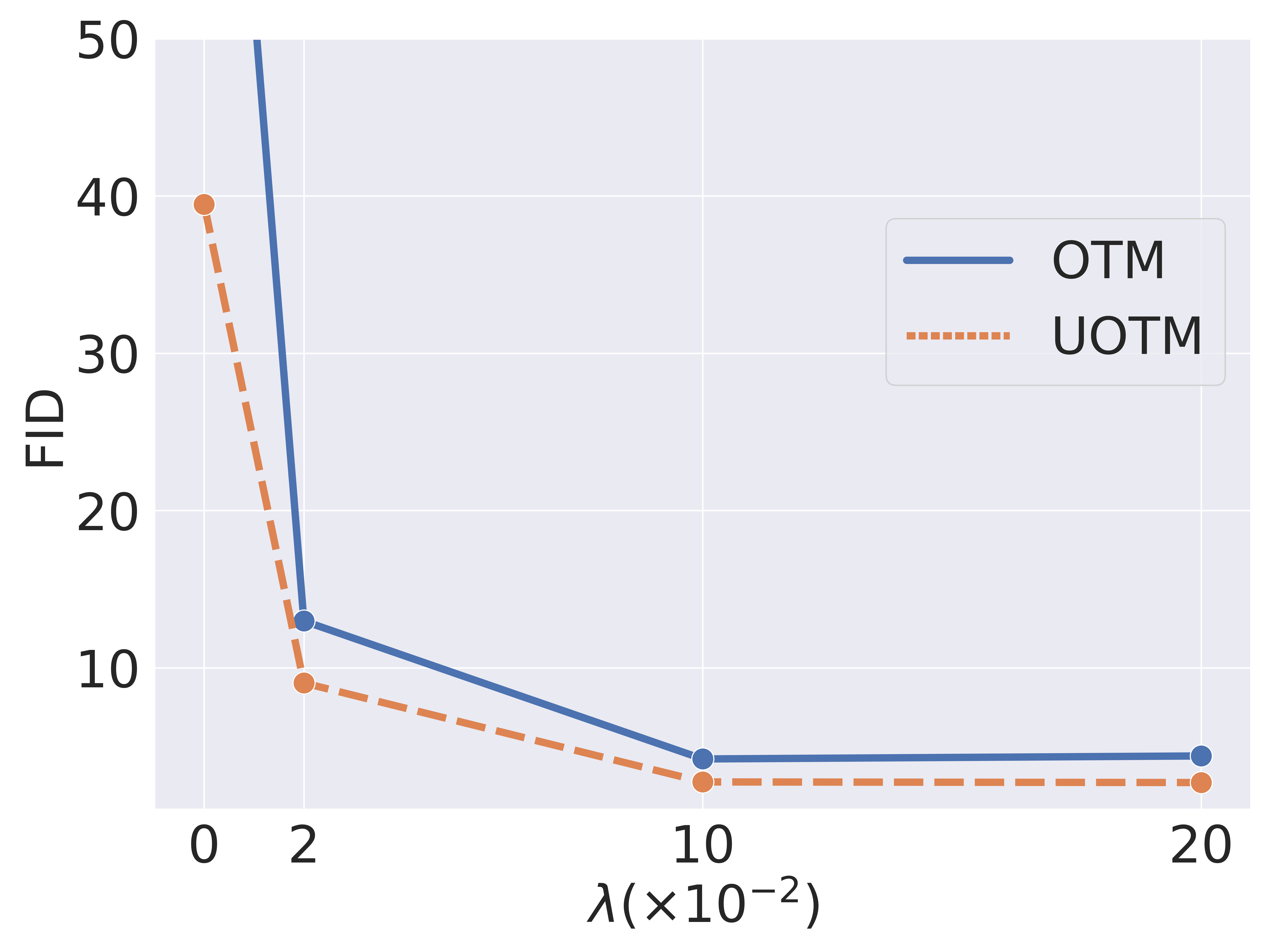}
        \vspace{-20pt}
        \caption{\textbf{Ablation Study on Regularizer Intensity $\lambda$.}}
        % UOTM reg [0, 0.02, 0.2, 1] FID [39.47, 9.04, 2.71, 4.53]
        \label{fig:abl_reg}
    \end{minipage}
    \hfill
    \centering
    \begin{minipage}{.28\linewidth}
        \includegraphics[width=\textwidth]{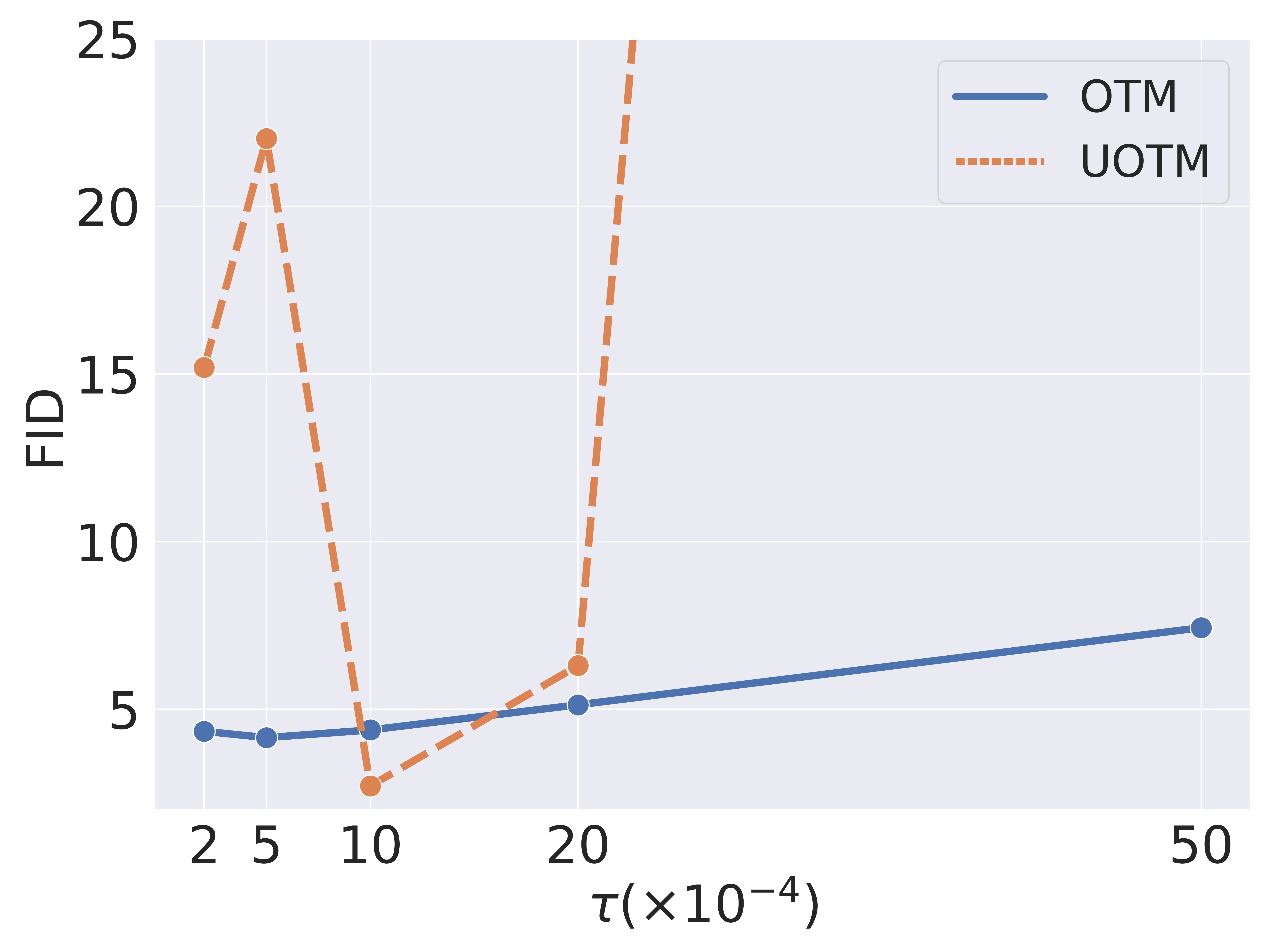}
        \vspace{-20pt}
        \caption{
        \textbf{Ablation Study on Cost Intensity $\tau$}.
        %Ablation study on $\tau$ where $c(x,y)= \tau \lVert x-y \rVert_2^2$.
        }
        \label{fig:abl_tau}
    \end{minipage}
    \vspace{-10pt}
\end{figure}

%\paragraph{Effect of $g_1$ and $g_2$}
\subsubsection{Effect of Strictly Convex $g_1$ and $g_2$}
\paragraph{Experimental Results}
Fig \ref{fig:gmm-samples} illustrates how each model evolves during training for each 6K iterations. 
To investigate the effect of $g_1$ and $g_2$, we compare the models with $g_1=g_2=\text{Id}$ (WGAN, OTM) and $g_1=g_2=\text{Sp}$ (UOTM, UOTM w/o cost). When $g_1=g_2=\text{Id}$, WGAN and OTM initially appear to converge in the early stages of training. However, as training progresses, the loss highly fluctuates, leading to divergent results. Interestingly, adding a gradient penalty regularizer to WGAN (WGAN-GP) is helpful in addressing this loss fluctuation. Conversely, when $g_1=g_2=\text{Sp}$, UOTM and UOTM w/o cost consistently perform well, with the loss steadily converging during training.
From these observations, we interpret that \textbf{setting $g_1, g_2$ to $\text{Sp}$ functions, which are strictly convex, contribute to the stable convergence of OT-based GANs.}

Moreover, Tab \ref{tab:architecture-cifar10} presents CIFAR-10 generation results of OT-based GANs with NCSN++ \citep{scoresde} backbone architecture (See the Appendix \ref{appen:full-table} for DCGAN \citep{dcgan} backbone results). Here, we additionally compared UOTM (KL) following \citet{uotm}. UOTM (KL) serves as another example of strictly convex $g_1, g_2$, where $g_1=g_2=e^{x}-1$.
\footnote{Here, UOTM (SP) outperformed the original UOTM (KL). Since UOTM defines $g_1, g_2$ as any non-decreasing and convex functions, we adopt UOTM (SP) as the default UOTM model throughout this paper.}
As in the toy dataset, UOTM w/o cost and UOTM achieve better FID scores than their algorithmic counterparts with respect to $g_1, g_2$, i.e., WGAN and OTM, respectively. The precision and recall metric \citep{PrecisionAndRecallMetric}  results will be examined regarding the cost function in Sec \ref{sec:effectOftau}. The additional stability of UOTM can be observed in an ablation study on the regularizer intensity $\lambda$ (Fig \ref{fig:abl_reg}). UOTM model provides more robust FID results compared to OTM. 

\paragraph{Effect of $g_1$ and $g_2$ in Optimization}
We observed that the introduction of strictly convex functions, such as $\text{SP}$ or $g(x)=e^{x}-1$, into $g_1, g_2$ contributes to more stable training of OT-based GANs. We explain this enhanced stability in terms of \textbf{the adaptive optimization of the potential network $v_{\phi}$.} From the potential loss function $\mathcal{L}_v$ in line 5 of Algorithm \ref{alg:uotm}, we can express the gradient descent update for the potential function $v_\phi$ with a learning rate $\gamma$ as follows:
\begin{align} \label{eq:gradient-descent}
    \phi - \gamma \nabla_\phi \mathcal{L}_{v_\phi} &= \phi - \frac{\gamma}{|X|} \sum_{x\in X} \underbrace{g_1 ' (-\hat{l}(x))}_{=: \hat{w}(x)} \nabla_\phi v_\phi (\hat{y}) + \frac{\gamma}{|Y|} \sum_{y\in Y} \underbrace{g_2 ' \left( - v_\phi (y) \right)}_{=: w(y)} \nabla_\phi v_\phi (y),
    % \\
    % &= \phi - \frac{\gamma}{N} \sum_{i=1}^N \hat{w}_i \nabla_\phi v_\phi (\hat{y}_i) + \frac{\gamma}{N} \sum_{i=1}^N w_i \nabla_\phi v_\phi (y_i)
\end{align}
where $\hat{l}(x) = c(x,\hat{y}) - v_\phi (\hat{y})$.
Note that the generator loss $\mathcal{L}_{T}=\frac{1}{|X|} \sum_{x\in X} \hat{l}(x)$, since we assume $g_{3}=\text{Id}$.
Here, $w$ and $\hat{w}$ in Eq. \ref{eq:gradient-descent} serve as sample-wise weights for the potential gradient $\nabla_{\phi} v_{\phi}$.

\textit{We interpret the role of $g_1, g_2$ as mediating the balance between $T$ and $v$.} Suppose the generator dominates the potential for certain $x$, i.e., $\hat{l}(x)$ is small. In this case, because $g_1'$ is a strictly increasing function, the weight $\hat{w}(x)$ becomes large for this sample $x$, counterbalancing the dominant generator. Similarly, consider the weight of the true data sample $w(y)$. Note that the goal of potential is to assign a high value to real data $y$ and a low value to generated samples $\hat{y}$. Assume that the potential is not good at discriminating certain $y$, which means that $v(y)$ is small. Then, the weight $w(y)$ becomes large for this sample $y$ as above. We hypothesize that this failure-aware adaptive optimization of the potential $v_\phi$ stabilizes the training procedure, regardless of the regularizer.

\begin{figure}[t] 
    \captionsetup[subfigure]{aboveskip=+0.5pt,belowskip=-0.1pt}
    \begin{center}
        \begin{subfigure}[b]{0.32\textwidth}
            \includegraphics[width=\textwidth]{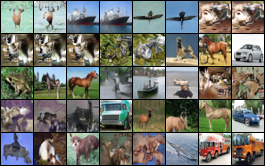}
            \caption{Small $\tau$}
        \end{subfigure}
        \hfill
        \begin{subfigure}[b]{0.32\textwidth}
            \includegraphics[width=\textwidth]{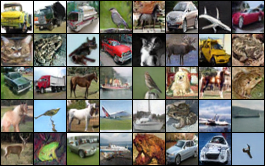}
            \caption{
            Optimal $\tau$
            %Adequate $\tau$
            }
        \end{subfigure}
        \hfill
        \begin{subfigure}[b]{0.32\textwidth}
            \includegraphics[width=\textwidth]{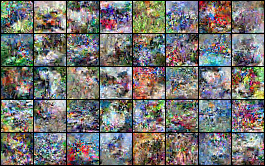}
            \caption{Large $\tau$}
        \end{subfigure}
        \vspace{-7pt}
        \caption{
        \textbf{Qualitative Comparison of Generated Samples from UOTM.} \textbf{Left}: When $\tau$ is too small ($\tau=0.0002$). \textbf{Middle}: When $\tau$ is optimal ($\tau=0.001$).
        \textbf{Right}: When $\tau$ is too large ($\tau=0.005$). 
        On Left, we reordered randomly generated samples to gather similar-looking samples.
        % \jw{of the same class closely.} 
        When $\tau$ is large, the samples appear noisy, and when $\tau$ is small, the generated samples show a mode collapse problem. }
        \label{fig:image-mode}
        \vspace{-20pt}
    \end{center}
\end{figure}

\subsubsection{Effect of Cost Function} \label{sec:effectOftau}
\paragraph{Experimental Results}
To examine the effect of the cost function $c(x,y)=\tau \lVert x-y \rVert_2^2$, we compare the models with $\tau=0$ (WGAN, UOTM w/o cost) and $\tau>0$ (OTM, UOTM) in Fig \ref{fig:gmm-samples}. 
When $\tau=0$, both WGAN and UOTM w/o cost exhibit a mode collapse problem. 
These models fail to fit all modes of the data distribution. On the other hand, WGAN-GP shows a mode mixture problem.
WGAN-GP generates inaccurate samples that lie between the modes of data distribution. 
In contrast, when $\tau>0$, both OTM and UOTM avoid model collapse and mixture problems.
In the initial stages of training, OTM succeeds in capturing all modes of data distribution, until training instability occurs due to loss fluctuation.
UOTM achieves the best distribution fitting by exploiting the stability of $g_1, g_2$ as well. 
Moreover, Table \ref{tab:architecture-cifar10} provides a quantitative assessment of the mode collapse problem on CIFAR-10. The results are consistent with our analysis on the Toy datasets (Fig \ref{fig:gmm-samples}). The recall metric assesses the mode coverage for each model. In this regard, the introduction of the cost function improves the recall metric for each model: from WGAN (0.02) to OTM (0.49) and from UOTM w/o cost (0.13) to UOTM (0.62). The precision metric evaluates the faithfulness of generated images for each model. UOTM w/o cost achieves the best precision score, but the recall metric is significantly lower than UOTM. This result shows that UOTM w/o cost exhibited the mode collapse problem.
From these results, we interpret that \textbf{the cost function $c(x,y)$ plays a crucial role in preventing mode collapse by guiding the generator towards cost-minimizing pairs}.

Furthermore, we analyze the \textit{influence of the cost function intensity $\tau$} by performing an ablation study on $\tau$ on CIFAR-10 (Fig \ref{fig:abl_tau}). 
Interestingly, the results are quite different between OTM and UOTM. When we compare the best-performing $\tau$, UOTM achieves much better FID scores than OTM ($\tau=10 \times 10^{-4})$. However, when $\tau$ is excessively small or large, the performance of UOTM deteriorates severely. On the contrary, OTM maintains relatively stable results across a wide range of $\tau$.
The deterioration of UOTM can be understood intuitively by examining the generated results in Fig \ref{fig:image-mode}. When $\tau$ is too large, UOTM tends to produce noise-like samples because the cost function dominates the other divergence terms $D_{\Psi_{i}}$ within the UOT objective (Eq. \ref{eq:uot}). When $\tau$ is too small, UOTM shows a mode collapse problem because the negligible cost function fails to prevent the mode collapse.
Conversely, as inferred from OT objective (Eq. \ref{eq:kantorovich}), the optimal pair $(v^\star, T^\star)$ of OTM remains consistent regardless of variations in $\tau$. Hence, OTM presents relatively consistent performance across changes in $\tau$ (Fig \ref{fig:abl_tau}).
In Sec \ref{sec:uotm_sd}, we propose a method that enhances the $\tau$-robustness of UOTM while also improving its best-case performance.
\vspace{-4pt}
\paragraph{Effect of Cost in Mode Collapse/Mixture}
In Fig \ref{fig:gmm-samples}, the OT-based GANs with the cost term (OTM, UOTM) exhibited a significantly lower occurrence of mode collapse/mixture, compared to the models without the cost term. This observation proves that the cost function plays a regularization role in OT-based GANs, helping to cover all modes within the data distribution. 
This cost function encourages the generator $T$ to minimize the quadratic error between input $x$ and output $T(x)$. \textbf{In other words, the generator $T$ is indirectly guided to transport each input $x$ to a point, that is within the data distribution support and close to $x.$}
Fig \ref{fig:gmm-mode} visualizes the transported pair $(x, T(x))$ by the gray line that connects $x$ and $T(x)$. Fig \ref{fig:gmm-mode} demonstrates that this cost term induces OTM and UOTM to spread the generated samples (in an optimal way). (See Appendix \ref{appen:toy} for a comprehensive comparison of generators, including WGAN-GP, UOTM-SD, and GT transport map.)

\begin{figure}[t] 
    \captionsetup[subfigure]{aboveskip=-1pt,belowskip=-1pt}
    \begin{center}
        \begin{subfigure}[b]{0.24\textwidth}
            \includegraphics[width=\textwidth]{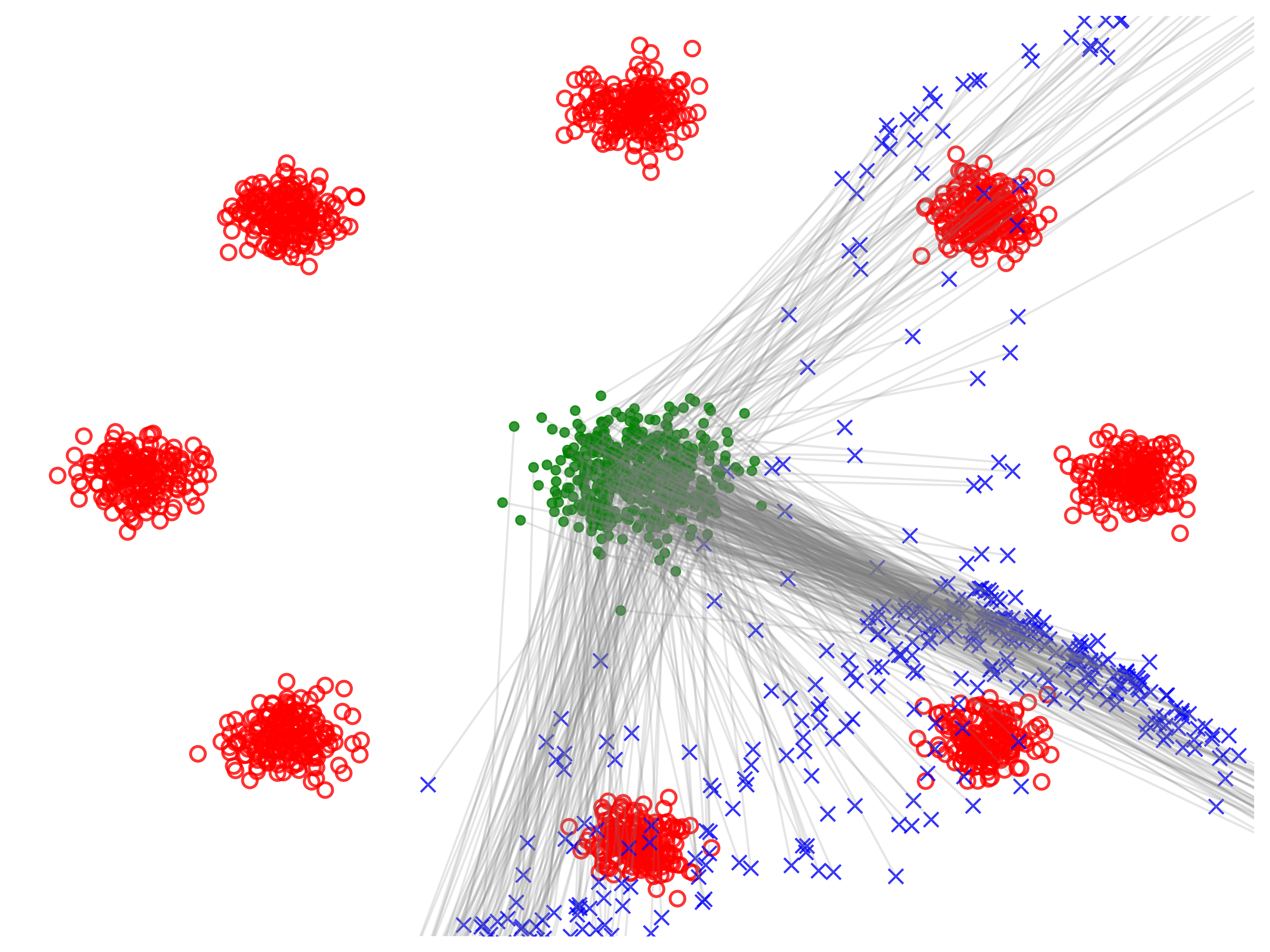}
            \caption{WGAN}
        \end{subfigure}
        % \begin{subfigure}[b]{0.19\textwidth}
        %     \includegraphics[width=\textwidth]{figures/main_mode_collapse/modecollapse_WGAN-GP.png}
        %     \caption{WGAN-GP}
        % \end{subfigure}
        \hfill
        \begin{subfigure}[b]{0.24\textwidth}
            \includegraphics[width=\textwidth]{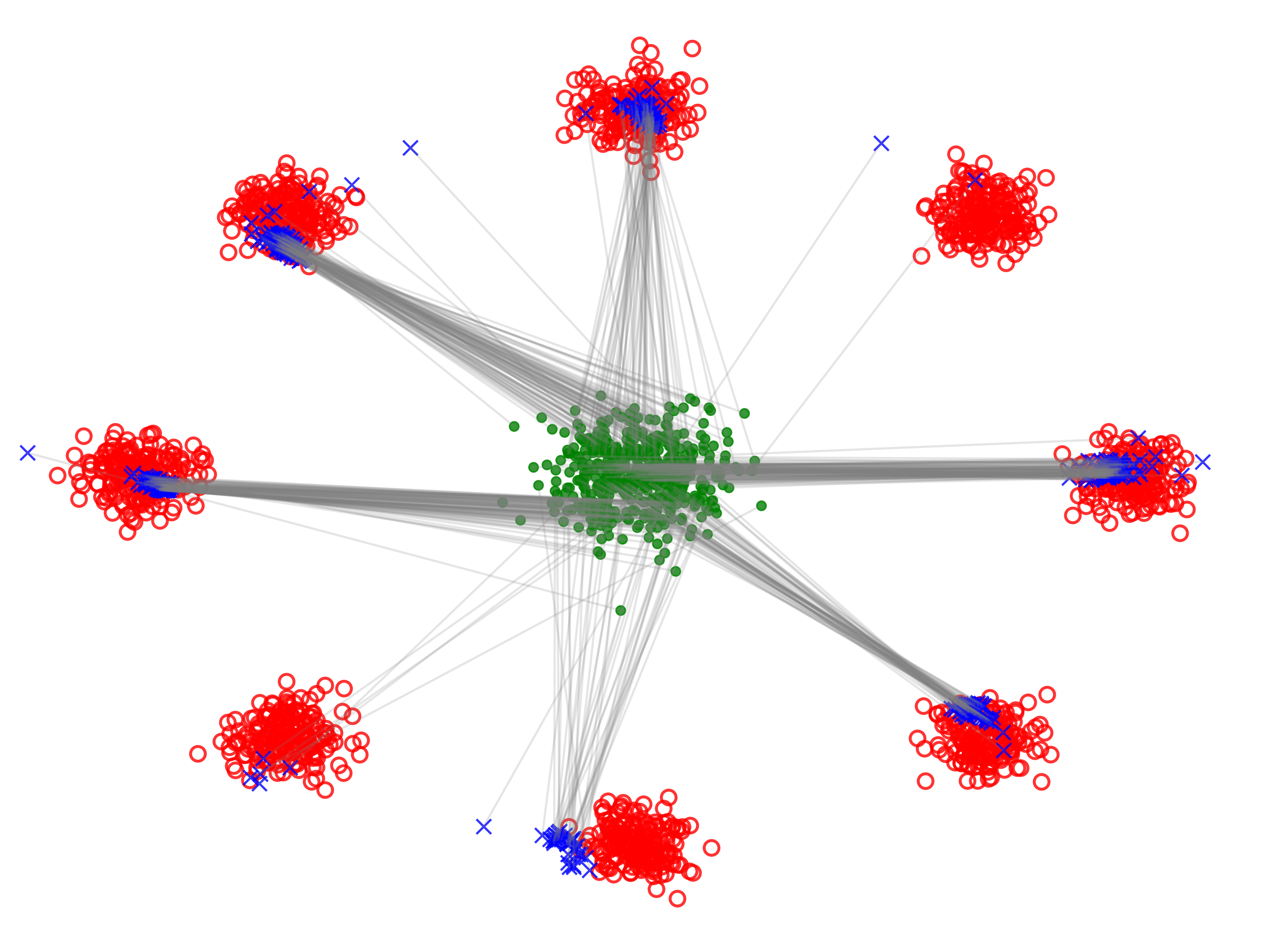}
            \caption{UOTM w/o cost}
        \end{subfigure}
        \hfill
        \begin{subfigure}[b]{0.24\textwidth}
            \includegraphics[width=\textwidth]{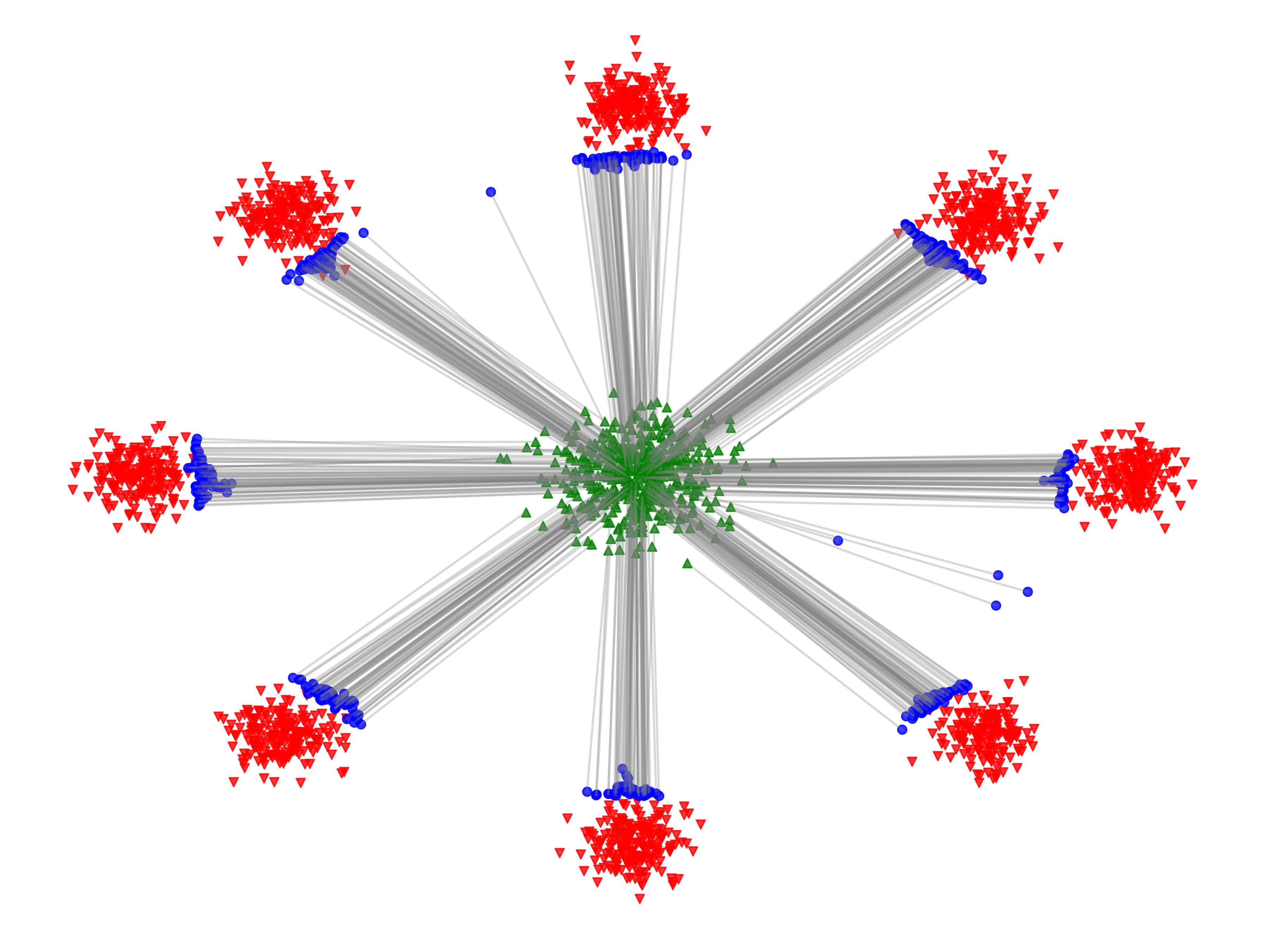}
            \caption{OTM}
        \end{subfigure}
        \hfill
        \begin{subfigure}[b]{0.24\textwidth}
            \includegraphics[width=\textwidth]{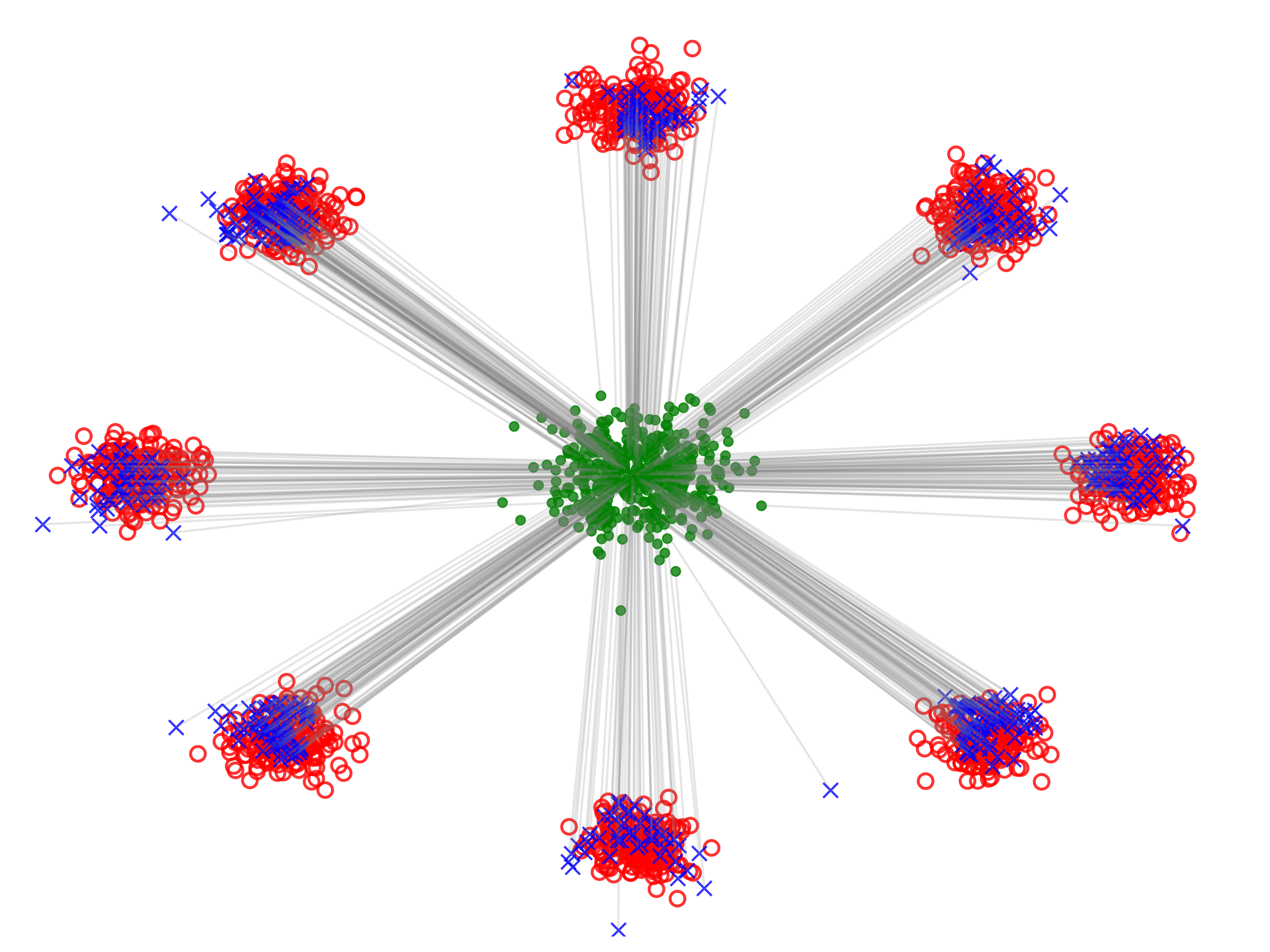}
            \caption{UOTM}
        \end{subfigure}
        \vspace{-2pt}
        \caption{
        \textbf{Visualization of Generator $T_{\theta}$.} The gray lines illustrate the generated pairs, i.e., the connecting lines between $x$ (green) and $T_{\theta}(x)$ (blue). The red dots represent the training data samples.
        %Visualization of generation result of each algorithm. Red, green, blue, and gray represent the target, input, output, and pushforward, respectively.
        }
        \label{fig:gmm-mode}
        \vspace{-10pt}
    \end{center}
\end{figure}

\subsubsection{Additional Advantage of UOTM} \label{sec:lipshitz}
\paragraph{Lipshitz Continuity of UOTM Potential}
We offer an additional explanation for the stable convergence observed in UOTM. In OT-based GANs, we approximate the generator and potential with neural networks. 
However, since neural networks can only represent continuous functions, it is crucial to verify the regularity of these target functions, such as Lipschitz continuity. 
%\jw{If the functions being parametrized are not continuous, the neural network approximation is bound to show limited success.}
If these target functions are not continuous, the neural network approximation may exhibit highly irregular behavior.
Theorem \ref{thm:lipschitz} proves that under minor assumptions on $g_1$ and $g_2$ in UOTM \citep{uotm}, \textit{there exists unique optimal potential $v^\star$ and it satisfies Lipschitz continuity.} (See Appendix \ref{appen:proofs} for the proof.)

\begin{figure}[t] 
    \captionsetup[subfigure]{aboveskip=-1pt,belowskip=-1pt}
    \begin{center}
        \begin{subfigure}[b]{0.24\textwidth}
            \includegraphics[width=\textwidth]{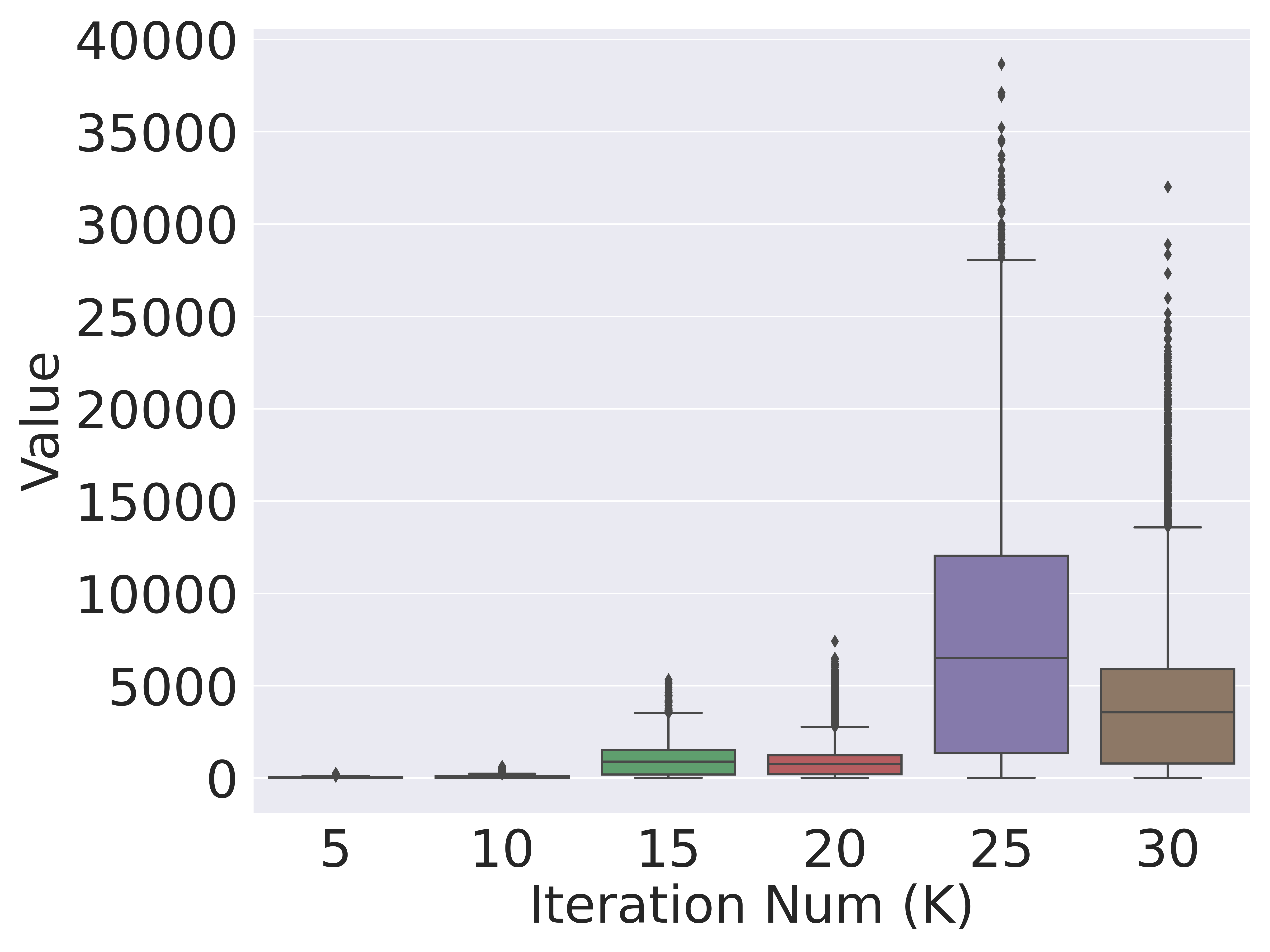}
            \caption{WGAN}
        \end{subfigure}
        \hfill
        \begin{subfigure}[b]{0.24\textwidth}
            \includegraphics[width=\textwidth]{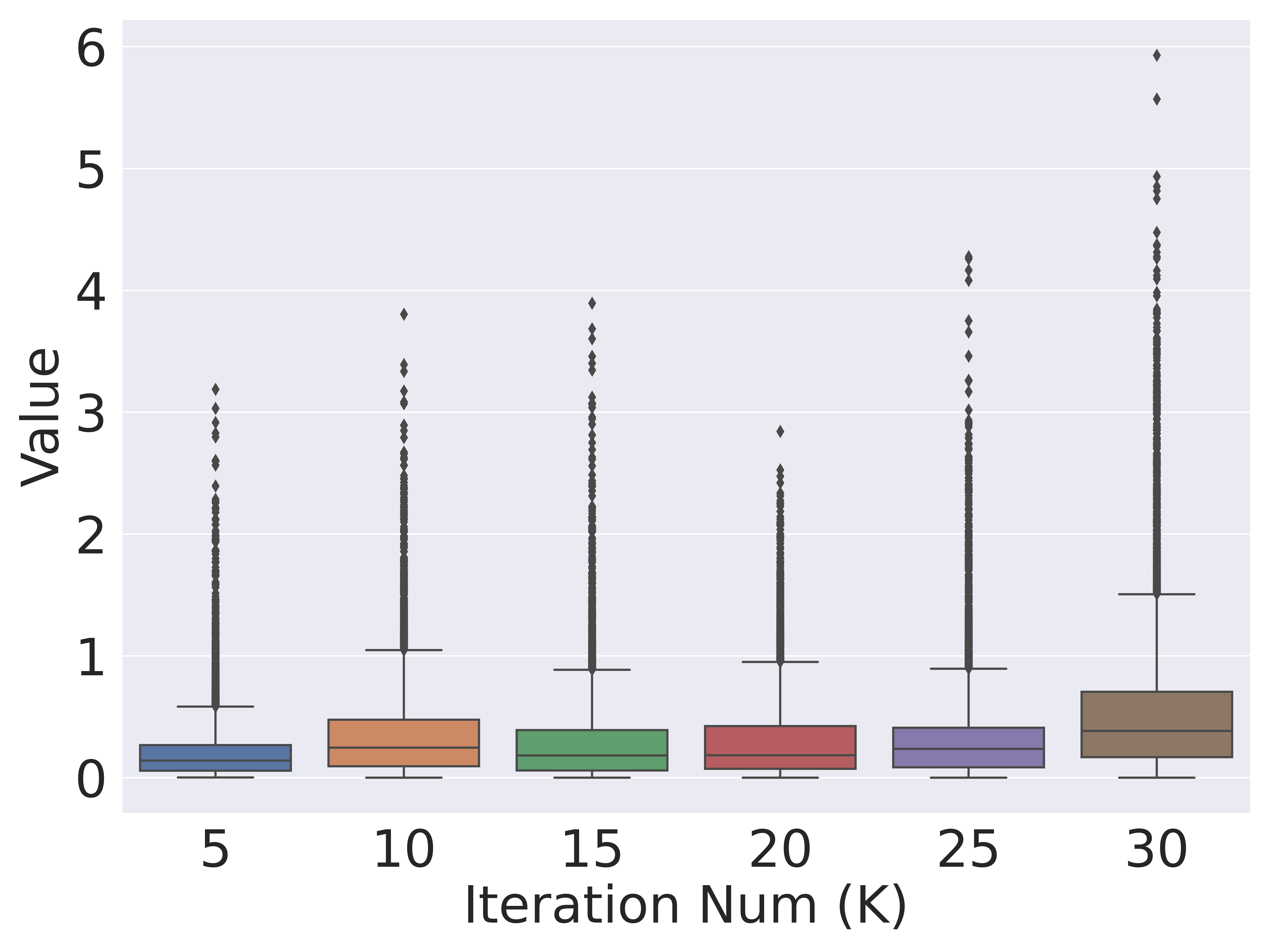}
            \caption{UOTM w/o cost}
        \end{subfigure}
        \hfill
        \begin{subfigure}[b]{0.24\textwidth}
            \includegraphics[width=\textwidth]{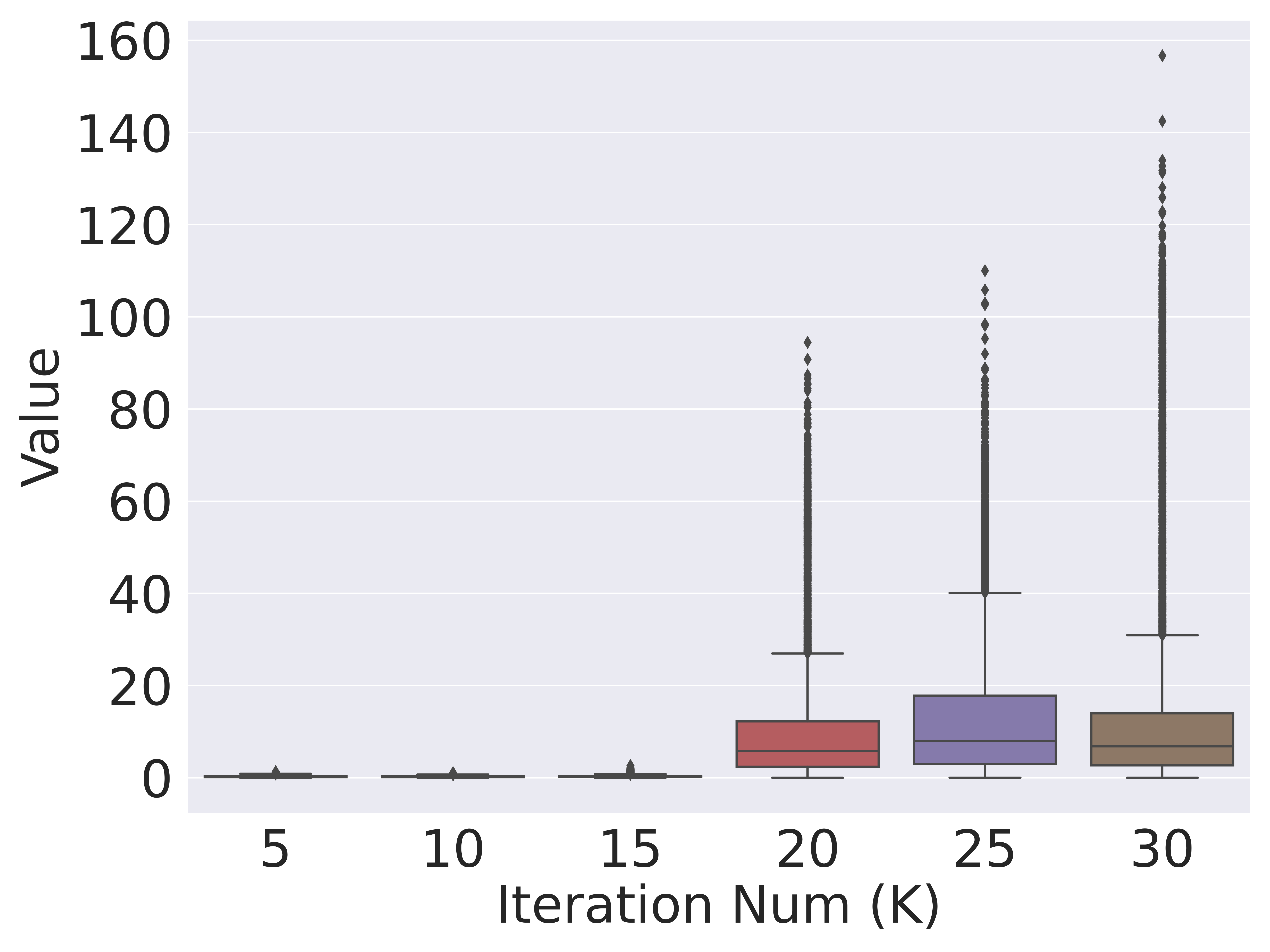}
            \caption{OTM}
        \end{subfigure}
        \hfill
        \begin{subfigure}[b]{0.24\textwidth}
            \includegraphics[width=\textwidth]{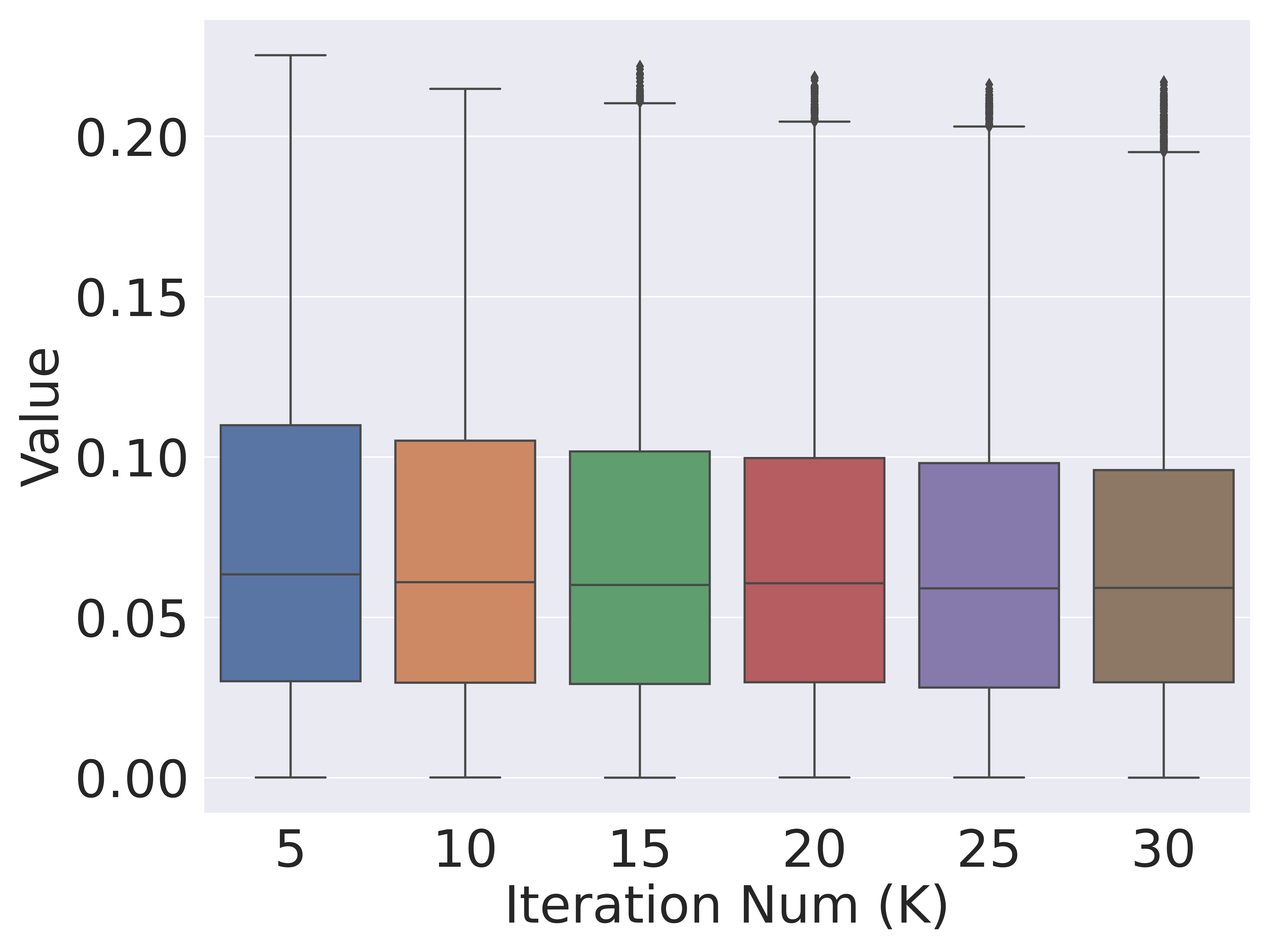}
            \caption{UOTM}
        \end{subfigure}
        \vspace{-2pt}
        \caption{
        \textbf{Distribution of the absolute value of Average Rate of Change (ARC) of potential} $\frac{|v_\phi(y) - v_\phi(x)|}{\lVert y-x \rVert}$ for every 5K iterations. Due to the equi-Lipschitz property, $|\text{ARC}|$ of UOTM potential is stable during training. This stability contributes to the stable training of UOTM.
        %Visualization of the absolute value of the average rate of change of potential, i.e. $\frac{|v_\phi(y) - v_\phi(x)|}{\lVert y-x \rVert}$ for every 6K iterations.
        }
        \label{fig:gmm-logit}
        \vspace{-15pt}
    \end{center}    
\end{figure}

\begin{theorem} \label{thm:lipschitz}
Let $g_1$ and $g_2$ be real-valued functions that are non-decreasing, bounded below, differentiable, and strictly convex.
Assuming the regularity assumptions in Appendix \ref{appen:proofs}, there exists a unique Lipschitz continuous optimal potential $v^\star$ for Eq. \ref{eq:semi-dual-uot}.
%Moreover, for the maximization objective $-\mathcal{L}_v$ of Eq. \ref{eq:semi-dual-uot},
Moreover, for the semi-dual maximization objective $-\mathcal{L}_v$ (Eq. \ref{eq:semi-dual-uot}),
\begin{equation}
    \Gamma := \left\{ v \in \mathcal{C}(\mathcal{Y}) : \mathcal{L}_v \leq 0, v^{cc}=v \right\},
\end{equation}
is equi-bounded and equi-Lipschitz.
\end{theorem}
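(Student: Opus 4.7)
The plan is to establish equi-Lipschitz and equi-boundedness of $\Gamma$, and then combine Arzelà--Ascoli with strict convexity of $\mathcal{L}_v$ to deduce existence and uniqueness via the direct method. For equi-Lipschitz, I would exploit the quadratic cost: for any $v$ with $v^{cc}=v$, write $v(y) = \inf_x\bigl[c(x,y) - v^c(x)\bigr]$ and pick a near-minimizer $x^\ast$ at $y$, yielding $v(y') - v(y) \leq c(x^\ast,y') - c(x^\ast,y) \leq 2\tau\,\mathrm{diam}(\mathcal{X}\cup\mathcal{Y})\,\|y-y'\|$. The resulting Lipschitz constant depends only on $\tau$ and the geometry, not on $v$, so every element of $\Gamma$ shares it.

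For equi-boundedness, I would first observe that any non-decreasing, differentiable, strictly convex, bounded-below $g_i:\mathbb{R}\to\mathbb{R}$ satisfies $g_i(t)\to+\infty$ as $t\to+\infty$, since strict convexity forbids $g_i'$ from vanishing on any ray, while non-decreasing keeps $g_i' \geq 0$, forcing $g_i'(t)$ to be bounded below by some positive constant for $t$ large. Now if $v(y_0)\leq -M$ at some $y_0$, the equi-Lipschitz estimate yields $-v(y)\geq M - L\,\mathrm{diam}(\mathcal{Y})$ uniformly, so $\int g_2(-v)\,d\nu \geq g_2\bigl(M - L\,\mathrm{diam}(\mathcal{Y})\bigr)\to\infty$; symmetrically, if $v(y_0)\geq M$, then $-v^c(x)\geq M - \tau\,\mathrm{diam}(\mathcal{X}\cup\mathcal{Y})^2$ uniformly in $x$, so $\int g_1(-v^c)\,d\mu\to\infty$. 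Combined with the universal lower bound $\inf g_i > -\infty$ on the remaining term, each case contradicts $\mathcal{L}_v\leq 0$ for $M$ sufficiently large, giving a uniform $\|v\|_\infty \leq C$ on $\Gamma$.

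For existence, since $v^{cc}\geq v$ with $(v^{cc})^c = v^c$ and $g_2$ is non-decreasing, $\mathcal{L}_{v^{cc}}\leq\mathcal{L}_v$, so optimization may be restricted to $c$-concave potentials without loss of generality. A minimizing sequence lies eventually in $\Gamma$; Arzelà--Ascoli then produces a uniform subsequential limit $v^\star$, which remains $c$-concave and inherits the same Lipschitz constant. Continuity of $g_i$ together with stability of the $c$-transform under uniform convergence on compact sets allows passing to the limit in $\mathcal{L}$, so $v^\star$ is a Lipschitz minimizer. For uniqueness, $v\mapsto v^c$ is concave (as an infimum of functions affine in $v$), hence $g_1\circ(-v^c)$ is convex; meanwhile $v\mapsto g_2(-v)$ is strictly convex pointwise, and the full-support assumption on $\nu$ from the regularity conditions transfers this strict convexity to the integral, making $\mathcal{L}_v$ strictly convex and its minimizer unique.

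The most delicate step will be the equi-boundedness: the condition $\mathcal{L}_v\leq 0$ is only a single scalar inequality, and converting it into a two-sided $L^\infty$ bound requires carefully combining the $+\infty$ growth of $g_i$, which itself must be extracted purely from monotonicity, strict convexity, and lower boundedness, with the equi-Lipschitz estimate so that any pointwise blow-up of $v$ or $v^c$ propagates to a uniform blow-up in the integrand and produces the desired contradiction.
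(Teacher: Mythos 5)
Your proposal is correct and follows the same overall route as the paper's proof: equi-Lipschitzness of $\Gamma$ from $c$-concavity plus Lipschitzness of the quadratic cost on a compact domain, equi-boundedness from the single inequality $\mathcal{L}_v \leq 0$, Arzel\`a--Ascoli and stability of the $c$-transform under uniform convergence to run the direct method, and strict convexity of $g_1, g_2$ for uniqueness. Two sub-steps differ in a way worth noting. For equi-boundedness, the paper exploits the normalization $g_1(0)=0$, $g_1'(0)=1$ (so $g_1 \geq \mathrm{Id}$ by convexity) to get $\sup_y v(y) \leq M - A$ in one line, whereas you derive coercivity $g_i(t)\to\infty$ from strict convexity and monotonicity and then propagate a pointwise blow-up of $v$ via the Lipschitz estimate; your argument does not need the normalization, the paper's does not need strict convexity at this stage, and both are sound (this also explains your reversed ordering of the Lipschitz and boundedness steps, which is legitimate since the Lipschitz estimate only uses $v^{cc}=v$). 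For uniqueness, the paper passes to the pair problem over $(u,v)\in\mathcal{C}_K(\mathcal{X})\times\mathcal{C}_K(\mathcal{Y})$ and invokes strict convexity there, while you argue directly that $v\mapsto g_1(-v^c)$ is convex (concavity of $v\mapsto v^c$ composed with a non-decreasing convex function) and $v\mapsto \int g_2(-v)\,\d\nu$ is strictly convex; your version is arguably cleaner, though note that strictness of the second term only distinguishes potentials that differ $\nu$-a.e., a point on which the paper's own argument is equally loose. Finally, your explicit observation that $\mathcal{L}_{v^{cc}}\leq\mathcal{L}_v$ (via $v^{cc}\geq v$, $(v^{cc})^c=v^c$, and monotonicity of $g_2$) justifies restricting the minimization to $\Gamma$, a step the paper leaves implicit, so your existence argument is if anything slightly more complete.
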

Note that the semi-dual objective $\mathcal{L}_{v}$ can be derived by assuming the optimality of $T_{\theta}$ for given $v$, i.e., $T_{\theta}(x) \in \arg\inf_{y \in \mathcal{Y}} \left[c(x, y) - v\left( y \right)\right]$.
Also, Theorem \ref{thm:lipschitz} shows that the set of valid\footnote{The optimal potential satisfies the $c$-concavity condition $v^{cc}=v$. For the quadratic cost, this is equivalent to the condition that $y \mapsto \frac{\tau}{2} |y|^2 - v(y)$ is convex and lower semi-continuous \citep{santambrogio}.}
potential candidates $\Gamma$ is equi-Lipschitz, i.e., there exists a Lipshitz constant $L_{\Gamma}$ that all $v \in \Gamma$ are $L_{\Gamma}$-Lipschitz. 
\textbf{This equi-Lipschitz continuity also explains the stable training of UOTM over OTM.} The condition $\mathcal{L}_v \leq 0$ in $\Gamma$ is not a  tough condition for the neural network $v_{\phi}$ to satisfy during training, since $\mathcal{L}_v = 0$ when $v \equiv 0$
\footnote{In practice, the potential loss $\mathcal{L}_{v}$ is always $\mathcal{L}_{v} < 0$ after only 100 iterations.}.
%Also, $v^{cc}=v$ is a natural condition that the optimal potential should satisfy because
Therefore, during training, the potential network $v_{\phi}$ would remain within the domain of $L_{\Gamma}$-Lipshitz functions. In other words, $v_{\phi}$ would not express any drastic changes for all input $y$. Furthermore, the target of training, $v^{\star}$, also stays within this set of functions. Hence, we can expect stable convergence of the potential network as training progresses.
Note that Theorem \ref{thm:lipschitz} is fundamentally different from the 1-Lipschitz constraint of WGAN. %(Eq. \ref{eq:wgan}).
WGAN involves constrained optimization over a 1-Lipschitz potential. In contrast, Theorem \ref{thm:lipschitz} states that, under unconstrained optimization, the potential networks $v_{\phi}$ with only minor conditions satisfy equi-Lipschitzness.

\paragraph{Experimental Validation}
%To check the equi-Lipschitz continuity of UOTM potential in practice, 
We tested whether this equi-Lipschitz continuity of UOTM potential $v_{\phi}$ is observed during training in practice.
%\jw{since $v_{\phi}$ may not satisfy $v_{\phi}^{cc}=v_{\phi}$}. 
In particular, we randomly choose data $a \in \mathcal{Y}$ and $b\sim \nu$ on 2D experiment and visualize the Average Rate of Change (ARC) of potential $\frac{|v_\phi(b) - v_\phi(a)|}{\lVert b - a \rVert}$. 
Fig \ref{fig:gmm-logit} shows boxplots of an ARC of ten thousand pairs of $(a,b)$ for every 10K iterations.
% the difference between the potential of real data and the potential of randomly chosen samples for WGAN, OTM, UOTM w/o cost, and UOTM on a toy dataset.
As shown in Fig \ref{fig:gmm-logit}, only UOTM shows a bounded ARC, and others, especially WGAN and OTM, diverge as the training progresses.
This result indirectly shows the potential network in UOTM mostly remains within the equi-Lipschitz set during training. 
% Furthermore, we can conjecture that the highly irregular behavior of potential networks in other models would be not \jw{harmful} to the stable training.
Furthermore, we can conjecture that the highly irregular behavior of potential networks in other models could potentially disrupt stable training processes.

\section{Towards the stable OT map} \label{sec:uotm_sd}
\begin{figure}[t]
    \centering
        \begin{minipage}{.55\linewidth}
        \centering
        % \vspace{-10pt}
        \setlength\tabcolsep{2.0pt}
        \captionof{table}{
        %Results on an unconditional generation of CIFAR-10.
        %\textbf{Results on a CIFAR-10.}
        \textbf{Image Generation on CIFAR-10.} $\dagger$ indicates the results conducted by ourselves.}
        \label{tab:compare-cifar10}
        % \vspace{-8pt}
        \scalebox{0.8}{
            \begin{tabular}{ccc}
            \toprule
            Class & Model &                FID ($\downarrow$)   \\ 
            \midrule
            \multirow{3}{*}{\textbf{GAN}} & SNGAN+DGflow \citep{ansari2020refining} &           9.62            \\
              % & AutoGAN \citep{gong2019autogan} &              12.4&    8.60         \\
              % & TransGAN \citep{jiang2021transgan} &            9.26&     9.02          \\
              & StyleGAN2 w/o ADA \citep{karras2020training} &   8.32        \\
              & StyleGAN2 w/ ADA \citep{karras2020training} &       \textbf{2.92}     \\
              % &  DDGAN (T=1)\citep{xiao2021tackling}&     16.68  &   -  \\
              &  DDGAN \citep{xiao2021tackling}&     3.75   \\
              &    RGM \citep{rgm}             &     \textbf{2.47}  \\
            \midrule
            \multirow{6}{*}{\textbf{Diffusion}}
              % &  NCSN \citep{song2019generative}&      25.3&              8.87            \\
              &  DDPM \citep{ddpm}&                 3.21  \\
              &  Score SDE (VE) \citep{scoresde} &     2.20 \\
              &  Score SDE (VP) \citep{scoresde}&       2.41  \\
              &  DDIM (50 steps) \citep{ddim}&           4.67   \\
              &  CLD \citep{dockhorn2021score} &                   2.25  \\
              % &  Subspace Diffusion \citep{jing2022subspace} &      2.17& \textbf{9.94}   \\
              &  LSGM \citep{vahdat2021score}&                \textbf{2.10}   \\
            \midrule
            % \multirow{5}{*}{\textbf{VAE\&EBM}} 
            %   & NVAE \citep{vahdat2020nvae} &              23.5&        7.18        \\
            %   & Glow \citep{kingma2018glow} &                48.9&   3.92           \\
            %   & PixelCNN \citep{van2016pixel} &             65.9&   4.60           \\
            %   & VAEBM \citep{xiao2020vaebm} &             12.2&       \textbf{8.43}           \\
            %   & Recovery EBM \citep{recovery} &     \textbf{9.58}  &  8.30 \\ 
            % \midrule
            \multirow{7}{*}{\textbf{OT-based}}
              &    WGAN \citep{wgan}                       &   55.20   \\
              &    WGAN-GP\citep{wgan-gp}            &   39.40      \\
              % &     Robust-OT \citep{robust-ot} & 21.57 & - \\
              % &    AE-OT-GAN \citep{ae-ot-gan}       &   17.10    &   7.35     \\
              %&    OTM* (Small) \citep{otm}      &   21.78    &   -    \\
              %&    OTM (Large)$^\dagger$                &   7.68    &   8.50  \\
              &    OTM $^\dagger$   \citep{otm}             &   4.15 \\
              %&    \textbf{UOTM} (Small)$^\dagger$ \citep{uotm}     &   12.86    &   7.21   \\
              &    UOTM \citep{uotm}     &   2.97  \\
              &    UOTM-SD (Cosine)$^\dagger$      &   2.57 \\
              &    UOTM-SD (Linear)$^\dagger$      &   \textbf{2.51}  \\
              &    UOTM-SD (Step)$^\dagger$      &   2.78  \\
            \bottomrule
            \end{tabular}
            }
    \end{minipage}
    \hfill
    \begin{minipage}{.44\linewidth} 
        \centering
        \captionof{table}{
        \textbf{Image Generation on CelebA-HQ.}
        }
        \label{tab:compare-celeba}
        \scalebox{0.8}{
        \begin{tabular}{ccc}
                \toprule
                Class & Model &   FID ($\downarrow$)         \\
                \midrule
                % \multirow{6}{*}{\textbf{Diffusion}}
                % & Score SDE (VP) \cite{scoresde} &    7.23  \\
                % & Probability Flow \cite{scoresde} & 128.13 \\
                % & LSGM \cite{vahdat2021score}&       7.22 \\
                % & UDM \cite{kim2021score}&         7.16   \\
                % & DDGAN \cite{xiao2021tackling} & 7.64    \\
                % & RGM \cite{rgm} &  \textbf{7.15}  \\
                % \midrule
                % \multirow{5}{*}{\textbf{GAN}} 
                % & PGGAN \cite{karras2017progressive} &   8.03    \\
                % & Adv. LAE \cite{pidhorskyi2020adversarial} &  19.2       \\
                % & VQ-GAN \cite{esser2021taming} &    10.2 \\
                % & DC-AE \cite{parmar2021dual} &  15.8   \\
                % & StyleSwin \citep{zhang2022styleswin} & \textbf{3.25} \\
                % \midrule
                % \multirow{3}{*}{\textbf{VAE}} 
                % & NVAE \cite{vahdat2020nvae} &    29.7    \\
                % & NCP-VAE \cite{aneja2021contrastive} &     24.8   \\
                % & VAEBM \cite{xiao2020vaebm}&  \textbf{20.4}        \\
                % \midrule
                \multirow{4}{*}{\textbf{OT-based}} 
                & OTM$^\dagger$ & 13.56 \\
                & UOTM (KL)  & 6.36 \\
                & UOTM (SP)$^\dagger$ & 6.31 \\
                & \textbf{UOTM-SD}$^\dagger$ & \textbf{5.99} \\
                \bottomrule
        \end{tabular}}
        
        \vspace{10pt}
        
        \centering
        \begin{subfigure}[b]{1\textwidth}
            \includegraphics[width=0.9\textwidth]{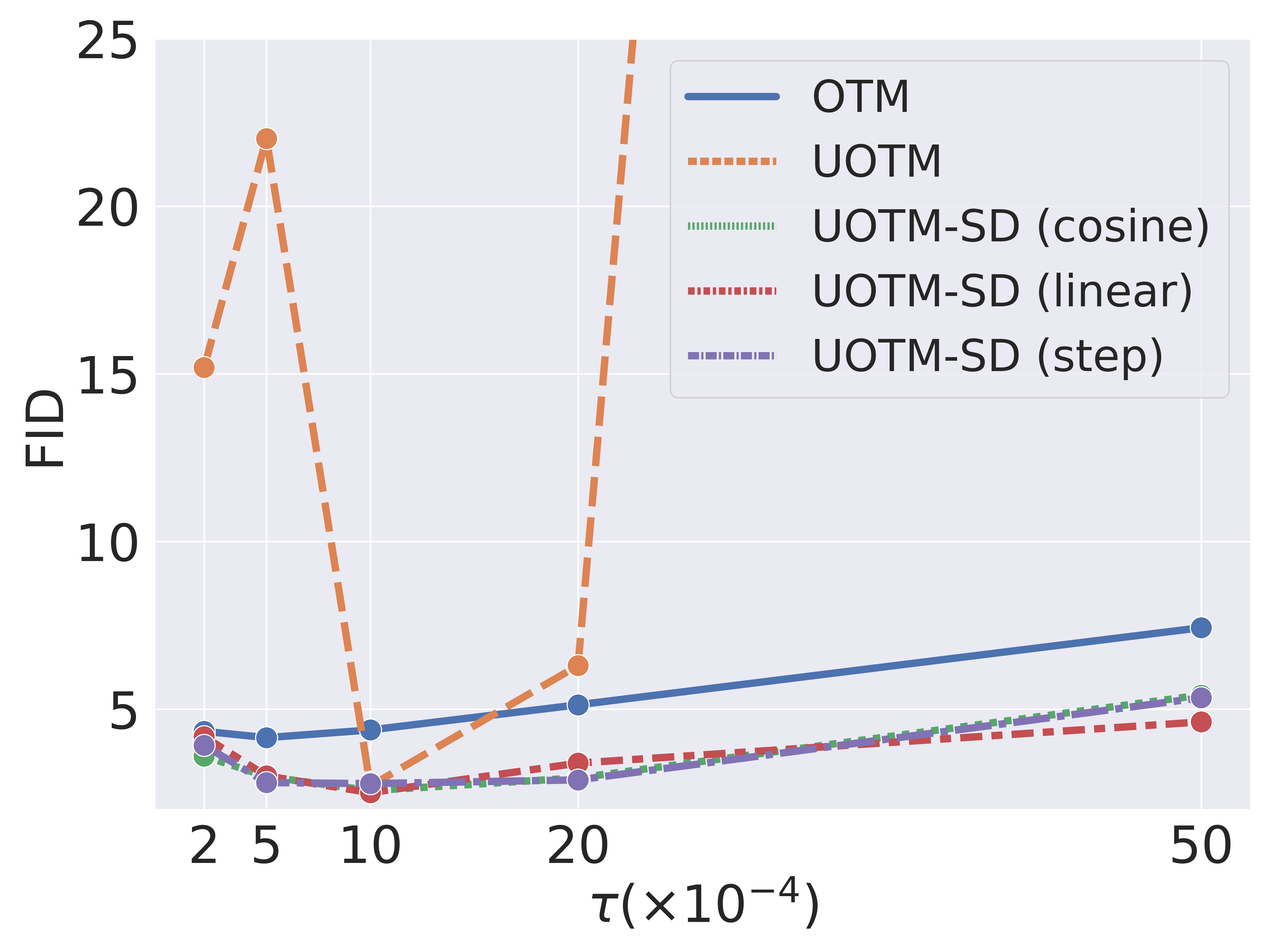}
        \end{subfigure}
        \vspace{-15pt}
        \caption{\textbf{Comparison of $\tau$-robustness.}}
        \label{fig:uotm_sd_tau_robust}
    \end{minipage}
    \hfill
    % \vspace{-10pt}
\end{figure}

In this section, we suggest a straightforward yet novel method to enhance the $\tau$-robustness of UOTM, while improving the best-case performance. Intuitively, our idea is to gradually adjust the transport map in the UOT problem towards the transport map in the OT problem. Note that the OT problem of OTM assumes a hard constraint on marginal matching.

\paragraph{Motivation}
The analysis in Sec \ref{sec:analyzing} showed that the semi-dual form of the UOT problem, i.e., UOTM, provides several advantages over other OT-based GANs. However, Fig \ref{fig:image-mode} showed that UOTM is $\tau$-sensitive. 
In this respect, \citet{uotm} proved that the upper bound of marginal discrepancies for the optimal $\pi^{\star}$ in the UOT problem (Eq. \ref{eq:uot}) is linearly proportional to $\tau$:
\begin{equation} \label{eq:uot_marginal_bound}
    D_{\Psi_1} (\pi_{0}^{\star}|\mu) + D_{\Psi_2} (\pi_{1}^{\star}|\nu) \leq \tau \mathcal{W}_2^2(\mu,\nu) \quad \mathrm{ for } \quad c(x,y)=\tau \lVert x-y \rVert_2^2.
\end{equation}
When the divergence term is minor (\textit{Large $\tau$}), the cost term prevents the mode collapse problem (Sec \ref{sec:effectOftau}), but the model fails to match the target distribution, generating noisy samples (Eq. \ref{eq:uot_marginal_bound}). 
Conversely, when the divergence term is dominant (\textit{Small $\tau$}), the model should theoretically exhibit improved target distribution matching (Eq. \ref{eq:uot_marginal_bound}, Theorem \ref{thm:convergence}). However, the mode collapse problem disturbs the optimization process in practice (Sec \ref{sec:effectOftau}).
\textbf{In this regard, we introduce a method that can leverage the advantages of both regimes: preventing mode collapse with minor divergence and improving distribution matching with dominant divergence. }
Intuitively, we start training with a smaller divergence term to mitigate mode collapse. Subsequently, as training progresses, we gradually increase the influence of the divergence term to achieve better data distribution matching.

\paragraph{Method}
Formally, we consider the following \textbf{\textit{$\alpha$-scaled UOT problem ($\alpha$-UOT) $C_{ub}^{\alpha}(\mu, \nu)$}} for $\alpha \geq 0$ (Eq. \ref{eq:uot_scale}). Note that this $\alpha$-UOT problem recovers the OT problem $C(\mu, \nu)$ when $\alpha \rightarrow \infty$ if $\mu, \nu$ have equal mass \citep{minibatch-ot}. 
\begin{equation} \label{eq:uot_scale}
    C_{ub}^{\alpha}(\mu, \nu) = \inf_{\pi^{\alpha} \in \mathcal{M}_+(\mathcal{X}\times\mathcal{Y})} \left[ \int_{\mathcal{X}\times \mathcal{Y}} c(x,y) \d\pi^{\alpha}(x,y) + \alpha D_{\Psi_1}(\pi_0^{\alpha}|\mu) + \alpha D_{\Psi_2}(\pi_1^{\alpha}|\nu) \right].
\end{equation} 
Motivated by this fact, we suggest a monotone-increasing scheduling scheme during training for $\alpha$ to achieve the stable convergence of the UOT transport map $\pi^{\alpha}$ towards the OT transport map.
%\jw{This scheme offers a way to address these marginal discrepancies of UOT while exploiting the favorable properties of UOTM.}
Because $\alpha D_{\Psi{i}}= D_{\alpha \Psi{i}}$ and $(\alpha \Psi_{i})^{*}(x) = \alpha \Psi_{i}^{*}(x/\alpha)$, the learning objective of $\alpha$-scaled UOTM are given as follows:
\begin{equation} \label{eq:uotm_scale}
\mathcal{L}_{v_{\phi}, T_{\theta}}^{\alpha} = \inf_{v_\phi} \left[ \int_{\mathcal{X}} \alpha \Psi^*_1 \left(-\frac{1}{\alpha} \inf_{T_\theta} \left[ c(x, T_\theta (x)) - v\left(T_\theta (x)\right)  \right]\right) \d\mu(x) + \int_{\mathcal{Y}} \alpha \Psi_2^* \left(-\frac{1}{\alpha} v(y)\right)  \right].
\end{equation}
Note that, given our assumption that $\Psi_{i}^{*}$ is $C^{1}$, $(\alpha \Psi_{i})^{*}$ uniformly converges to $\text{Id}$ for every compact domain, since $\Psi_{i}^{*}(0)=0, (\Psi_{i}^{*})'(0)=1$. Therefore, this $\alpha$-scheduling can be intuitively understood as a gradual process of straightening the strictly convex $\Psi_{i}^{*}$ function towards the identity function $\text{Id}$, so that $\mathcal{L}_{v_{\phi}, T_{\theta}}^{\alpha}$ converges to OTM (Tab \ref{tab:unified}). We refer to this UOTM with $\alpha$-scheduling as \textit{\textbf{UOTM with Scheduled Divergence (UOTM-SD)}}.

\paragraph{Convergence} Theorem \ref{thm:convergence} proves that the optimal transport plan of the $\alpha$-scaled UOT problem converges to that of the OT problem as $\alpha \rightarrow \infty$. However, one limitation of this theorem is that it shows the convergence of transport plan $\pi$, but does not address the convergence of transport map $T$.
\begin{theorem} \label{thm:convergence}
Assume the entropy functions $\Psi_1, \Psi_2$ are strictly convex and finite on $(0, \infty)$. Then, the optimal transport plan $\pi^{\alpha, \star}$ of the $\alpha$-scaled UOT problem $C_{ub}^{\alpha}(\mu, \nu)$ (Eq. \ref{eq:uot}) weakly converges to the optimal transport plan $\pi^{\star}$ of the OT problem $C(\mu, \nu)$ (Eq. \ref{eq:kantorovich}) as $\alpha$ goes to infinity.
\end{theorem}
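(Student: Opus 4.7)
The plan is a standard $\Gamma$-limit / variational argument: as $\alpha \to \infty$, the divergence penalties in Eq. \ref{eq:uot_scale} force the marginals of the $\alpha$-UOT optimizer back onto $\mu$ and $\nu$, at which point the $\alpha$-UOT functional reduces to the Kantorovich cost, and weak compactness plus lower semi-continuity then identifies the limit as an OT plan. First, I would obtain a uniform-in-$\alpha$ upper bound by testing Eq. \ref{eq:uot_scale} with any classical OT plan $\pi^{\star} \in \Pi(\mu,\nu)$. Since $\pi^{\star}_0 = \mu$, $\pi^{\star}_1 = \nu$, and $\Psi_i(1)=0$ (the standard normalization consistent with the earlier assumption $\Psi_i^{*}(0)=0$, $(\Psi_i^{*})'(0)=1$), both divergence terms vanish on $\pi^{\star}$, giving $C_{ub}^{\alpha}(\mu,\nu) \le \int c \, d\pi^{\star} = C(\mu,\nu)$ for all $\alpha > 0$; finiteness of the right side follows from continuity of $c$ on the compact $\mathcal{X} \times \mathcal{Y}$. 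In particular $\alpha \, D_{\Psi_i}(\pi^{\alpha,\star}_i | \mu_i) \le C(\mu,\nu)$, so both divergences decay as $O(1/\alpha)$ and the total masses $\pi^{\alpha,\star}(\mathcal{X} \times \mathcal{Y})$ remain uniformly bounded.

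Second, I would extract a weakly convergent subsequence. Bounded families of positive Radon measures on the compact space $\mathcal{X} \times \mathcal{Y}$ are weak-$\ast$ relatively compact, so some subsubsequence satisfies $\pi^{\alpha_k,\star} \rightharpoonup \bar{\pi}$. Continuity of the marginal projections then yields $\pi^{\alpha_k,\star}_i \rightharpoonup \bar{\pi}_i$. Because $\eta \mapsto D_{\Psi_i}(\eta | \mu_i)$ is lower semi-continuous under weak convergence, $D_{\Psi_i}(\bar{\pi}_i | \mu_i) \le \liminf_k D_{\Psi_i}(\pi^{\alpha_k,\star}_i | \mu_i) = 0$. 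The strict convexity of $\Psi_i$ together with $\Psi_i(1)=0$ means $\Psi_i$ vanishes only at $1$, so $D_{\Psi_i}(\bar{\pi}_i | \mu_i) = 0$ forces $\bar{\pi}_i = \mu_i$, and hence $\bar{\pi} \in \Pi(\mu,\nu)$.

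Third, I would identify $\bar{\pi}$ as an OT optimizer. Since $c$ is bounded and continuous on $\mathcal{X} \times \mathcal{Y}$, $\pi \mapsto \int c \, d\pi$ is continuous under weak convergence on the uniformly mass-bounded family, so $\int c \, d\bar{\pi} = \lim_k \int c \, d\pi^{\alpha_k,\star} \le \liminf_k C_{ub}^{\alpha_k}(\mu,\nu) \le C(\mu,\nu)$. Feasibility $\bar{\pi} \in \Pi(\mu,\nu)$ gives the reverse inequality, so $\bar{\pi}$ attains the Kantorovich optimum. Applying this to every subsequence of $\{\pi^{\alpha,\star}\}$ and invoking uniqueness of the OT plan (for the quadratic cost $c(x,y) = \tau \|x-y\|_2^2$ and absolutely continuous $\mu$, this is Brenier's theorem, under the regularity assumptions in Appendix \ref{appen:proofs}) promotes subsequential convergence to full-sequence convergence $\pi^{\alpha,\star} \rightharpoonup \pi^{\star}$; without uniqueness the statement is naturally read as Kuratowski convergence to the set of OT optima.

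The main obstacle is the link from vanishing divergence to equality of measures: one must show that $D_{\Psi_i}(\bar{\pi}_i | \mu_i) = 0$ genuinely implies $\bar{\pi}_i = \mu_i$, controlling both the absolutely continuous and singular parts of $\bar{\pi}_i$ with respect to $\mu_i$. Strict convexity of $\Psi_i$ pins down the density, while the recession slope $\Psi_i'(\infty)$ (determined by the growth assumed on $\Psi_i^{*}$) controls any singular contribution; these are exactly the ingredients needed so that the lower-semi-continuous extension of $D_{\Psi_i}$ to all positive measures separates $\mu_i$ from every other measure. All remaining steps---the uniform bound, weak-$\ast$ compactness, marginal continuity, and lower-semi-continuity of the Csisz\'ar divergence---are standard in the unbalanced optimal transport literature.
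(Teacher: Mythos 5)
Your proposal is correct, but it follows a genuinely different route from the paper. The paper's proof is citation-based: it rewrites the $\alpha$-scaled problem as a standard UOT problem with cost intensity $\tau/\alpha$, invokes the UOTM bound $D_{\Psi_1}(\pi_0^{\alpha,\star}|\mu)+D_{\Psi_2}(\pi_1^{\alpha,\star}|\nu)\le \frac{\tau}{\alpha}\mathcal{W}_2^2(\mu,\nu)$, converts vanishing \csiszar{} divergence into total-variation (hence weak) convergence of the marginals via $f$-divergence inequalities (this is where strict convexity enters for the paper), uses the UOTM fact that $\pi^{\alpha,\star}$ is itself an optimal plan between its own marginals, and concludes with Villani's stability theorem (Theorem 5.20) for optimal plans under weakly converging marginals. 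You instead give a self-contained $\Gamma$-convergence-style argument: the uniform bound $C_{ub}^{\alpha}(\mu,\nu)\le C(\mu,\nu)$ obtained by testing with an OT plan (which re-derives the $O(1/\alpha)$ decay of the divergences rather than citing it), weak-$\ast$ compactness of the mass-bounded family on the compact product space, lower semi-continuity of the divergence plus strict convexity of $\Psi_i$ (controlling both the density and, via $\Psi_i'(\infty)>0$, the singular part) to force the limit marginals onto $\mu$ and $\nu$, and continuity of $\pi\mapsto\int c\,d\pi$ to identify the limit as a Kantorovich optimizer. What each buys: your argument avoids the nontrivial imported facts (the marginal bound, the ``UOT optimizer is OT-optimal between its own marginals'' lemma, divergence-to-TV inequalities, and Villani's stability theorem), while the paper's route is shorter given that machinery and yields a quantitative $\mathcal{W}_2^2$-rate for the marginal discrepancies. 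You are also more careful than the paper on the final step: both arguments only produce subsequential limits a priori, and you explicitly note that uniqueness of the OT plan (Brenier, under the regularity assumptions) is what upgrades this to convergence of the full family, whereas the paper leaves this implicit in speaking of ``the'' optimal plan.
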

\paragraph{$\alpha$-schedule Settings}
We evaluated three scheduling schemes for $\alpha$. For the schedule parameters $\alpha_{max} \geq \alpha_{min} > 0$, the assessed scheduling schemes are as follows:
\begin{itemize}[topsep=-1pt, itemsep=0pt]
    \item \textbf{Cosine} : Apply Cosine scheduling from $\alpha_{min}$ to $\alpha_{max}$.
    \item \textbf{Linear} : Apply Linear scheduling from $\alpha_{min}$ to $\alpha_{max}$.
    \item \textbf{Step} : At each $t_{iter}$ iterations, multiply $\alpha$ by 2 until $\alpha = \alpha_{max}$.
    %Multiply $\alpha$ by 2 every $t_{step}$ iterations  \citep{StepLR}
\end{itemize}
Note that the standard cosine scheduling technique \citep{cosineAnneal} typically works by decreasing the target parameters. In this case, we multiplied the scheduling term by $(-1)$.

\paragraph{Generation Results}
% CIFAR-10 / CelebA-HQ
%We tested our model on CIFAR-10 \citep{cifar10} ($32 \times 32$) dataset. 
We tested our UOTM-SD model on CIFAR-10 ($32 \times 32$) and CelebA-HQ ($256 \times 256$) datasets. For quantitative evaluation, we adopted FID \citep{fid} score. Tab \ref{tab:compare-cifar10} shows that our UOTM-SD improves UOTM \citep{uotm} across all three scheduling schemes. 
Our UOTM-SD achieves a FID of 2.51 under the best setting of linear scheduling with $(\alpha_{min}, \alpha_{max}) = (1/5, 5)$ and $\tau=1 \times 10^{-3}$, surpassing all other OT-based methods.
(See Appendix \ref{appen:images} for the qualitative comparison of generated samples.) We tested UOTM-SD with linear scheduling, which performed best on CIFAR-10, on CelebA-HQ. Our UOTM-SD outperformed the previous best-performing OT-based model (UOTM) (Tab \ref{tab:compare-celeba}).
We added a more extensive comparison with other generative models in Appendix \ref{appen:full-table}. (Due to page constraints, we included the ablation study regarding schedule intensity, i.e., $(\alpha_{min}, \alpha_{max})$, and the schedule itself, i.e., $\alpha_{min} = \alpha_{max}$, in Appendix \ref{appen:schedule}.)

\paragraph{$\tau$ Robustness}

We assessed the robustness of our model regarding the intensity parameter $\tau$ of the cost function $c(x,y)$. Specifically, we tested whether our UOTM-SD resolves the $\tau$-sensitivity of UOTM, observed in Fig \ref{fig:abl_tau}. Fig \ref{fig:uotm_sd_tau_robust} displays FID scores of UOTM-SD, UOTM, and OTM for various values of $\tau$. Note that we employed harsh conditions for $\tau$-robustness, where $\tau_{max} / \tau_{min}=25$. We adopted  $\alpha_{min} = 1/5$ and $\alpha_{max} = 5$ for each UOTM-SD. All three versions of UOTM-SD outperform UOTM and OTM under the same $\tau$.  While UOTM shows large variation of FID scores depending on $\tau$, ranging from $2.71$ to $218.02$, UOTM-SD provides much more stable results. (See Appendix \ref{appen:full-table} for table results.) 

\section{Conclusion}
In this paper, we integrated and analyzed various OT-based GANs. Our analysis unveiled that establishing $g_1$ and $g_2$ as lower-bounded, non-decreasing, and strictly convex functions significantly enhances training stability. Moreover, the cost function $c$ contributes to alleviating mode collapse and mixture problems. Nevertheless, UOTM, which leverages these two factors, exhibits $\tau$-sensitivity. In this regard, we suggested a novel approach that addresses this $\tau$-sensitivity of UOTM while achieving improved best-case results. 
However, there are some limitations to our work. Firstly, we fixed $g_{3}=\text{Id}$ during our analysis. Also, our convergence theorem for $\alpha$-scaled UOT guarantees the convergence of the transport plan, but not the transport map. Exploring these issues would be promising future research.

% In this paper, we integrated and analyzed strategies that directly parameterize potential functions and generators from the dual or semi-dual forms of OT problems. 
% We observed that establishing $g_1$ and $g_2$ as lower-bounded, non-decreasing, and strictly convex functions within a confined domain fosters stable convergence.
% Furthermore, the cost function $c$ contributes to averting mode collapse. 
% However, these settings can be sensitive to the value of $\tau$.
% Particularly, an excessively large $\tau$ could lead the model to disregard distributional matching. 
% To address these challenges, we introduced a novel approach that schedules the divergence term incrementally from low to high values.
% Ultimately, our efforts yielded results that approach state-of-the-art performance across diverse large-scale benchmark datasets.

\section*{Acknowledgements}
This work was supported by KIAS Individual Grant [AP087501] via the Center for AI and Natural Sciences at Korea Institute for Advanced Study, the NRF grant[2021R1A2C3010887], and MSIT/ IITP[NO.2021-0-01343, Artificial Intelligence Graduate School Program(SNU)].
%\clearpage

\section*{Reproducibility}
To ensure the reproducibility of this work, we submitted the source code in the supplementary materials. The implementation details of all experiments are clarified in Appendix \ref{appen:B}. Moreover, the assumptions and complete proofs for Theorem \ref{thm:lipschitz} and \ref{thm:convergence} are included in Appendix \ref{appen:proofs}.

\bibliography{iclr2024_conference}
\bibliographystyle{iclr2024_conference}

\newpage
% \input{./checklist}
% \newpage
\appendix

\section{Proofs} \label{appen:proofs}
\paragraph{Notations and Assumptions}
Let $\mathcal{X}$ and $\mathcal{Y}$ be compact complete metric spaces which are convex subsets of $\mathbb{R}^d$, and $\mu$, $\nu$ be positive Radon measures of the mass 1.
For a measurable map $T:\mathcal{X} \rightarrow \mathcal{Y}$, $T_{\#}\mu$ denotes the associated pushforward distribution of $\mu$.
$c(x,y)$ refers to the transport cost function defined on $\mathcal{X}\times\mathcal{Y}$.
We assume $\mathcal{X}, \mathcal{Y} \subset \mathbb{R}^d$ and the quadratic cost $c(x,y)=\tau \lVert x-y \rVert_2^2$, where $\tau$ is a given positive constant.
Let $\Psi_1$ and $\Psi_2$ be an entropy function, i.e. $\Psi_i:\mathbb{R} \rightarrow [0,\infty]$ is a convex, lower-semi continuous, non-negative function such that $\Psi_i(1)=0$, and $\Psi_i(x)=\infty$ for $x<0$.
Let $g_1:=\Psi^*$ and $g_2:=\Psi^*$ be a convex, differentiable, non-decreasing function defined on $\mathbb{R}$. We assume that $g_1(0)=g_2(0)=0$ and $g_1'(0)=g_2'(0)=1$. 
\paragraph{\csiszar{} Divergence}
Let $\Psi$ be an entropy function.
The \csiszar{} divergence induced by $\Psi$ (or $\Psi$-divergence) between $\mu$ and $\nu$ is defined as follows:
\begin{equation}
    D_{\Psi}\left( \mu| \nu \right) = \int_{\mathcal{Y}} \Psi \left( \frac{d\mu}{d\nu} \right) \d\nu + \Psi'(\infty) \mu^\perp(\nu),
\end{equation}
where $\mu = \frac{d\mu}{d\nu} \nu + \mu^\perp(\nu)$ is a Radon-Nikodym decomposition of $\mu$ with respect to $\nu$.

\begin{theorem} \label{thm:lipschitz-proof}
Let $g_1$ and $g_2$ be real-valued functions that are non-decreasing, bounded below, differentiable, and strictly convex.
Assuming the regularity assumptions in Appendix \ref{appen:proofs}, there exists a unique Lipschitz continuous optimal potential $v^\star$ for Eq. \ref{eq:semi-dual-uot}.
Moreover, for the maximization objective $-\mathcal{L}_v$ of Eq. \ref{eq:semi-dual-uot},
\begin{equation}
    \Gamma := \left\{ v \in \mathcal{C}(\mathcal{Y}) : \mathcal{L}_v \leq 0, v^{cc}=v \right\},
\end{equation}
is equi-bounded and equi-Lipschitz.
\end{theorem}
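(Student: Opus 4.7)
The plan is to first prove equi-Lipschitzness from the $c$-concavity constraint alone, then leverage it together with $\mathcal{L}_v \leq 0$ to obtain equi-boundedness, and finally combine these with compactness and strict convexity to derive existence and uniqueness of $v^\star$.

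First I would tackle equi-Lipschitzness. For $c(x,y) = \tau\|x-y\|_2^2$ on compact $\mathcal{X}, \mathcal{Y} \subset \mathbb{R}^d$, each map $x \mapsto c(x,y)$ is $L$-Lipschitz with $L := 2\tau\,\mathrm{diam}(\mathcal{X} \cup \mathcal{Y})$, uniformly in $y$. Since the pointwise infimum of $L$-Lipschitz functions is $L$-Lipschitz, $v^c$ is $L$-Lipschitz on $\mathcal{X}$, and then $v = v^{cc}$ is $L$-Lipschitz on $\mathcal{Y}$, with the same universal constant for every $v \in \Gamma$.

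Next, for equi-boundedness, set $m := \min(\inf g_1, \inf g_2) > -\infty$. The constraint $\mathcal{L}_v \leq 0$ immediately gives $\int g_1(-v^c)\,d\mu \leq -m$ and $\int g_2(-v)\,d\nu \leq -m$. If $v(y_*) \leq -M$ at some $y_* \in \mathcal{Y}$, the equi-Lipschitz bound forces $-v(y) \geq M - L\,\mathrm{diam}(\mathcal{Y})$ everywhere on $\mathcal{Y}$; since $g_2$ is convex with $g_2(0)=0$ and $g_2'(0)=1$, convexity yields $g_2(t) \geq t$ for all $t \geq 0$, so $\int g_2(-v)\,d\nu \to \infty$ as $M \to \infty$, a contradiction. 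Hence $v \geq -K$ uniformly on $\Gamma$, and symmetrically $v^c \geq -K'$. For the upper bounds, $c$-concavity gives $v(y) = v^{cc}(y) \leq c(x,y) - v^c(x) \leq \tau\,\mathrm{diam}(\mathcal{X}\cup\mathcal{Y})^2 + K'$ for any $x$, and analogously for $v^c$, completing equi-boundedness.

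Existence then follows by Arzel\`a--Ascoli: $\Gamma$ is relatively compact in $\mathcal{C}(\mathcal{Y})$, and a maximizing sequence for $-\mathcal{L}_v$ admits a uniformly convergent subsequence. Upper semicontinuity of $v \mapsto -\mathcal{L}_v$ under uniform convergence (coming from continuity of $g_1, g_2$, uniform convergence $v_n^c \to v^c$ on the compact $\mathcal{X}$, and dominated convergence) delivers an optimal $v^\star$. Uniqueness exploits strict convexity: $v \mapsto v^c$ is concave in $v$ so the $g_1$ term in $\mathcal{L}_v$ is convex in $v$, while strict convexity of $g_2$ makes $v \mapsto \int g_2(-v)\,d\nu$ strictly convex. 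Any two optimizers must then coincide $\nu$-almost everywhere, and the $c$-concavity constraint $v = v^{cc}$ propagates the equality throughout $\mathcal{Y}$. The main obstacle I anticipate is the equi-boundedness step, specifically extracting a quantitative growth rate of $g_i$ at $+\infty$ from the assumed properties; the normalization together with convexity should yield $g_i(t) \geq t$ which suffices, but this must be stated carefully. A secondary subtlety will be propagating $\nu$-a.e. uniqueness on $\mathrm{supp}(\nu)$ to pointwise uniqueness on all of $\mathcal{Y}$ via the $c$-concavity identity.
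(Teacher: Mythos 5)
Your proposal is correct in its main steps and follows the same overall skeleton as the paper's proof --- $c$-concavity plus Lipschitzness of the quadratic cost for equi-Lipschitzness, the normalization $g_i(0)=0$, $g_i'(0)=1$ (hence $g_i \ge \mathrm{Id}$) together with the lower bounds of $g_i$ and $\mathcal{L}_v \le 0$ for equi-boundedness, Arzel\`a--Ascoli for existence, and strict convexity for uniqueness --- with two local variations. For equi-boundedness, the paper bounds $v$ from above directly, writing $0 \ge \mathcal{L}(v) \ge -M + \sup_y v(y) + A$ using $g_1 \ge \mathrm{Id}$ and $c \le M$, then transfers the bound to $v^c$ by symmetry; you instead establish equi-Lipschitzness first (which indeed needs only $v^{cc}=v$ and the Lipschitz cost, essentially the paper's two-point argument phrased as an infimum of uniformly Lipschitz functions) and get the lower bound on $v$ by contradiction via $g_2(t)\ge t$, then the upper bound from $v=v^{cc}\le c - v^c$. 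Both routes use the same ingredients. For uniqueness, the paper passes to a pair problem over $\mathcal{C}_K(\mathcal{X})\times\mathcal{C}_K(\mathcal{Y})$ and invokes strict convexity there, while you argue directly that $\mathcal{L}_v$ is convex in $v$ (the $g_1$ term being a non-decreasing convex function of the convex map $v \mapsto -v^c$) and strictly convex through the $g_2$ term; your version is arguably cleaner, since the paper's pair problem as written also implicitly needs the feasibility constraint (e.g.\ $u(x)+v(y)\le c(x,y)$) to be meaningful.

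The one step you should not wave through is the final ``propagation'' claim: strict convexity only yields $v_1=v_2$ $\nu$-a.e.\ (equivalently on $\mathrm{supp}(\nu)$, by continuity), and $v^{cc}=v$ does not by itself transfer this to all of $\mathcal{Y}$ when $\mathrm{supp}(\nu)\neq\mathcal{Y}$, because $v^c$ --- and hence $v^{cc}$ --- depends on the values of $v$ off $\mathrm{supp}(\nu)$ as well; equality of the $g_1$ terms additionally gives $v_1^c=v_2^c$ $\mu$-a.e., but that still does not force $v_1^{cc}=v_2^{cc}$ on $\mathcal{Y}$ unless $\mu$ has full support. Note, however, that the paper's own strict-convexity argument has exactly the same a.e.-versus-everywhere imprecision, so this is a shared gap rather than a defect specific to your route; closing it requires either a full-support assumption or stating uniqueness as equality on the supports. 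Finally, in the existence step, remember to verify that the uniform limit of your maximizing sequence stays in $\Gamma$, i.e.\ remains $c$-concave with $\mathcal{L}_v\le 0$; for the quadratic cost this holds because convexity of $y\mapsto\tau\lVert y\rVert_2^2 - v_n(y)$ passes to the uniform limit, as the paper notes, and the $\mathcal{L}_v\le 0$ condition survives by dominated convergence.
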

\begin{proof}
Let 
\begin{equation}
    \mathcal{L}(v) = \int_\mathcal{X} g_1\left(-v^c(x))\right) \d\mu(x) + \int_\mathcal{Y} g_2(-v(y)) \d\nu (y). \label{eq:semi-dual-uot-again}
\end{equation}
Since $\mathcal{L}(0)=0$, the infimum of $\mathcal{L}(v)$ is non-positive.
Thus, $\Gamma$ is nonempty.
We would like to prove that the set $\Gamma$ is equi-bounded and equi-Lipschitz, i.e., there exists a constant $L>0$ such that for every $z\in \Gamma$, $v^c|_{\text{supp}(\mu)}$ and $v|_{\text{supp}(\nu)}$ are $L$-Lipschitz.
Let $A$ be the lower bound of functions $g_1$ and $g_2$, i.e. $A\leq g_1(x)$ and $A\leq g_2(y)$ for $x\in \mathcal{X}$ and $y\in \mathcal{Y}$, respectively.
% \todo{$A\leq g_2(y)$?}
Furthermore, since $\mathcal{X}$ and $\mathcal{Y}$ are compact, there exists $M>0$ such that $c(x,y) \leq M$ for all $(x,y) \in \mathcal{X}\times \mathcal{Y}$. Then, since $g_1(x) \geq \text{Id}(x)$,
% \todo{$g'(0)=1$?}
\begin{align}
      0 \geq \mathcal{L}(v)  &\geq \int_\mathcal{X} -\inf_{y\in\mathcal{Y}} \left( c(x,y) - v(y) \right) \d\mu(x) + \int_\mathcal{Y} g_2(-v(y)) \d\nu (y), \\ 
      &\geq -M + \underbrace{\sup_{y\in\mathcal{Y}} \left(v(y)\right)}_{=:\Tilde{v}} + \int_\mathcal{Y} g_2(-v(y)) \d\nu (y) \geq -M  + \Tilde{v} + A,
\end{align}
which indicates that $v(y)\leq M-A$ for all $y \in \mathcal{Y}$.
Note that $M$ nor $A$ is dependent on the choice of $v$.
Thus, $v\in \Gamma$ is equi-bounded above.
Moreover, by using similar logic with respect to $v^c$, we can easily prove that $v$ is also equibounded below.
Consequently, by symmetricity, $v$ and $v^c$ are equi-bounded.

We now prove that there exists a uniform constant $L$ such that for every $z\in \Gamma$, $v$ is Lipschitz continuous with constant $L$.
% Since $v$ is bounded above, the infimum is attained at a point $x(y)$ such that
Since $v$ is bounded and $v^{cc}=v$, there exists a point $x(y)$ such that
\begin{equation}
    v(y) = c(x(y), y) - v^c (x(y)),
\end{equation}
and for every $\Tilde{y}\in \mathcal{Y}$,
\begin{equation}
    v(\Tilde{y}) \leq c(x(y), \Tilde{y}) - v^c (x(y)).
\end{equation}
Subtracting the two previous inequalities gives,
\begin{equation}
    v(\Tilde{y}) - v(y) \leq c(x(y), \Tilde{y}) - c(x(y), y).
\end{equation}
Since $c$ is Lipschitz continuous on the compact domain $\mathcal{X}\times \mathcal{Y}$, there exists a Lipshitz constant $L$ that satisfies $| c(x(y), \Tilde{y}) - c(x(y), y)| \leq L \lVert \Tilde{y}-y \rVert_2$.
Thus,
\begin{equation}
    |v(\Tilde{y}) - v(y)| \leq L \lVert \Tilde{y} - y \rVert_2.
\end{equation}
To sum up, $\Gamma$ is nonempty, equibounded, and equi-Lipschitz.
Moreover, $\mathcal{L}(v)\geq 2A$, thus $\mathcal{L}(\Gamma)$ is lower-bounded.

Now, we would like to prove the compactness of $\Gamma$. 
Take any sequence $\{v_n\}_{n\in \mathbb{N}}\subset \Gamma$.
Then, since $\Gamma$ is nonempty, equibounded, and equi-Lipschitz, we can obtain a uniformly convergent subsequence $\{v_{n_k}\}_{k\in \mathbb{N}} \rightarrow v$ by Arzel\`{a}-Ascoli theorem.
Because $v_{n}(y)-\tau \lVert y \rVert_2^2$ is concave for each $v_n \in \Gamma$ from $v^{cc}=v$ \citep{santambrogio}, $v$ is also continuous and $v(y)-\tau \lVert y \rVert_2^2$ is concave.
Thus, $v$ is c-concave, i.e. $v^{cc}=v$.
Now, to prove $v\in \Gamma$, we only need to prove $\mathcal{L}_v \leq 0$.
Since $\{v_{n_k}\}_{k\in \mathbb{N}} \rightarrow v$ uniformly, it is easy to show that $\{v_{n_k}^c\}_{k\in \mathbb{N}} \rightarrow v^c$ uniformly.
Moreover, note that $\{v_{n_k}^c\}_{k\in \mathbb{N}}$ is equibounded.
By applying the dominated convergence theorem (DCT), we can easily prove that $\mathcal{L}_v \leq 0$.
Thus, for any sequence of $\Gamma$, there exists a subsequence that converges to point of $\Gamma$ (Bolzano-Weierstrass property), which implies that $\Gamma$ is compact. 
% by applying compactness of $\mathcal{X}\times \mathcal{Y}$, and equi-Lipschitzness, equiboundedness of $\Gamma$, we can easily prove the compactness of $\Gamma$ (use Arzel\`{a}-Ascoli theorem).
Finally, since $\mathcal{L}(\Gamma)$ is lower-bounded, there exists a minimizer $v^\star \in \Gamma$, i.e. $\mathcal{L}(v^\star)\leq \mathcal{L}(v)$ for all $v \in \Gamma$ by compactness of $\Gamma$.

Now, we prove the uniqueness of the minimizer. Let $K>0$ be a real value that $|v|\leq K$ and $|v^c| \leq K$ for every $v\in \Gamma$. There exists such $K$ by the equiboundedness.
Now, let $\mathcal{C}_K$ denote the collection of continuous functions which are bounded by $K$.
Since $g_1$ and $g_2$ are strictly convex on $[-K,K]$, the following dual minimization problem becomes strictly convex:
\begin{equation}
    \inf_{(u,v)\in \mathcal{C}_K(\mathcal{X})\times \mathcal{C}_K(\mathcal{Y})} \int_\mathcal{X} g_1\left(-u(x))\right) \d\mu(x) + \int_\mathcal{Y} g_2(-v(y)) \d\nu (y).
\end{equation}
Thus, there exists at most one solution.
Because there exists a solution $({v^\star}^c, v^\star)$, it is the unique solution.
\end{proof}

\begin{theorem} 
Assume the entropy functions $\Psi_1, \Psi_2$ are strictly convex and finite on $(0, \infty)$. Then, the optimal transport plan $\pi^{\alpha, \star}$ of the $\alpha$-scaled UOT problem $C_{ub}^{\alpha}(\mu, \nu)$ (Eq. \ref{eq:uot}) weakly converges to the optimal transport plan $\pi^{\star}$ of the OT problem $C(\mu, \nu)$ (Eq. \ref{eq:kantorovich}) as $\alpha$ goes to infinity.
\end{theorem}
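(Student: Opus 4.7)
The plan is a classical compactness-plus-lower-semicontinuity argument (the $\alpha$-UOT functional $\Gamma$-converges to the OT functional as $\alpha \to \infty$).

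First I would establish the fundamental inequality $C_{ub}^{\alpha}(\mu,\nu) \leq C(\mu,\nu)$, obtained by plugging the OT-optimal plan $\pi^{\star} \in \Pi(\mu,\nu)$ into the $\alpha$-UOT functional; the divergence terms vanish because $\pi^{\star}_0 = \mu$, $\pi^{\star}_1 = \nu$, and $\Psi_i(1) = 0$. Evaluated on the minimizer $\pi^{\alpha,\star}$, this yields both a uniform cost bound $\int c\, d\pi^{\alpha,\star} \leq C(\mu,\nu)$ and the crucial decay
\begin{equation}
D_{\Psi_1}(\pi_0^{\alpha,\star}|\mu) + D_{\Psi_2}(\pi_1^{\alpha,\star}|\nu) \leq C(\mu,\nu)/\alpha \xrightarrow[\alpha\to\infty]{} 0.
\end{equation}

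Next, I would obtain tightness. Applying Jensen's inequality to $\Psi_i$ gives $D_{\Psi_i}(\pi_i^{\alpha,\star}|\mu) \geq \Psi_i(\pi_i^{\alpha,\star}(\mathcal{X}))$ (up to a nonnegative recession contribution from any singular part), so strict convexity of $\Psi_i$ with unique zero at $1$ forces the marginal masses to stay bounded and to tend to $1$. Since $\mathcal{X}\times\mathcal{Y}$ is compact, Banach--Alaoglu lets me extract a weak-$*$ convergent subsequence $\pi^{\alpha_n,\star} \rightharpoonup \pi^{\infty}$ in the space of finite nonnegative Radon measures. By the standard weak lower-semicontinuity of \csiszar{} divergences with a fixed reference, together with the decay estimate,
\begin{equation}
D_{\Psi_1}(\pi_0^{\infty}|\mu) \leq \liminf_n D_{\Psi_1}(\pi_0^{\alpha_n,\star}|\mu) = 0,
\end{equation}
and similarly for $\pi_1^{\infty}$; strict convexity of $\Psi_i$ and $\Psi_i(1)=0$ then identify $\pi_0^{\infty} = \mu$, $\pi_1^{\infty} = \nu$, so $\pi^{\infty} \in \Pi(\mu,\nu)$. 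Since $c$ is bounded continuous on the compact product space, weak convergence gives $\int c\, d\pi^{\infty} = \lim_n \int c\, d\pi^{\alpha_n,\star} \leq C(\mu,\nu)$, and admissibility of $\pi^{\infty}$ forces equality. Hence $\pi^{\infty}$ is an OT optimum, and a standard subsequence argument upgrades this to convergence of the full sequence $\pi^{\alpha,\star} \rightharpoonup \pi^{\star}$ (assuming, or restricting the statement to, the uniqueness case).

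The main obstacle will be the two uses of strict convexity in the identification step: establishing weak lower-semicontinuity of $\pi \mapsto D_{\Psi_i}(\pi_i|\mu)$ carefully (typically via the Legendre dual representation $D_{\Psi_i}(\rho|\mu) = \sup_{\varphi} \int \varphi\, d\rho - \int \Psi_i^{*}(\varphi)\, d\mu$ with attention to recession terms), and deducing $\pi_0^{\infty} = \mu$ from a vanishing divergence, which is precisely where strict convexity of $\Psi_i$ is essential. A secondary delicate point is handling possible singular parts of $\pi_i^{\alpha,\star}$ both in the Jensen bound and in the lsc argument, which requires invoking the recession function $\Psi_i'(\infty)$. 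Finally, if the OT optimum is non-unique, the conclusion should be interpreted as convergence of subsequences to the (compact) set of OT optima rather than to a single $\pi^{\star}$.
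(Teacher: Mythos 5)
Your proposal is correct, but it takes a genuinely different route from the paper's proof. The paper does not argue by compactness and lower semicontinuity at all: it first rewrites the $\alpha$-scaled problem as the standard UOT problem with cost intensity $\tau/\alpha$, then cites the UOTM result that the optimal plan's marginal discrepancies are bounded by $\frac{\tau}{\alpha}\mathcal{W}_2^2(\mu,\nu)$ (the same $O(1/\alpha)$ decay you obtain directly by testing the $\alpha$-UOT functional with the OT-optimal plan), upgrades the vanishing \csiszar{} divergences to total-variation and hence weak convergence of the marginals via a generalized Pinsker-type inequality, invokes a second UOTM result that $\pi^{\alpha,\star}$ is itself an optimal plan between its own marginals, and concludes with the stability theorem for optimal transport plans (Villani, Theorem 5.20). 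Your argument replaces the Pinsker step, the restricted-optimality fact, and the stability theorem with a self-contained $\Gamma$-convergence-style scheme: the value comparison $C_{ub}^{\alpha}(\mu,\nu)\leq C(\mu,\nu)$ yields both the uniform cost bound and the divergence decay, Banach--Alaoglu together with your Jensen/recession mass estimate yields a weak-$*$ limit point, and lower semicontinuity of the divergences plus strict convexity identifies the limit as an element of $\Pi(\mu,\nu)$ with cost at most $C(\mu,\nu)$, hence optimal. What the paper's route buys is brevity, since the hard steps are delegated to cited results; what yours buys is self-containedness and an explicit tightness/mass-bound step, which the paper passes over when it deduces $\liminf_{\alpha}\int c\,\d\pi^{\alpha,\star}<\infty$ from boundedness of $c$ alone. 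Both routes need uniqueness of the OT plan (Brenier, for the quadratic cost with absolutely continuous $\mu$) to upgrade subsequential convergence to convergence of the full family; you flag this explicitly, while the paper leaves it implicit in the phrase ``the optimal transport plan'' and in its appeal to the stability theorem.
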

\begin{proof}
Note that the $\alpha$-scaled UOT problem $C_{ub}^{\alpha}(\mu, \nu)$ is equivalent to setting the cost intensity $\tau \rightarrow \frac{\tau}{\alpha}$ within the cost function $c(x,y)=\tau \| x-y \|_2^{2}$ of the standard UOT problem $C_{ub}(\mu, \nu)$:
\begin{align}
    \pi^{\alpha, \star} &= {\arg\inf}_{\pi^{\alpha} \in \mathcal{M}_+(\mathcal{X}\times\mathcal{Y})} \left[ \int_{\mathcal{X}\times \mathcal{Y}} \tau \| x-y \|_2^{2} \d\pi^{\alpha}(x,y) + \alpha D_{\Psi_1}(\pi_0^{\alpha}|\mu) + \alpha D_{\Psi_2}(\pi_1^{\alpha}|\nu) \right], \\
     &= {\arg\inf}_{\pi \in \mathcal{M}_+(\mathcal{X}\times\mathcal{Y})} \left[ \int_{\mathcal{X}\times \mathcal{Y}} \frac{\tau}{\alpha} \| x-y \|_2^{2} \d\pi(x,y) + D_{\Psi_1}(\pi_0|\mu) + D_{\Psi_2}(\pi_1|\nu) \right] \label{eq:equiv_UOT}. 
\end{align} 
\citet{uotm} proved that, in the standard UOT problem, the marginal discrepancies for the optimal $\pi^{\star}$ are linearly proportional to the cost intensity. This relationship can be interpreted as follows for the above $\pi^{\alpha, \star}$:
\begin{equation} 
    D_{\Psi_1} (\pi_{0}^{\alpha, \star}|\mu) + D_{\Psi_2} (\pi_{1}^{\alpha, \star}|\nu) \leq \frac{\tau}{\alpha} \mathcal{W}_2^2(\mu,\nu).
\end{equation}
Therefore, as $\alpha$ goes to infinity, the marginal distributions of $\pi^{\alpha, \star}$ converge in the \csiszar{} divergences to the source $\mu$ and target $\nu$ distributions:
\begin{equation} 
    \lim_{\alpha \rightarrow \infty} D_{\Psi_1} (\pi_{0}^{\alpha, \star}|\mu) 
    = \lim_{\alpha \rightarrow \infty} D_{\Psi_2} (\pi_{1}^{\alpha, \star}|\nu) = 0.
\end{equation}
The convergence in Csiszar divergence $D_{\Psi_{i}}$ for a strictly convex $\Psi_{i}$ implies the convergence of measures in Total Variation distance \citep{fdivIneq1, fdivIneq2}. Then, this convergence in Total Variation distance implies the weak convergence of measures. This can be easily shown as follows: For any continuous and bounded $f \in \mathcal{C}_{b}(\mathcal{X})$, we have
% \begin{align}
%     \left|\int f d \mu_n-\int f d \mu\right|=\left|\int f d\left(\mu_n-\mu\right)\right| &=\| f \|_{\infty} \left| \int(f /\|f\|) \d \left( \mu_n-\mu \right) \right|, \\
%     &\leq \| f\|_{\infty} \left\| \mu_n- \mu \right\|_{TV}.
% \end{align}
\begin{align}
    \left|\int f d \mu_n-\int f d \mu\right|=\left|\int f d\left(\mu_n-\mu\right)\right| &=\| f \|_{\infty} \left| \int(f /\|f\|) \mathrm{d} \left( \mu_n-\mu \right) \right|, \\
    &\leq \| f\|_{\infty} \left\| \mu_n- \mu \right\|_{TV}.
\end{align}
Therefore, $\pi_{0}^{\alpha, \star}$ and $\pi_{1}^{\alpha, \star}$ weakly converges to $\mu$ and $\nu$, respectively. \citet{uotm} showed that the optimal $\pi^{\alpha, \star}$ of Eq. \ref{eq:equiv_UOT} becomes the optimal transport plan for the OT problem $C(\pi_{0}^{\alpha, \star}, \pi_{1}^{\alpha, \star})$ for the same cost function $c(x,y)=\tau \| x-y \|_2^{2}$. (The optimal transport plan is invariant to the constant scaling of the cost function). Moreover, since $\mathcal{X}, \mathcal{Y}$ are compact, $c(x,y)$ is bound on $\mathcal{X} \times \mathcal{Y}$. Thus,
\begin{align} 
    \liminf_{\alpha \rightarrow \infty} \int c(x, y) \d \pi^{\alpha, \star} < \infty.
\end{align}

Consequently, Theorem 5.20 from \citet{villani} proves that $\pi^{\alpha, \star}$ weakly converges to the optimal transport plan $\pi^{\star}$ of the OT problem $C(\mu, \nu)$ as $\alpha$ goes to infinity.

\end{proof}

\section{Implementation Details} \label{appen:B}
For every implementation, the prior (source) distribution is a standard Gaussian distribution with the same dimension as the data (target) distribution.

\paragraph{2D Experiments}
For $m_i = 12 \left(\cos{\frac{i}{4}\pi},\sin{\frac{i}{4}\pi}\right)$ for $i=0, 1, \dots, 7$ and $\sigma=0.4$, we set mixture of $\mathcal{N}(m_i,\sigma^2)$ a target distribution.
For all synthetic experiments, we used the same generator and discriminator network architectures.
The auxiliary variable $z$ has a dimension of two.
For a generator, we passed $z$ through two fully connected (FC) layers with a hidden dimension of 128, resulting in 128-dimensional embedding. 
We also embedded data $x$ into the 128-dimensional vector by passing it through three-layered ResidualBlock \citep{song2019generative}.
Then, we summed up the two vectors and fed them to the final output module. The output module consisted of two FC layers.
For the discriminator, we used three layers of ResidualBlock and two FC layers (for the output module). The hidden dimension is 128. Note that the SiLU activation function is used.
We used a batch size of 128, and a learning rate of $2\times 10^{-4}$ and $10^{-4}$ for the generator and discriminator, respectively. 
We trained for 30K iterations.
For OTM and UOTM, we chose the best results between settings of $\tau=0.01, 0.05$.
OTM has shown the best performance with $\tau=0.05$ and UOTM has shown the best performance with $\tau=0.01$.
For WGANs and OTM, since they do not converge without any regularization, we set the regularization parameter $\lambda=5$.
We used a gradient clip of $0.1$ for WGAN.

\paragraph{CIFAR-10}
For the DCGAN model, we employed the architecture of \citet{robust-ot}, which uses convolutional layers with residual connection. Note that this is the same model architecture as in \citet{otm, uotm}. 
We set a batch size of 128, 50K iterations, a learning rate of $2\times 10^{-4}$ and $10^{-4}$ for the generator and discriminator, respectively. 
In the DCGAN backbone, we adopt a simple practical scheme suggested in OTM \cite{otm} for accommodating a smaller dimension for the input latent space $X$. This practical scheme involves introducing a deterministic bicubic upsampling $Q$ from $\mathcal{X}$ to $\mathcal{Y}$. Then, we consider the OT map between $Q_{\#}\mu$ and $\nu$. In practice, we sample $x$ in Algorithm 1 from a 192-dimensional standard Gaussian distribution. Then, $x$ is directly used as an input for the DCGAN generator $T_\theta$. The random variable $z$ is not employed in the DCGAN implementation. Meanwhile, $Q(x)$ is obtained by reshaping $x$ into a $3\times 8\times 8$ dimensional tensor, and then bicubically upsampling it to match the shape of the image. The generator loss is defined as $c\left(Q(x), T_\theta (x) \right) - v_\phi \left( T_\theta (x) \right)$.

For the NCSN++ model, we followed the implementation of \citet{rgm} unless otherwise stated. 
Specifically, we set $\mathcal{X}=\mathcal{Y}$ and use $c(x,y) = \tau \lVert x-y \rVert^2$ without introducing upsampling $Q$.  Here, the auxiliary variable $z$ is employed. We sample $z$ from a 256-dimensional Gaussian distribution and put it as an additional stochastic input to the generator. The input prior sample $x$ is fed into the NCSN++ network like UNet input. The auxiliary $z$ passes through embedding layers and is incorporated into the intermediate feature maps of the NCSN++ through an attention module.
We trained for 200K for OTM and 120K for other models because OTM converges slower than other models.
Moreover, we used $R_1$ regularization of $\lambda=0.2$ for all methods and architectures.
WGANs are known to show better performance with the optimizers without momentum term, thus, we use Adam optimizer with $\beta_1=0$, for WGANs.
Furthermore, since OTM has a similar algorithm to WGAN, we also use Adam optimizer with $\beta_1=0$.
Lastly, following \citet{uotm}, we use Adam optimizer with $\beta_1=0.5$ for UOTM.
Note that for all experiments, we use $\beta_2=0.9$ for the optimizer. 
We used a gradient clip of $0.1$ for WGAN.
Furthermore, the implementation of UOTM-SD follows the UOTM hyperparameter unless otherwise stated. 
We trained UOTM-SD for 200K iterations. 
%For UOTM-SD (cosine) and (linear), we scheduled from the start until 150K iterations. 
For UOTM-SD (Cosine) and (Linear), we initiated the scheduling strategy from the start and finished the scheduling at 150K iterations. 
For UOTM-SD (Step), we halved $\Tilde{\alpha}$ for every 30K iterations until it reaches $\alpha_{max}^{-1}$.

\paragraph{Evaluation Metric}
For the evaluation of image datasets, we used 50,000 generated samples to measure FID \citep{stylegan} scores.
For every model, we evaluate the FID score for every 10K iterations and report the best score among them.

\section{Problems of GAN-based Generative Models} \label{appen:GAN-problem}
\paragraph{Unstable training}
Training adversarial networks involves finding a Nash equilibrium \citep{nash} in a two-player non-cooperative game, where each player aims to minimize their own objective function. 
However, discovering a Nash equilibrium is an exceedingly challenging task \citep{inceptionscore,which-converge}. 
%To mitigate the computation burden, the majority of research adopts an alternating gradient descent update for the generator and discriminator. 
The prevailing approach for adversarial training is to adopt alternating gradient descent updates for the generator and discriminator.
Unfortunately, the gradient descent algorithm often struggles to converge for many GANs \citep{inceptionscore, which-converge}. 
%Notably, \citet{which-converge} provides evidence that neither WGANs nor WGANs with Gradient Penalty (WGAN-GP) converge stably. 
Notably, \citet{which-converge} showed that neither WGANs nor WGANs with Gradient Penalty (WGAN-GP) offer stable convergence.

% These findings highlight the complexity and difficulty of achieving convergence in adversarial training.
% Training adversarial networks involves finding a Nash equilibrium in a two-player non-cooperative game, where each player aims to minimize their own objective function. 
% However, discovering Nash equilibria is an exceedingly challenging task. To mitigate the computation burden, the majority of research adopts an \st{alternative}\jw{alternating} gradient descent update for the generator and discriminator. 
% Unfortunately, gradient descent \jw{algorithm} often struggles to converge for many GANs, and this property is extensively discussed both theoretically and empirically in seminar papers \citep{inceptionscore, which-converge}. Notably, \citet{which-converge} provides evidence that neither WGANs nor WGANs with Gradient Penalty (GP) nor DRAGAN converge stably. 
% These findings highlight the complexity and difficulty of achieving convergence in adversarial training.

\paragraph{Mode collapse/mixture}

Another primary challenge in adversarial training is mode collapse and mixture phenomena. Mode collapse means that a generative model fails to encompass all modes of the data distribution. Conversely, mode mixture represents that a generative model fails to separate two modes of data distribution while attempting to cover all modes. This results in the generation of spurious or ambiguous samples.
Many state-of-the-art GANs enforce regularization on the spectral norm \citep{sngan} to mitigate training instability. \citet{isgenerator} showed that enforcing the magnitude of the spectral norm of the networks reduces instability in training. However, recent works \citep{GANlocallystable, disconnectedGAN, ae-ot, pushforward} have revealed that such Lipschitz constraints can lead the generator to concentrate solely on one of the modes or lead to mode mixtures in the generated samples. 

\section{Additional Results} 
% \subsection{Gradient Exploding in UOTM (KL)} \label{appen:gradient-exploding}
% \subsection{The Effect of Regularization Term for WGAN and OTM}
% \subsection{Additional Discussions on Scheduling} \label{appen:schedule}
% \subsection{Additional Ablation Studies} \label{appen:ablation}
% \subsection{Boundedness of Potential in UOTM}
\subsection{Additional Qualitative Results on Toy Datasets} \label{appen:toy}
\begin{figure}[h] 
    \captionsetup[subfigure]{aboveskip=-1pt,belowskip=-1pt}
    \begin{center}
        \begin{subfigure}[b]{0.45\textwidth}
            \includegraphics[width=\textwidth]{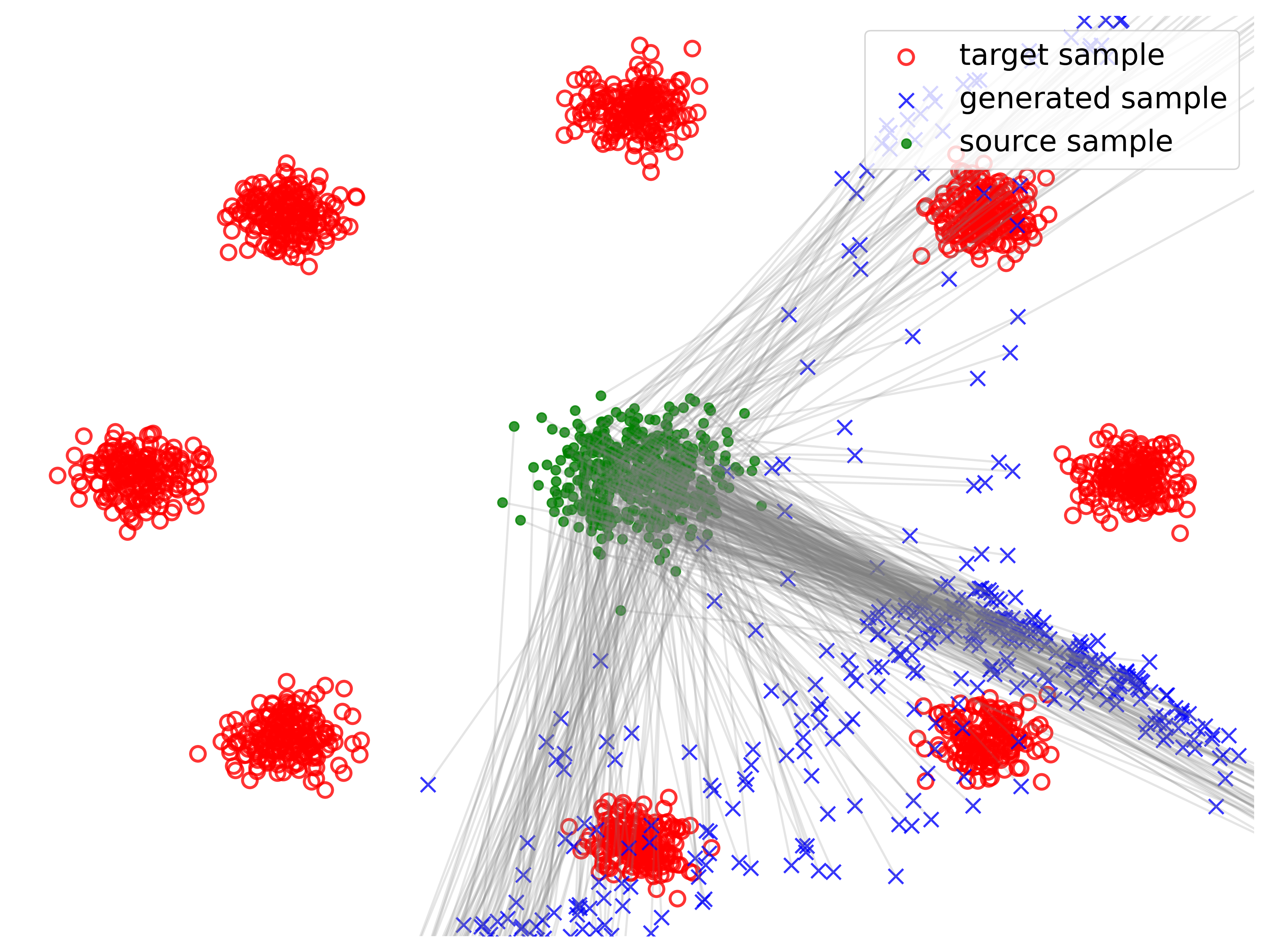}
            \caption{WGAN}
        \end{subfigure}
        \hfill
        \begin{subfigure}[b]{0.45\textwidth}
            \includegraphics[width=\textwidth]{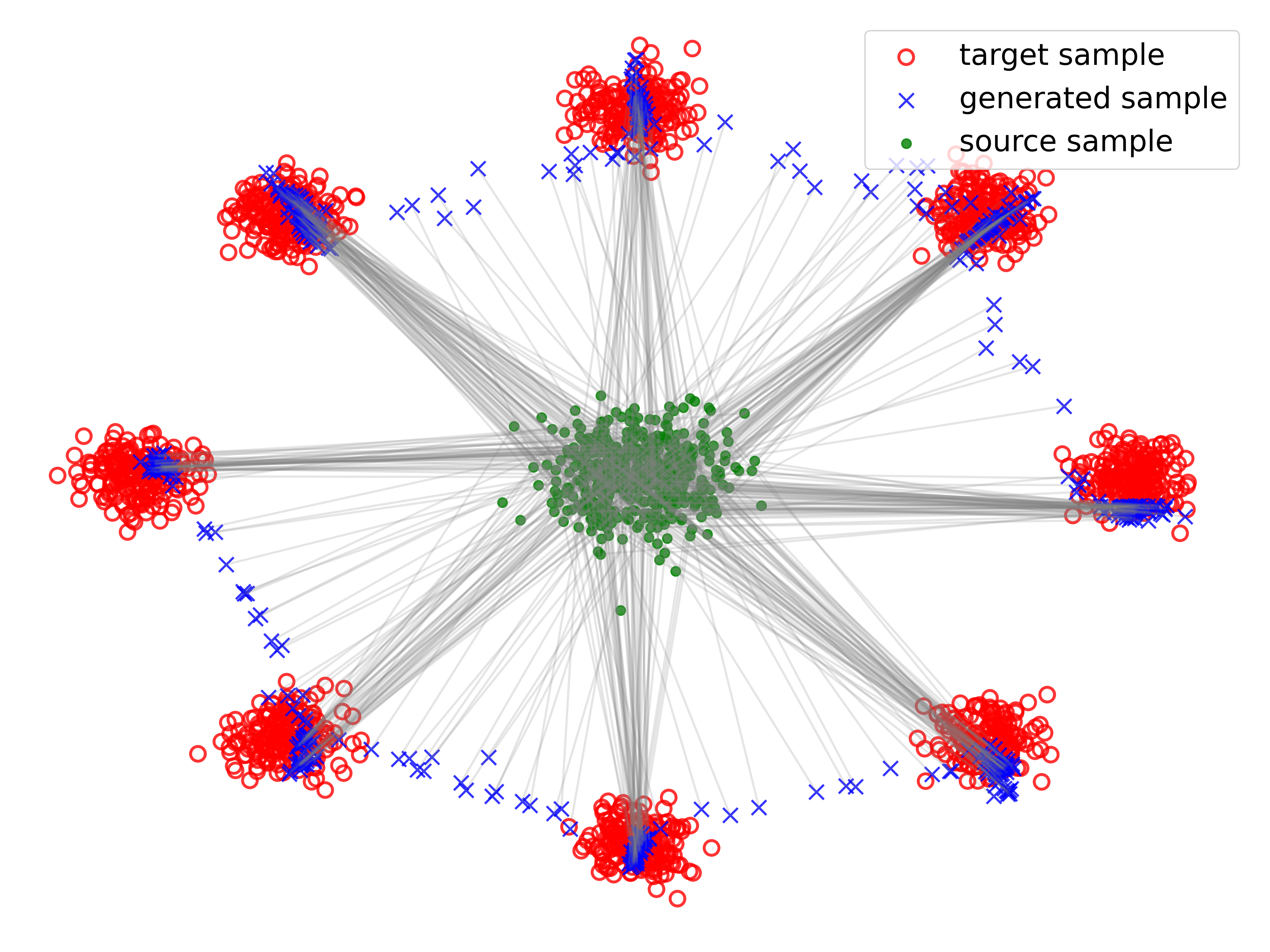}
            \caption{WGAN-GP}
        \end{subfigure}
        \hfill
        \begin{subfigure}[b]{0.45\textwidth}
            \includegraphics[width=\textwidth]{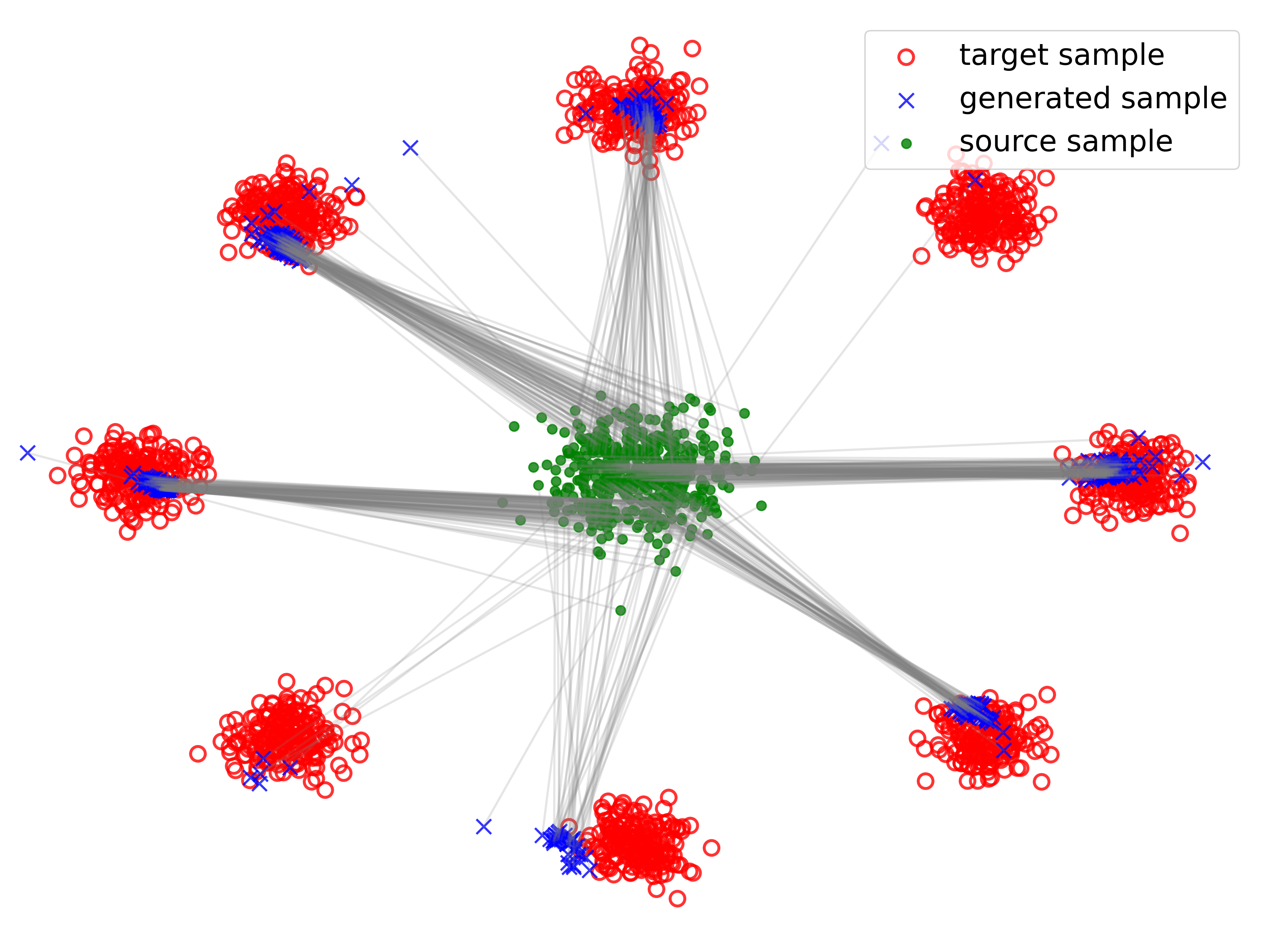}
            \caption{UOTM w/o cost}
        \end{subfigure}
        \hfill
        \begin{subfigure}[b]{0.45\textwidth}
            \includegraphics[width=\textwidth]{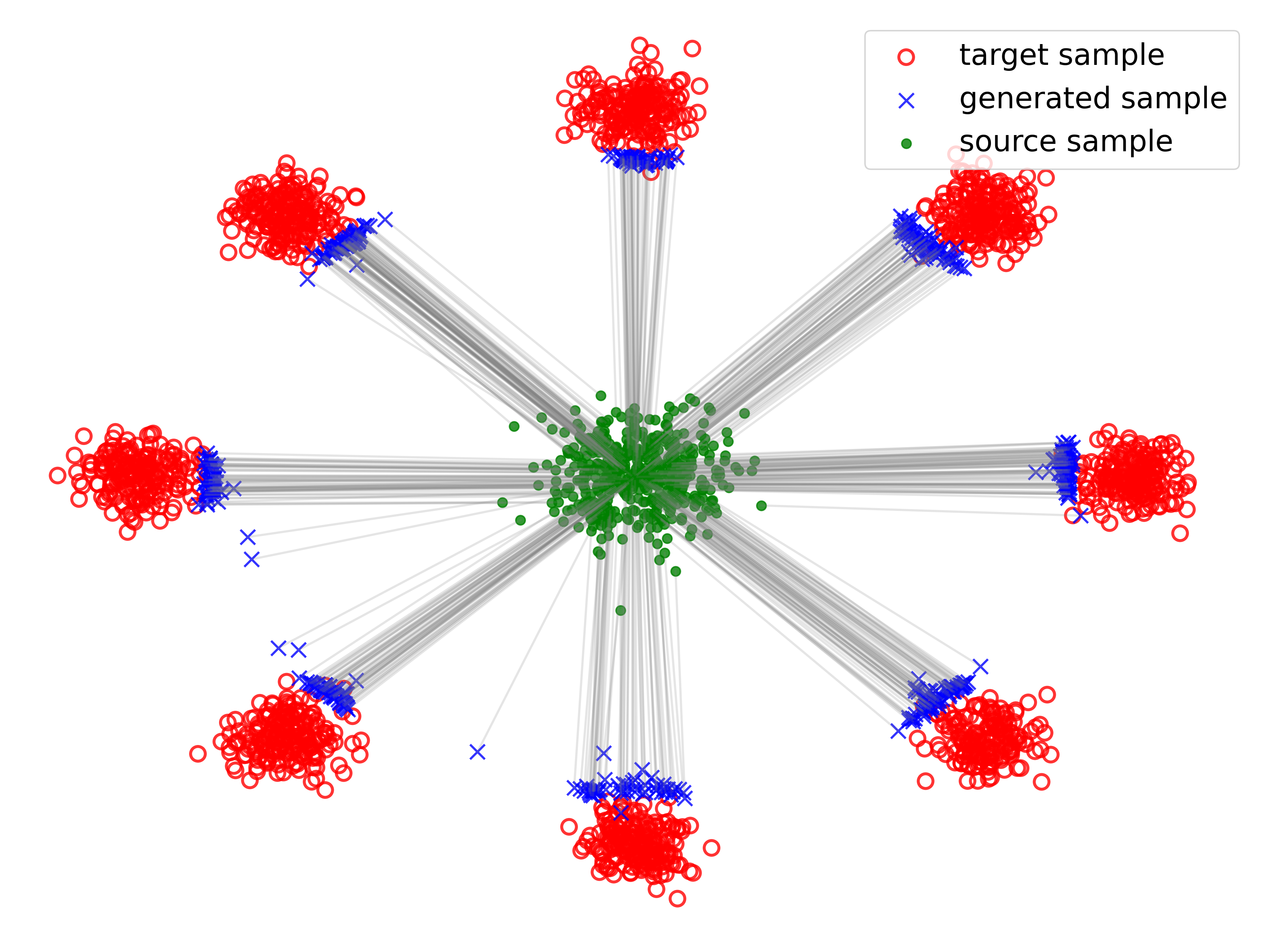}
            \caption{OTM}
        \end{subfigure}
        \hfill
        \begin{subfigure}[b]{0.45\textwidth}
            \includegraphics[width=\textwidth]{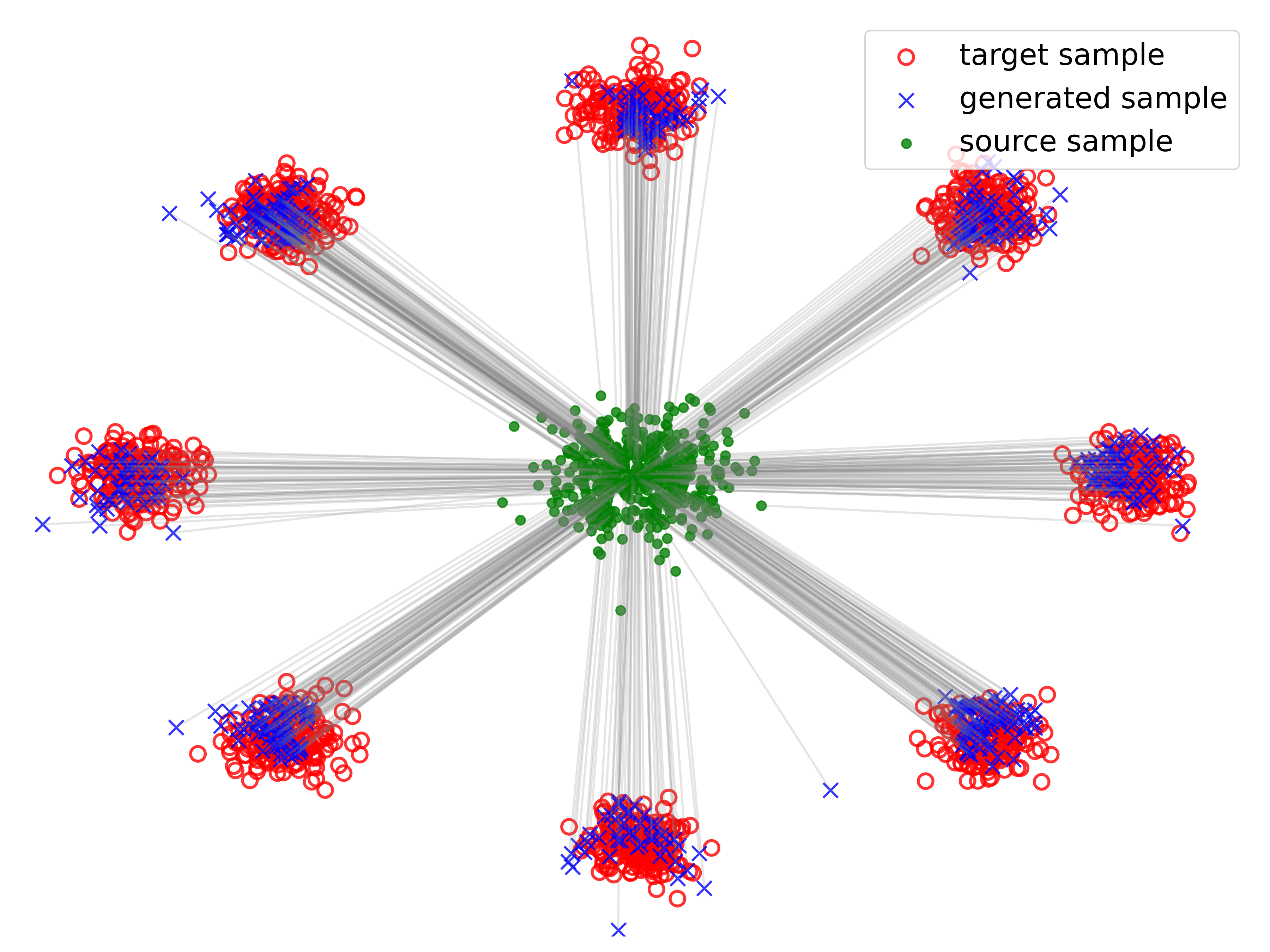}
            \caption{UOTM}
        \end{subfigure}
        \hfill
        \begin{subfigure}[b]{0.45\textwidth}
            \includegraphics[width=\textwidth]{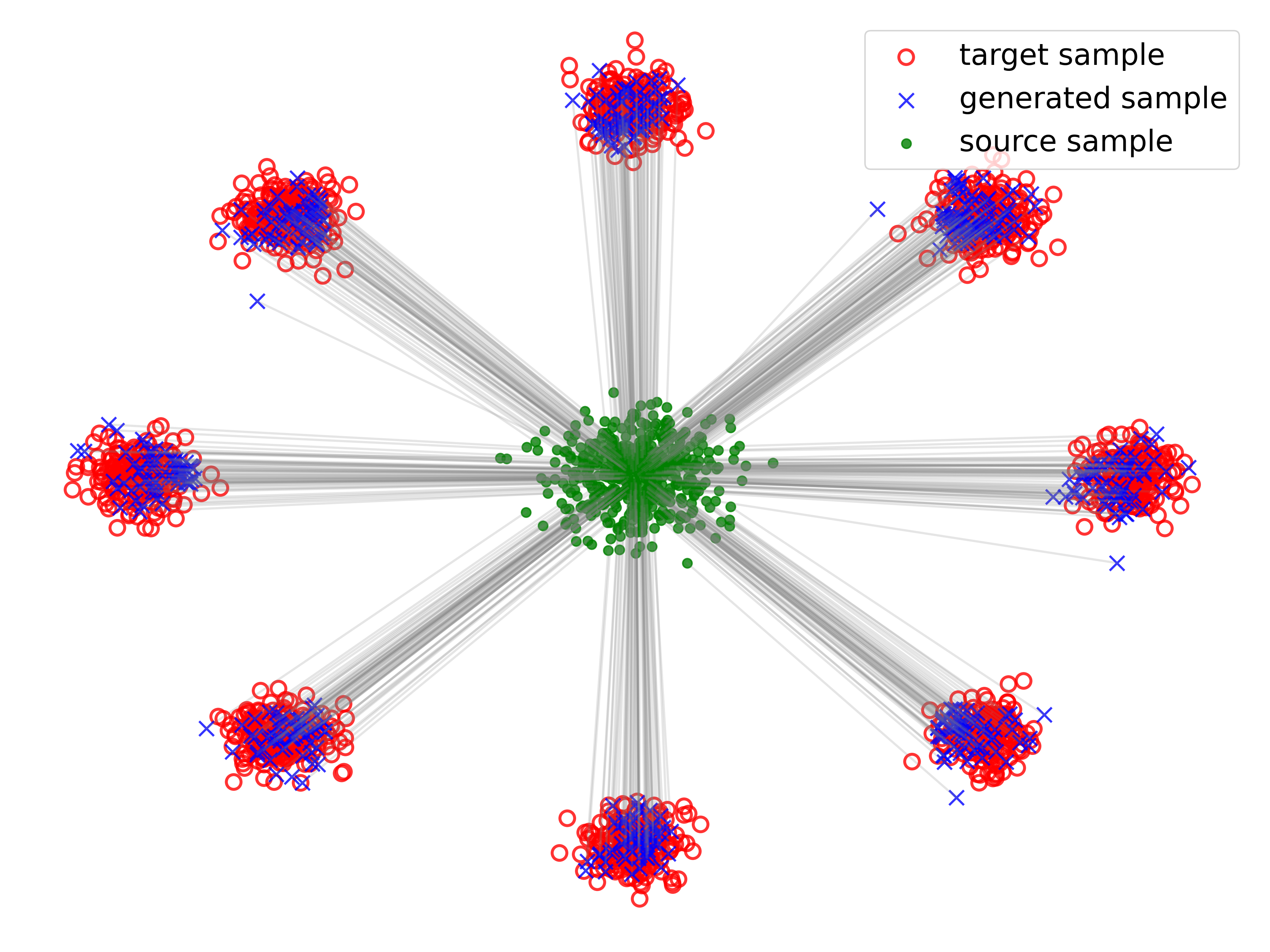}
            \caption{UOTM-SD}
        \end{subfigure}
        \hfill
        \begin{subfigure}[b]{0.45\textwidth}
            \includegraphics[width=\textwidth]{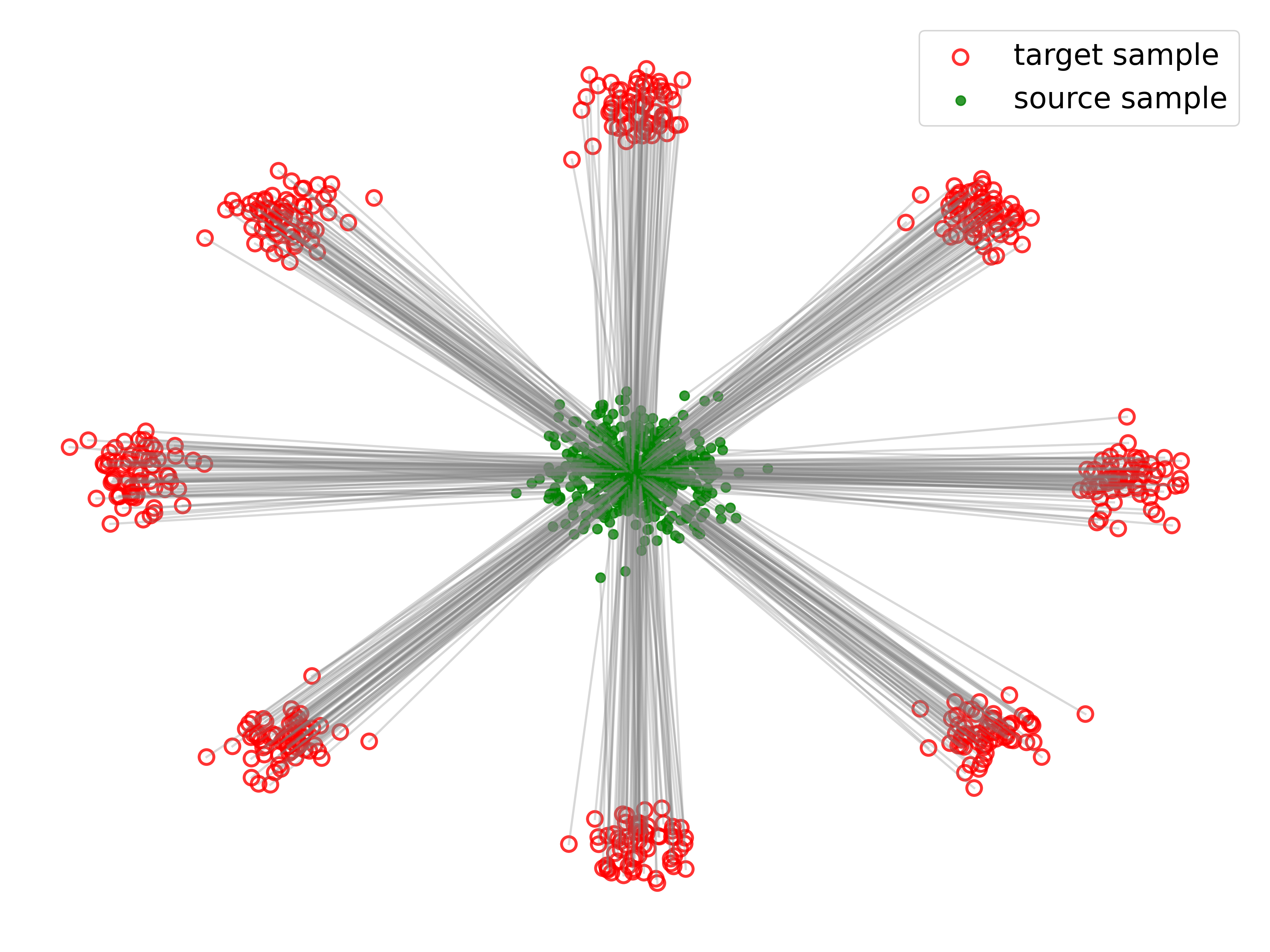}
            \caption{Convex OT Solver}
        \end{subfigure}
        \vspace{-2pt}
        \caption{
        \textbf{Visualization of Generator $T_{\theta}$.} The gray lines illustrate the generated pairs, i.e., the connecting lines between $x$ (green) and $T_{\theta}(x)$ (blue). The red dots represent the training data samples.
        }
        % \label{fig:gmm-mode}
        \vspace{-10pt}
    \end{center}
\end{figure}
\newpage

\begin{figure}[h] 
    \captionsetup[subfigure]{aboveskip=-1pt,belowskip=-1pt}
    \begin{center}
        \begin{subfigure}[b]{0.45\textwidth}
            \includegraphics[width=\textwidth]{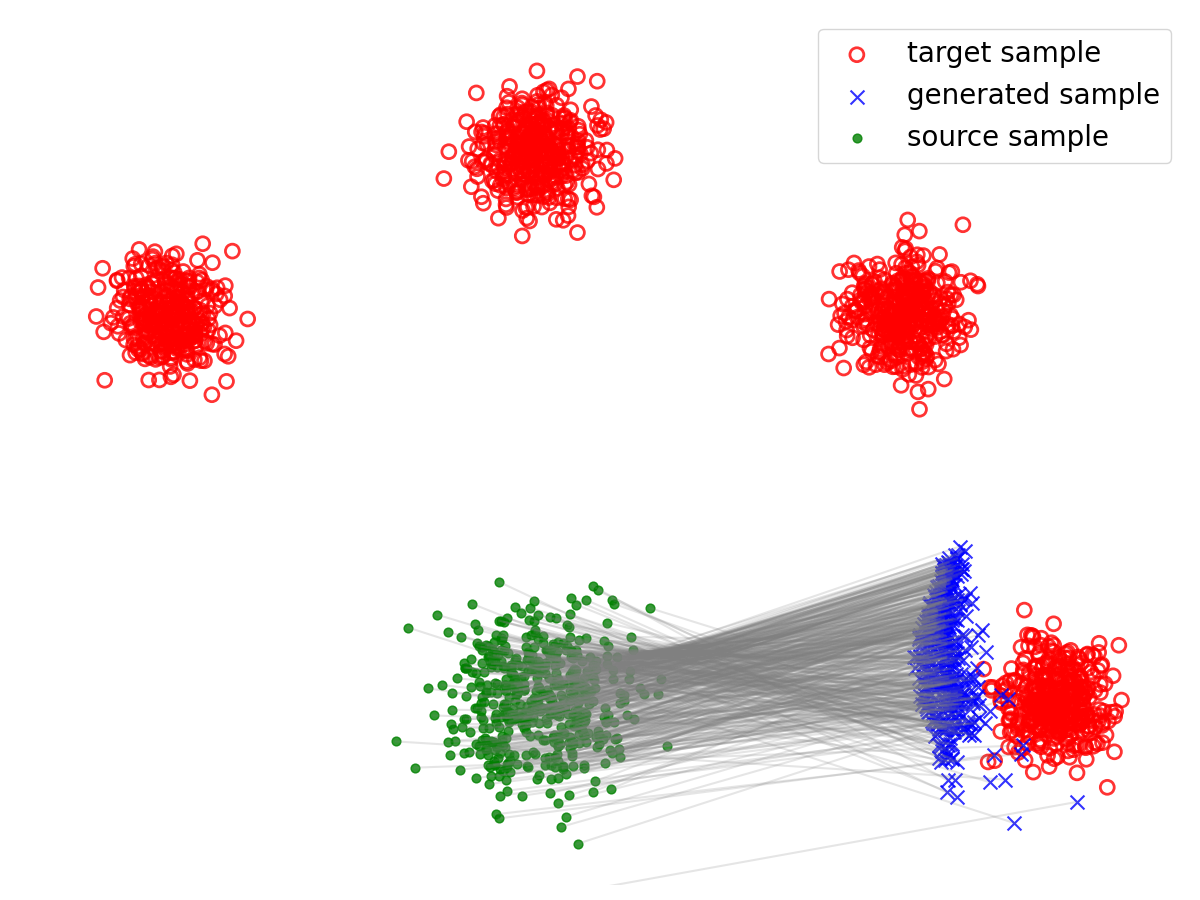}
            \caption{WGAN}
        \end{subfigure}
        \hfill
        \begin{subfigure}[b]{0.45\textwidth}
            \includegraphics[width=\textwidth]{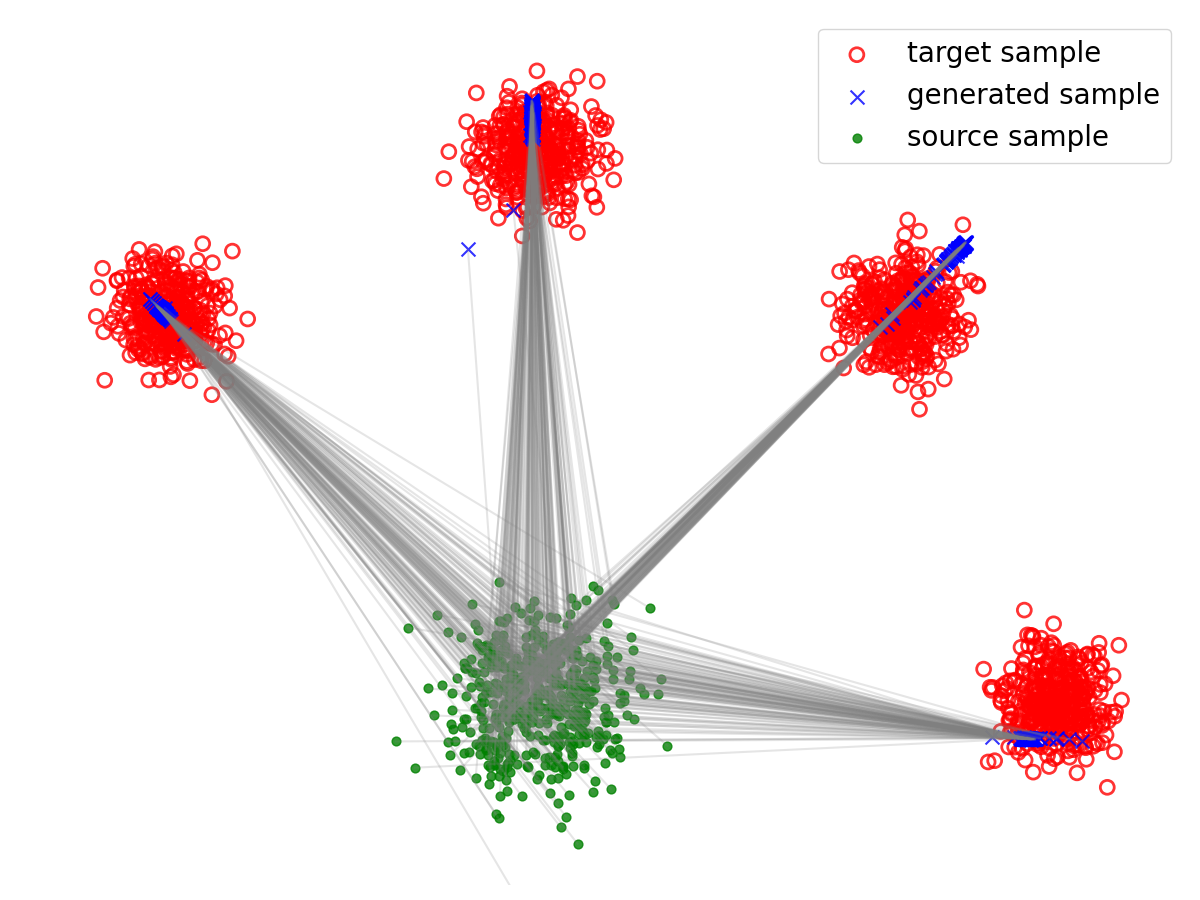}
            \caption{WGAN-GP}
        \end{subfigure}
        \hfill
        \begin{subfigure}[b]{0.45\textwidth}
            \includegraphics[width=\textwidth]{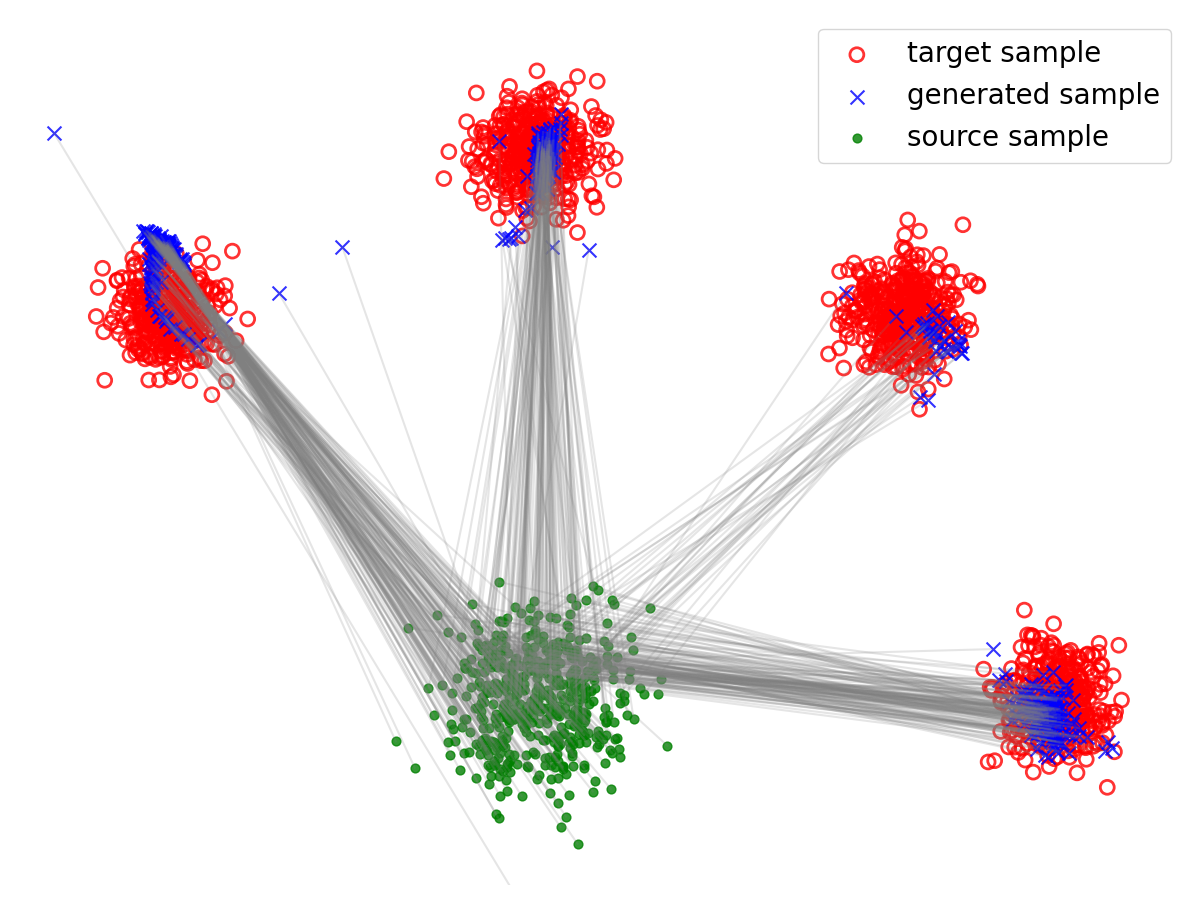}
            \caption{UOTM w/o cost}
        \end{subfigure}
        \hfill
        \begin{subfigure}[b]{0.45\textwidth}
            \includegraphics[width=\textwidth]{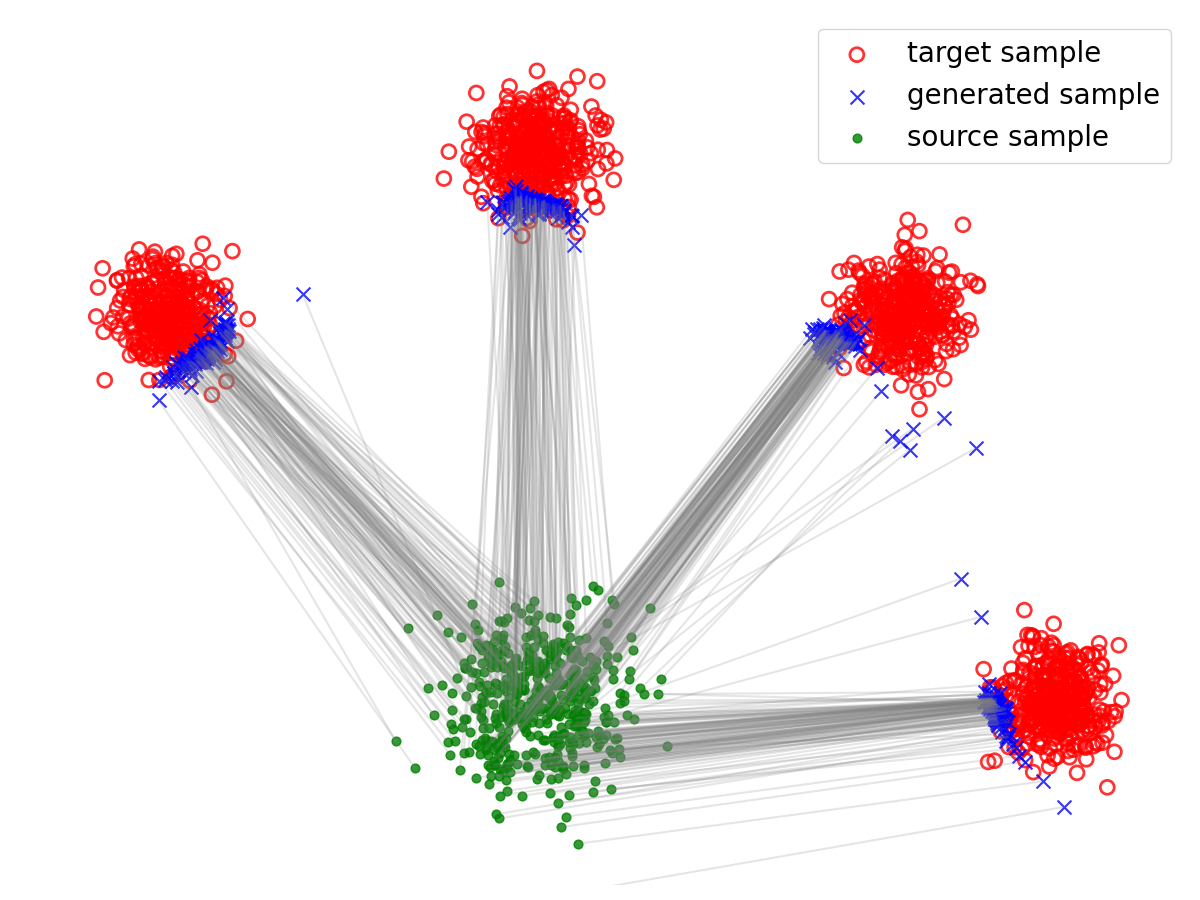}
            \caption{OTM}
        \end{subfigure}
        \hfill
        \begin{subfigure}[b]{0.45\textwidth}
            \includegraphics[width=\textwidth]{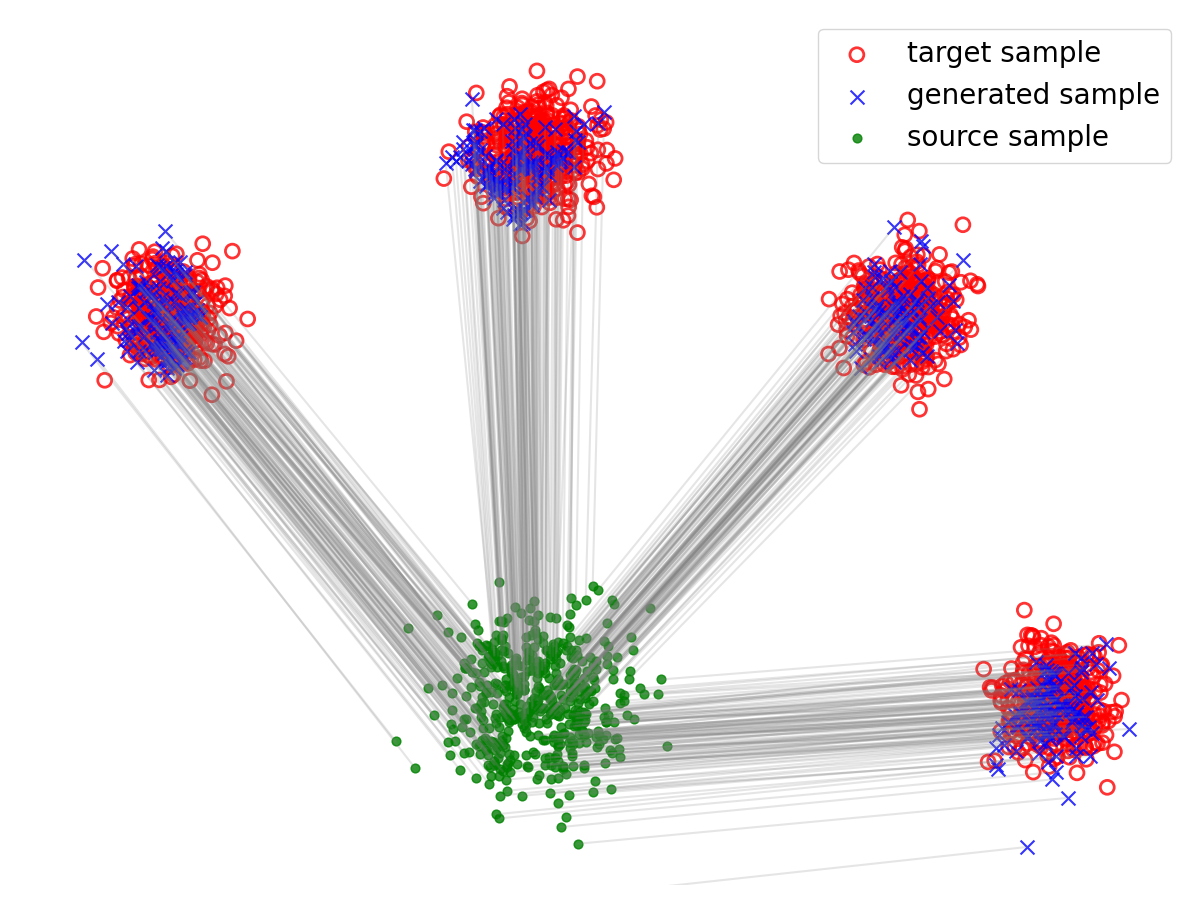}
            \caption{UOTM}
        \end{subfigure}
        \hfill
        \begin{subfigure}[b]{0.45\textwidth}
            \includegraphics[width=\textwidth]{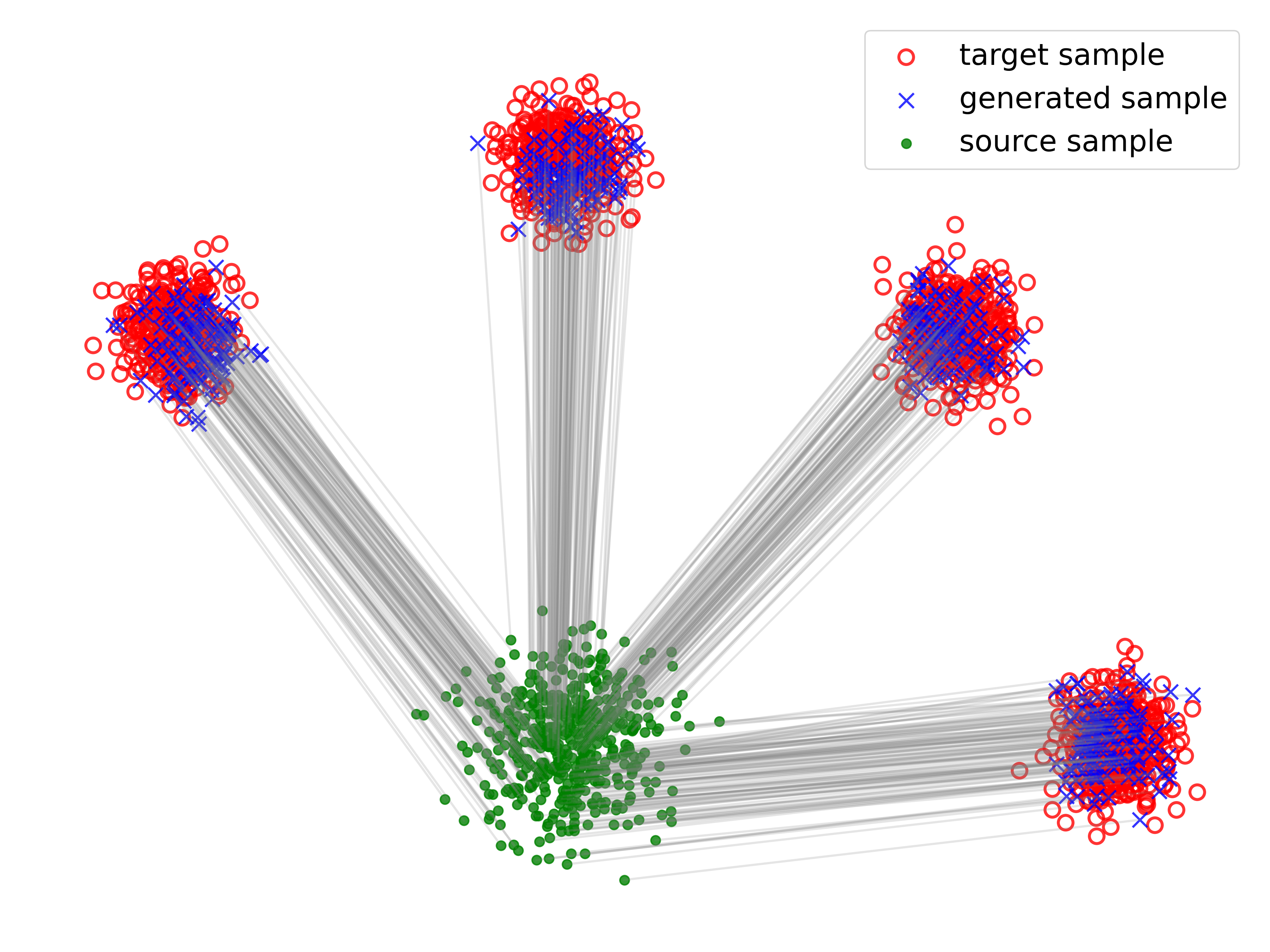}
            \caption{UOTM-SD}
        \end{subfigure}
        \hfill
        \begin{subfigure}[b]{0.45\textwidth}
            \includegraphics[width=\textwidth]{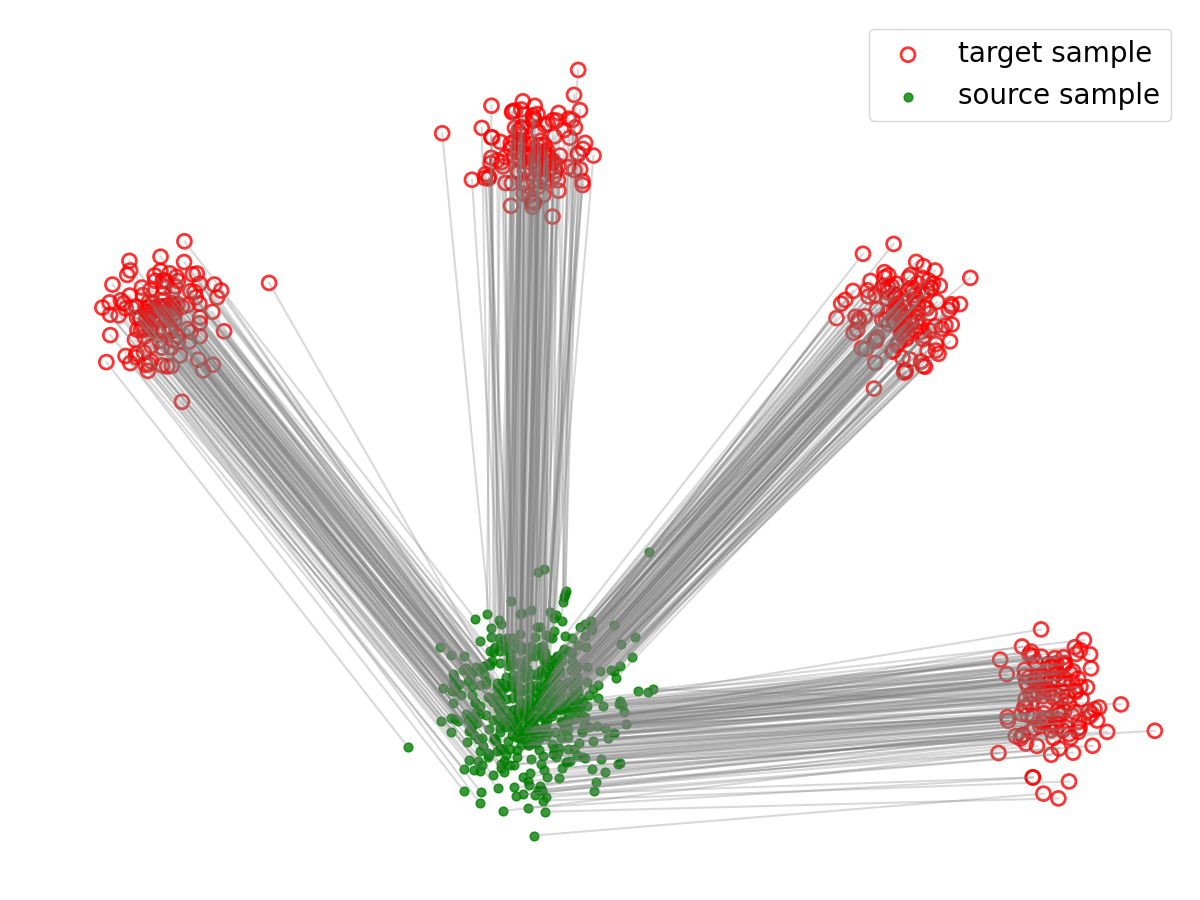}
            \caption{Convex OT Solver}
        \end{subfigure}
        \vspace{-2pt}
        \caption{
        \textbf{Visualization of Generator $T_{\theta}$.} The gray lines illustrate the generated pairs, i.e., the connecting lines between $x$ (green) and $T_{\theta}(x)$ (blue). The red dots represent the training data samples.
        %Visualization of generation result of each algorithm. Red, green, blue, and gray represent the target, input, output, and pushforward, respectively.
        }
        % \label{fig:gmm-mode}
        \vspace{-10pt}
    \end{center}
\end{figure}
\newpage
\begin{figure}[h] 
    \captionsetup[subfigure]{aboveskip=-1pt,belowskip=-1pt}
    \begin{center}
        \begin{subfigure}[b]{0.45\textwidth}
            \includegraphics[width=\textwidth]{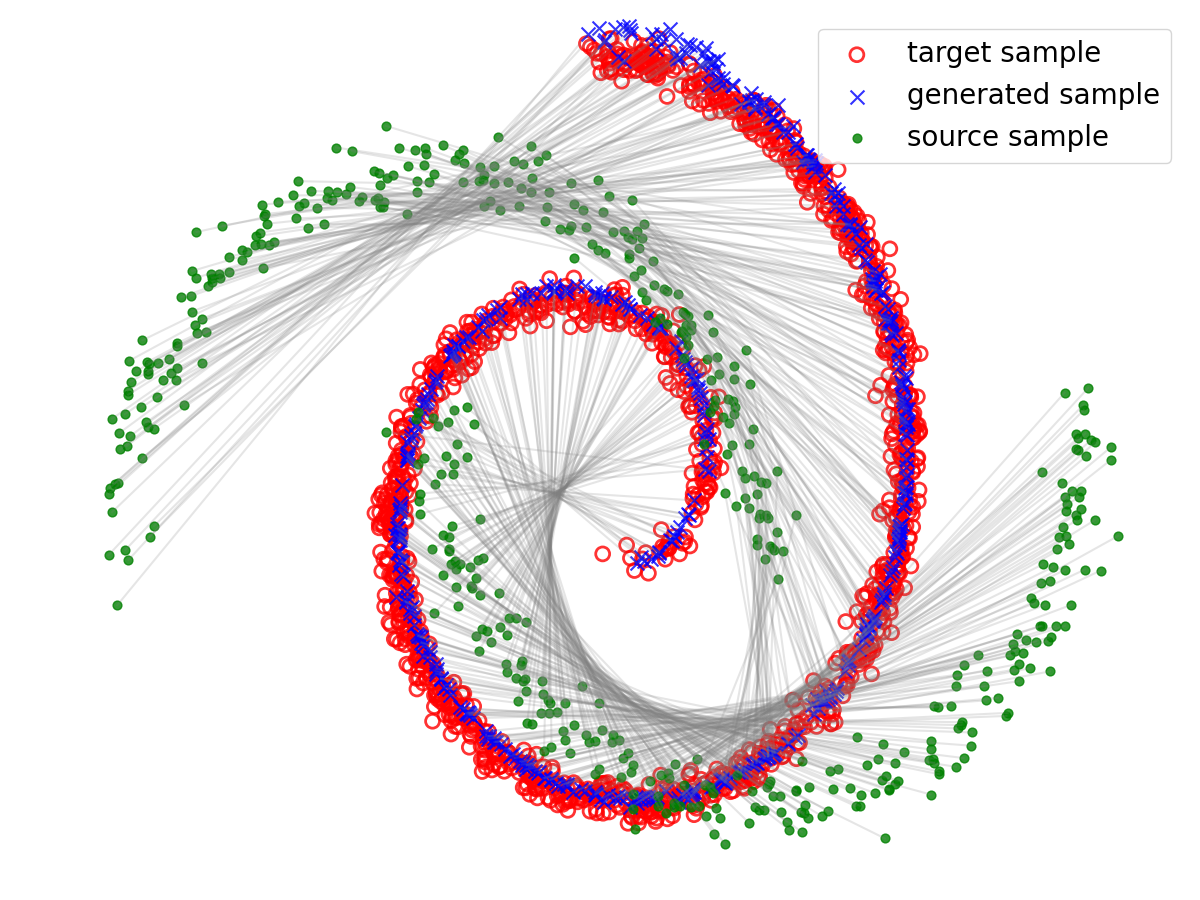}
            \caption{WGAN}
        \end{subfigure}
        \hfill
        \begin{subfigure}[b]{0.45\textwidth}
            \includegraphics[width=\textwidth]{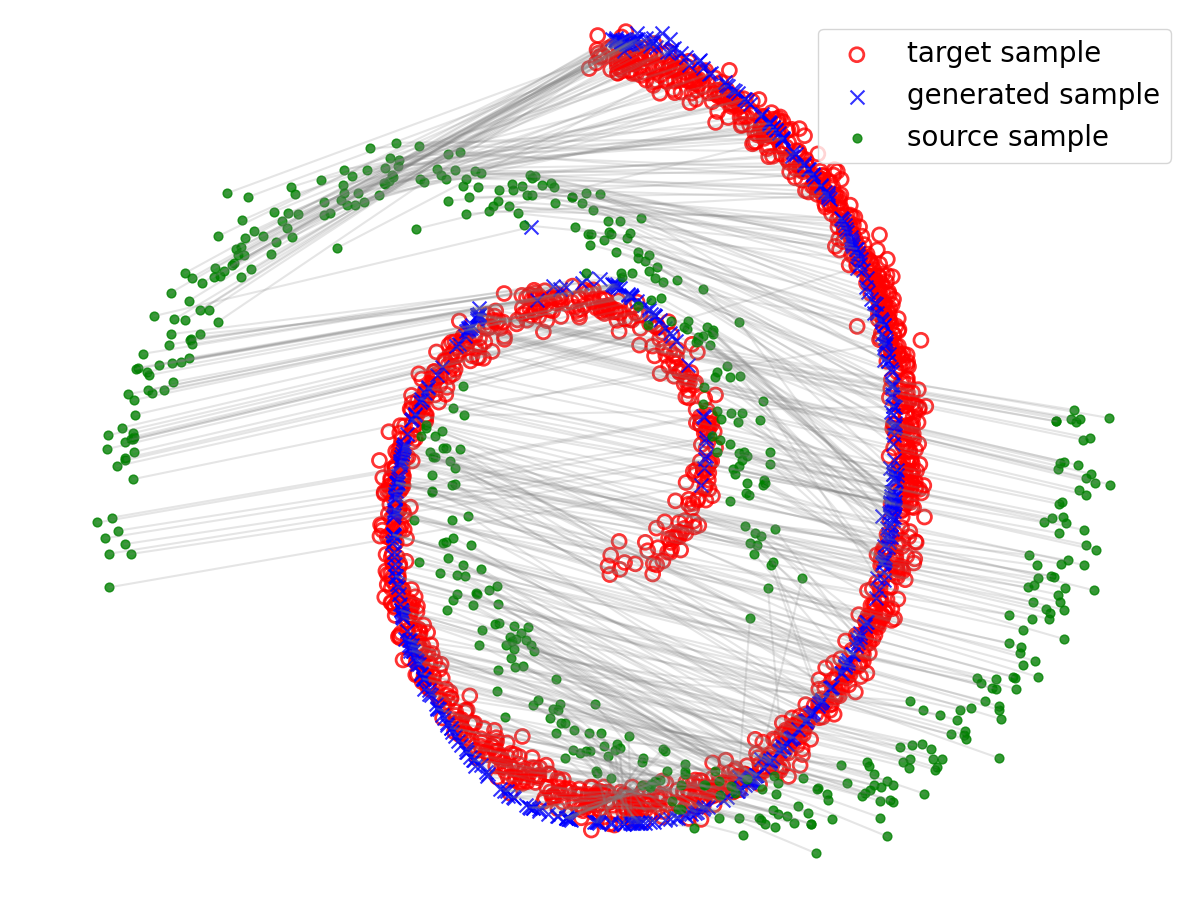}
            \caption{WGAN-GP}
        \end{subfigure}
        \hfill
        \begin{subfigure}[b]{0.45\textwidth}
            \includegraphics[width=\textwidth]{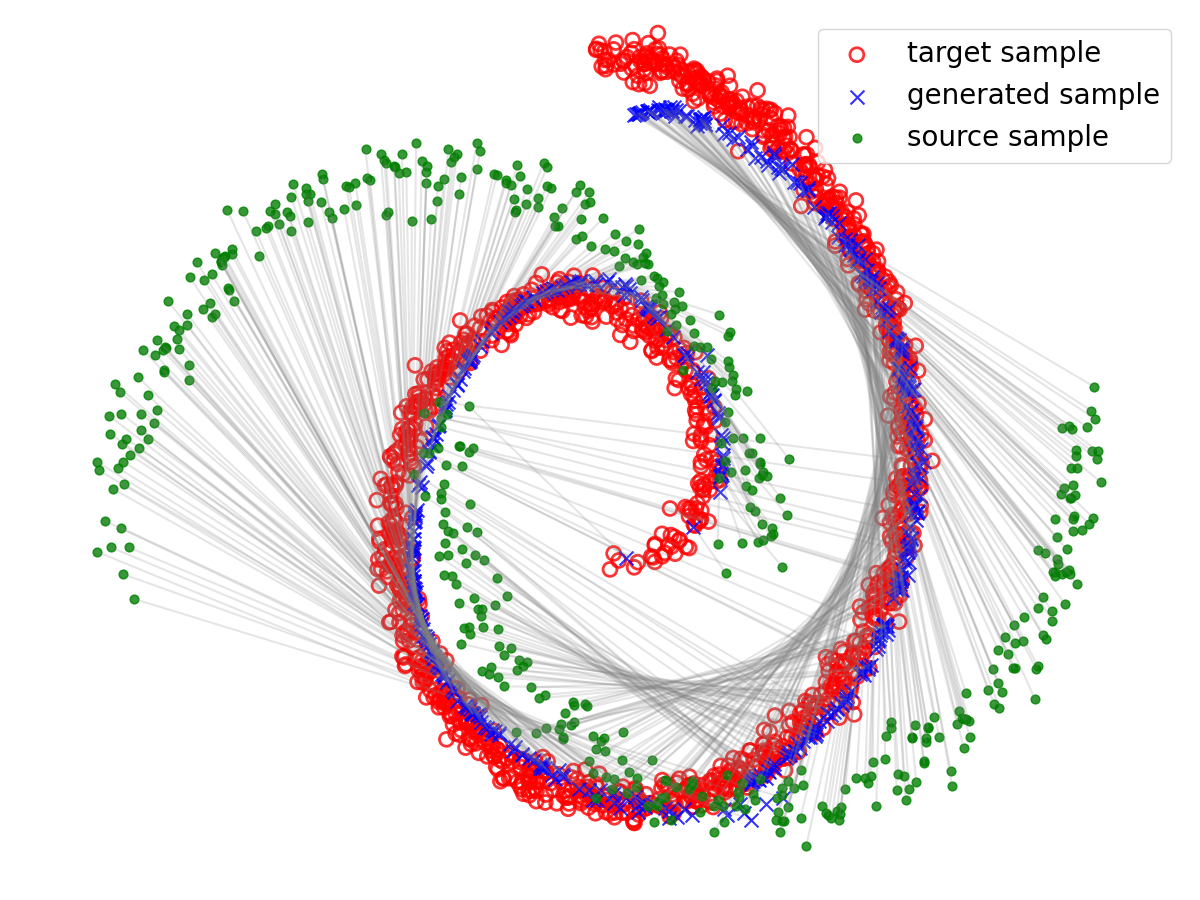}
            \caption{UOTM w/o cost}
        \end{subfigure}
        \hfill
        \begin{subfigure}[b]{0.45\textwidth}
            \includegraphics[width=\textwidth]{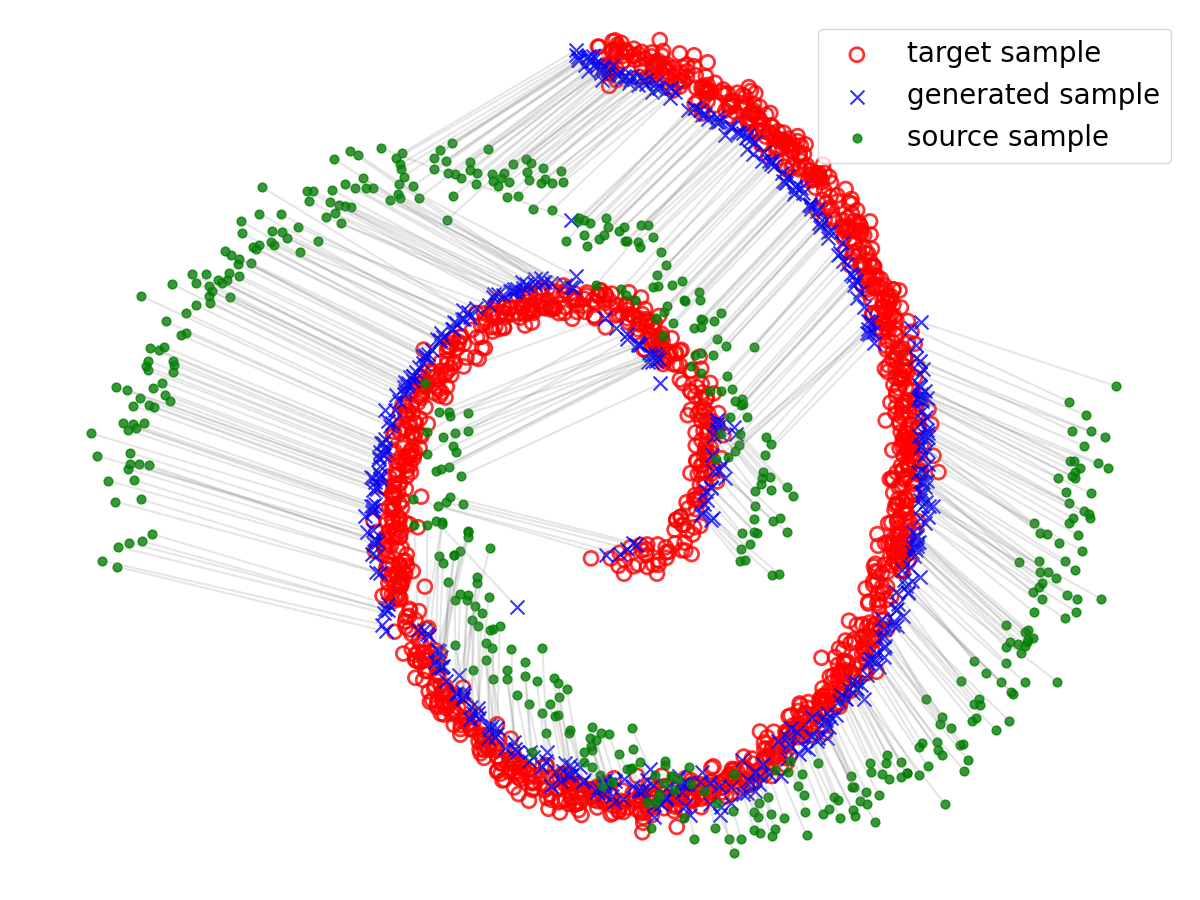}
            \caption{OTM}
        \end{subfigure}
        \hfill
        \begin{subfigure}[b]{0.45\textwidth}
            \includegraphics[width=\textwidth]{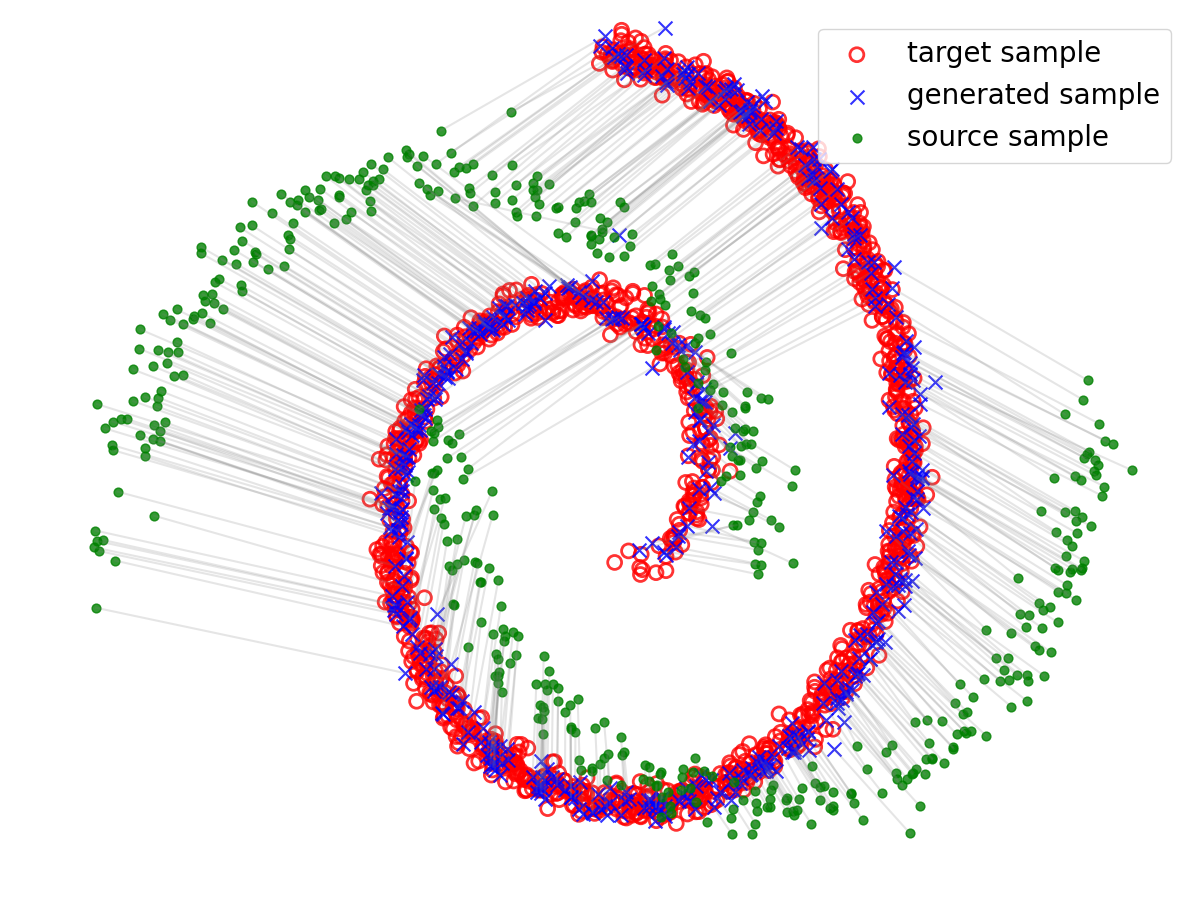}
            \caption{UOTM}
        \end{subfigure}
        \hfill
        \begin{subfigure}[b]{0.495\textwidth}
            \includegraphics[width=\textwidth]{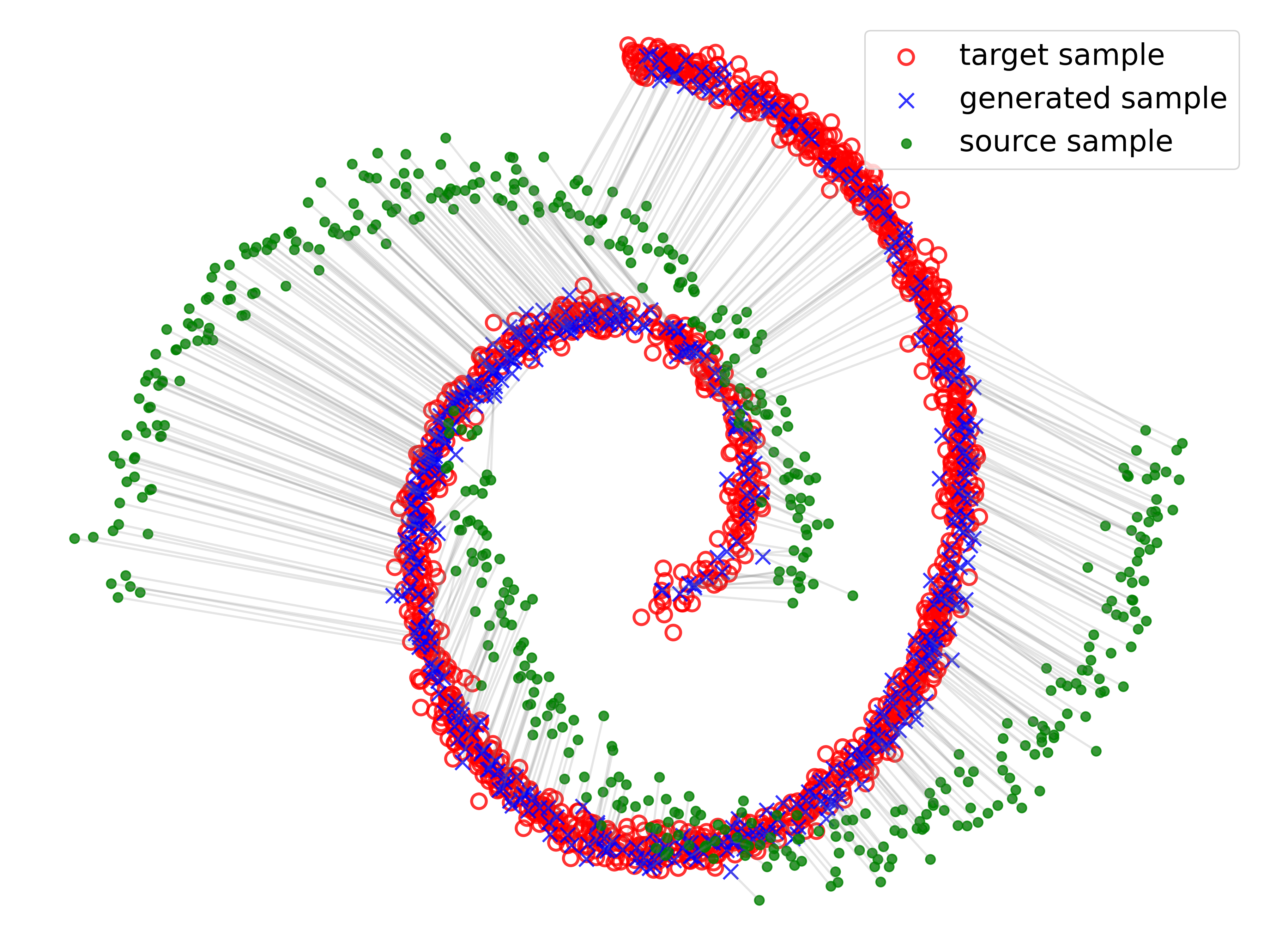}
            \caption{UOTM-SD}
        \end{subfigure}
        \hfill
        \begin{subfigure}[b]{0.495\textwidth}
            \includegraphics[width=\textwidth]{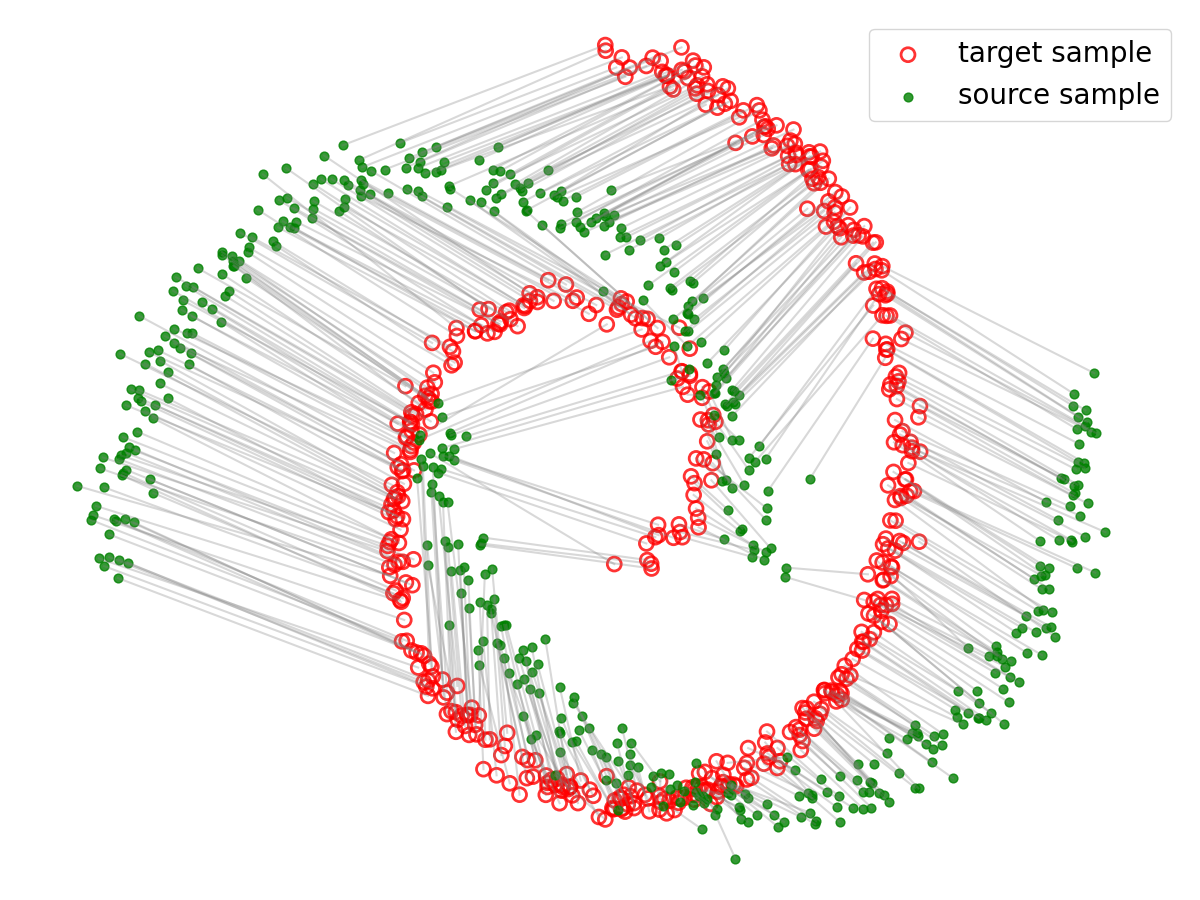}
            \caption{Convex OT Solver}
        \end{subfigure}
        \vspace{-2pt}
        \caption{
        \textbf{Visualization of Generator $T_{\theta}$.} The gray lines illustrate the generated pairs, i.e., the connecting lines between $x$ (green) and $T_{\theta}(x)$ (blue). The red dots represent the training data samples.
        %Visualization of generation result of each algorithm. Red, green, blue, and gray represent the target, input, output, and pushforward, respectively.
        }
        % \label{fig:gmm-mode}
        \vspace{-10pt}
    \end{center}
\end{figure}

\clearpage

\subsection{Full Table Result for CIFAR-10 Generation} \label{appen:full-table}

\begin{table}[h]
    \centering
        % \vspace{-10pt}
        \setlength\tabcolsep{4.0pt}
        \captionof{table}{
        \textbf{Quantitative Evaluation of OT-based GANs} on CIFAR-10.
        %\textbf{$\tau$-robustness with $(5, 1/5)$-schedule}
        %\textbf{Results on CelebA-HQ (256$\times$256).}
        %Results on a generation of CelebA-HQ (256$\times$256).
        } %\label{tab:compare-celeba}
        % \vspace{10pt}
        \begin{tabular}{ccccc}
            \toprule
            \multirow{4}{*}{Model}  &  \multicolumn{4}{c}{Backbone Architecture}       \\
            \cmidrule{2-5}
                                &  \multicolumn{3}{c}{NCSN++}  &  DCGAN \\
            \cmidrule(lr){2-4} \cmidrule(lr){5-5}
                                &  FID ($\downarrow$)  &  Precision ($\uparrow$) & Recall ($\uparrow$) &  FID ($\downarrow$) \\
            \midrule
            WGAN                &  48.8 & 0.45 & 0.02 &   52.3  \\
            WGAN-GP             &  4.5 & 0.71 & 0.55 &   50.8  \\
            OTM                 &  4.3 & 0.71 & 0.49  &   19.8 \\
            \midrule
            UOTM w/o cost       &  19.7 & \textbf{0.80} & 0.13 &   15.4  \\                
            UOTM (SP)           &  \textbf{2.7} & 0.78 & \textbf{0.62} &   15.8  \\
            UOTM (KL)           &  2.9  & - & - &   \textbf{12.2} \\
            \bottomrule
        \end{tabular}
\end{table}

\begin{table}[h]
    \centering
        % \vspace{-10pt}
        \setlength\tabcolsep{4.0pt}
        \captionof{table}{
        \textbf{Comparison of $\tau$-robustness.} We use $\alpha_{min}=1/5$ and $\alpha_{max}=5$ for each UOTM-SD.
        %\textbf{$\tau$-robustness with $(5, 1/5)$-schedule}
        %\textbf{Results on CelebA-HQ (256$\times$256).}
        %Results on a generation of CelebA-HQ (256$\times$256).
        } %\label{tab:compare-celeba}
        % \vspace{10pt}
        \begin{tabular}{cccccc}
        \toprule
        $\tau$ &   2e-4 & 5e-4 & 1e-3 & 2e-3 & 5e-3 \\
        \midrule
        UOTM-SD (Cosine) & \textbf{3.60} & 2.99 & 2.57 & 2.95 & 5.42 \\
        UOTM-SD (Linear) & 4.18 & 3.01 & \textbf{2.51} & 3.39 & \textbf{4.62} \\
        UOTM-SD (Step) & 3.92 & \textbf{2.81} & 2.78 & \textbf{2.89} &5.34 \\
        UOTM & 15.19 & 22.02 & 2.71 & 6.30 & 218.02 \\
        OTM & 4.34 & 4.15 & 4.38 & 5.13 & 7.43 \\
        \bottomrule
        \end{tabular}
\end{table}
\begin{figure}[h]
    \centering
        % \vspace{-10pt}
        \setlength\tabcolsep{2.0pt}
        \captionof{table}{
        %Results on an unconditional generation of CIFAR-10.
        %\textbf{Results on a CIFAR-10.}
        \textbf{Image Generation on CIFAR-10.} $\dagger$ indicates the results conducted by ourselves.}
        % \label{tab:compare-cifar10}
        \vspace{-2pt}
        \scalebox{1}{
            \begin{tabular}{ccc}
            \toprule
            Class & Model &                FID ($\downarrow$)  \\ 
            \midrule
            \multirow{8}{*}{\textbf{GAN}} & SNGAN+DGflow \citep{ansari2020refining} &           9.62         \\
              & AutoGAN \citep{gong2019autogan} &              12.4        \\
              & TransGAN \citep{jiang2021transgan} &            9.26          \\
              & StyleGAN2 w/o ADA \citep{karras2020training} &   8.32      \\
              & StyleGAN2 w/ ADA \citep{karras2020training} &       2.92       \\
              &  DDGAN (T=1)\citep{xiao2021tackling}&     16.68 \\
              &  DDGAN \citep{xiao2021tackling}&     3.75   \\
              &    RGM \citep{rgm}             &     \textbf{2.47}   \\
            \midrule
            \multirow{8}{*}{\textbf{Diffusion}}
              &  NCSN \citep{song2019generative}&      25.3\\
              &  DDPM \citep{ddpm}&                 3.21   \\
              &  Score SDE (VE) \citep{scoresde} &     2.20    \\
              &  Score SDE (VP) \citep{scoresde}&       2.41     \\
              &  DDIM (50 steps) \citep{ddim}&           4.67  \\
              &  CLD \citep{dockhorn2021score} &                   2.25  \\
              &  Subspace Diffusion \citep{jing2022subspace} &      2.17   \\
              &  LSGM \citep{vahdat2021score}&                \textbf{2.10}    \\
            \midrule
            \multirow{5}{*}{\textbf{VAE\&EBM}} 
              & NVAE \citep{vahdat2020nvae} &              23.5     \\
              & Glow \citep{kingma2018glow} &                48.9     \\
              & PixelCNN \citep{van2016pixel} &             65.9      \\
              & VAEBM \citep{xiao2020vaebm} &             12.2          \\
              & Recovery EBM \citep{recovery} &     \textbf{9.58} \\ 
            \midrule
            \multirow{7}{*}{\textbf{OT-based}}
              &    WGAN \citep{wgan}                       &   55.20   \\
              &    WGAN-GP\citep{wgan-gp}            &   39.40     \\
              & Robust-OT \citep{robust-ot} & 21.57\\
              &    AE-OT-GAN \citep{ae-ot-gan}       &   17.10       \\
              %&    OTM* (Small) \citep{otm}      &   21.78    &   -    \\
              %&    OTM (Large)$^\dagger$                &   7.68    &   8.50  \\
              &    OTM$^\dagger$ \citep{otm}               &   4.15  \\
              %&    \textbf{UOTM} (Small)$^\dagger$ \citep{uotm}     &   12.86    &   7.21   \\
              &    UOTM \citep{uotm}     &   2.97   \\
              &    UOTM-SD (Cosine)$^\dagger$      &   2.57 \\
              &    UOTM-SD (Linear)$^\dagger$      &   \textbf{2.51} \\
              &    UOTM-SD (Step)$^\dagger$      &   2.78  \\
            \bottomrule
            \end{tabular}
        }
\end{figure}

\begin{figure}[h]
    \centering
        % \vspace{-10pt}
        \setlength\tabcolsep{2.0pt}
        \captionof{table}{
        \textbf{Image Generation on CelebA-HQ.} $\dagger$ indicates the results conducted by ourselves.}
        \vspace{-2pt}
        \scalebox{1}{
            \begin{tabular}{ccc}
                \toprule
                Class & Model &   FID ($\downarrow$)         \\
                \midrule
                \multirow{6}{*}{\textbf{Diffusion}}
                & Score SDE (VP) \cite{scoresde} &    7.23  \\
                & Probability Flow \cite{scoresde} & 128.13 \\
                & LSGM \cite{vahdat2021score}&       7.22 \\
                & UDM \cite{kim2021score}&         7.16   \\
                & DDGAN \cite{xiao2021tackling} & 7.64    \\
                & RGM \cite{rgm} &  \textbf{7.15}  \\
                \midrule
                \multirow{5}{*}{\textbf{GAN}} 
                & PGGAN \cite{karras2017progressive} &   8.03    \\
                & Adv. LAE \cite{pidhorskyi2020adversarial} &  19.2       \\
                & VQ-GAN \cite{esser2021taming} &    10.2 \\
                & DC-AE \cite{parmar2021dual} &  15.8   \\
                & StyleSwin \citep{zhang2022styleswin} & \textbf{3.25} \\
                \midrule
                \multirow{3}{*}{\textbf{VAE}} 
                & NVAE \cite{vahdat2020nvae} &    29.7    \\
                & NCP-VAE \cite{aneja2021contrastive} &     24.8   \\
                & VAEBM \cite{xiao2020vaebm}&  \textbf{20.4}        \\
                \midrule
                \multirow{4}{*}{\textbf{OT-based}} 
                & OTM$^\dagger$ \citep{otm} & 13.56 \\
                & UOTM (KL) \citep{uotm}  & 6.36 \\
                & UOTM (SP)$^\dagger$ & 6.31 \\
                & \textbf{UOTM-SD}$^\dagger$ & \textbf{5.99} \\
                \bottomrule
            \end{tabular}
        }
\end{figure}

\newpage

\subsection{Additional Quantitative Results for Lipschitzness of Potential}
\begin{figure}[h]
    \captionsetup[subfigure]{aboveskip=-1pt,belowskip=-1pt}
    \begin{center}
        \begin{subfigure}[b]{0.495\textwidth}
            \includegraphics[width=\textwidth]{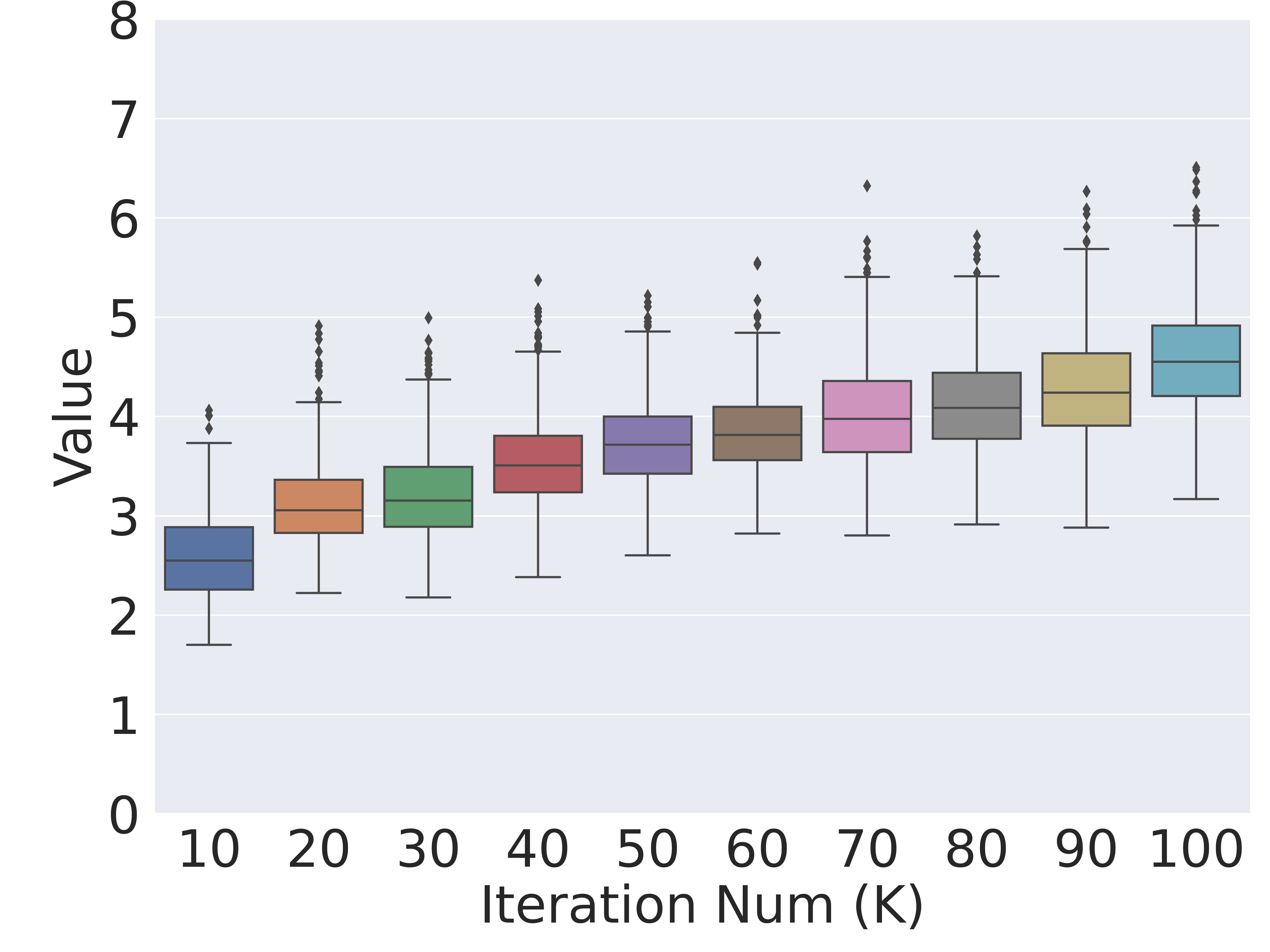}
            \caption{OTM}
        \end{subfigure}
        \hfill
        \begin{subfigure}[b]{0.495\textwidth}
            \includegraphics[width=\textwidth]{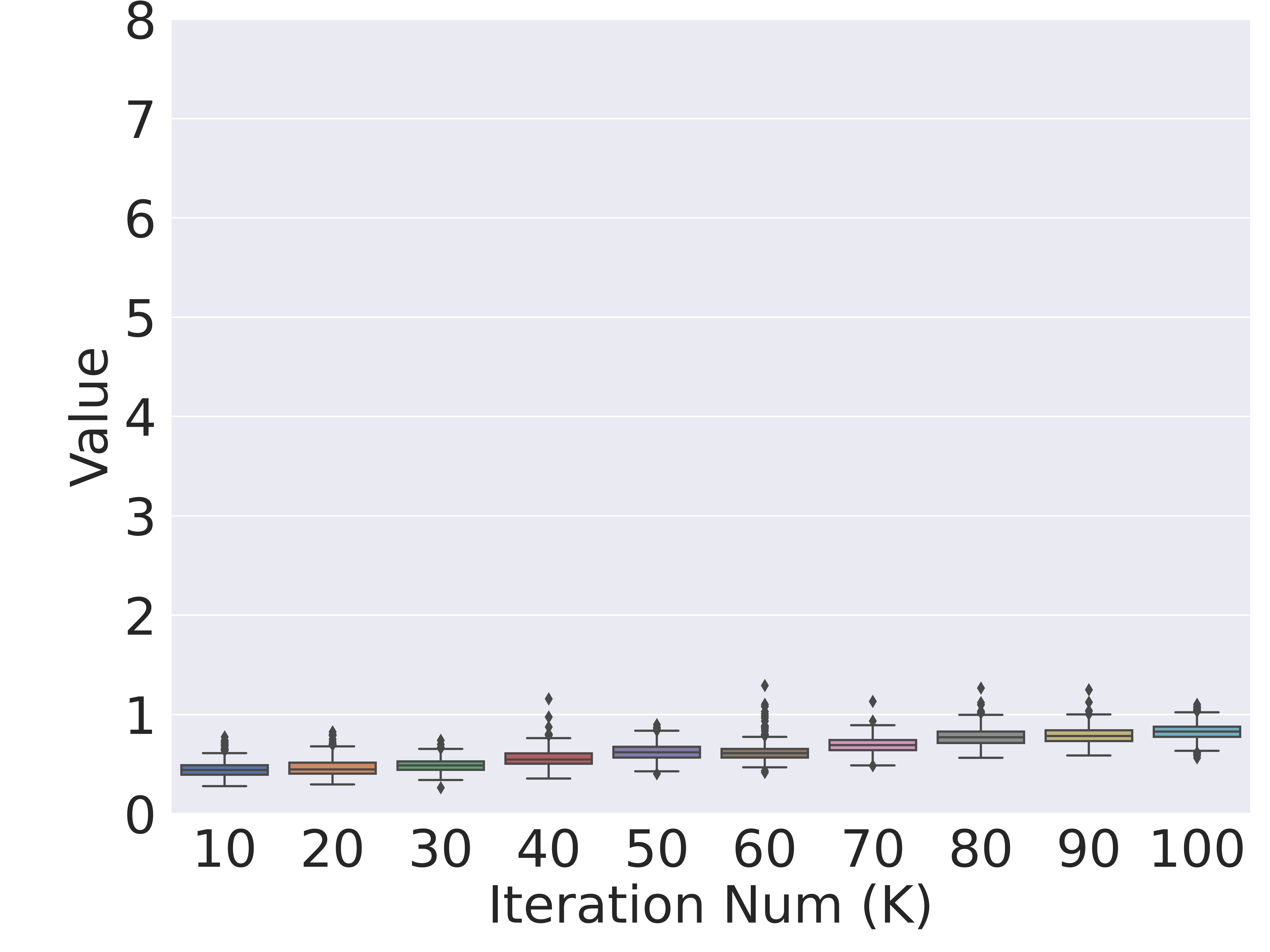}
            \caption{UOTM}
        \end{subfigure}
        \vspace{-2pt}
        \caption{
        \textbf{Distribution of the norm of the potential gradient} $\| \nabla_{y} v_{\phi}(y)\|, \| \nabla_{\hat{y}} v_{\phi}(\hat{y})\|$ at a random real data $y$ and a randomly generated data $\hat{y}$ for every 10K iterations on CIFAR-10. Due to the equi-Lipschitz property, the gradient norm of UOTM potential is stable during training. This stability contributes to the stable training of UOTM.
        In the Toy dataset, we measured the Average Rate of Change (ARC) of potential $\frac{|v_\phi(y) - v_\phi(x)|}{\lVert y-x \rVert}$ between a randomly selected training data $x$ and another randomly chosen point $y$ within the data space (Fig \ref{fig:gmm-logit}).
        However, unlike the Toy dataset, the image dataset is extremely sparse in its ambient space (pixel space). Hence, randomly selecting point $y$ within the pixel space can yield undesirable results. Therefore, instead of measuring the Average Rate of Change (ARC) of potential, we measured the norm of the potential gradient.
        %Visualization of the absolute value of the average rate of change of potential, i.e. $\frac{|v_\phi(y) - v_\phi(x)|}{\lVert y-x \rVert}$ for every 6K iterations.
        }
        % \label{fig:gmm-logit}
        \vspace{-5pt}
    \end{center}    
\end{figure}

\begin{figure}[h] 
    \captionsetup[subfigure]{aboveskip=-1pt,belowskip=-1pt}
    \begin{center}
        \begin{subfigure}[b]{0.30\textwidth}
            \includegraphics[width=\textwidth]{figures/gmm_potential2/D_WGAN.png}
            \caption{WGAN}
        \end{subfigure}
        \hfill
        \begin{subfigure}[b]{0.30\textwidth}
            \includegraphics[width=\textwidth]{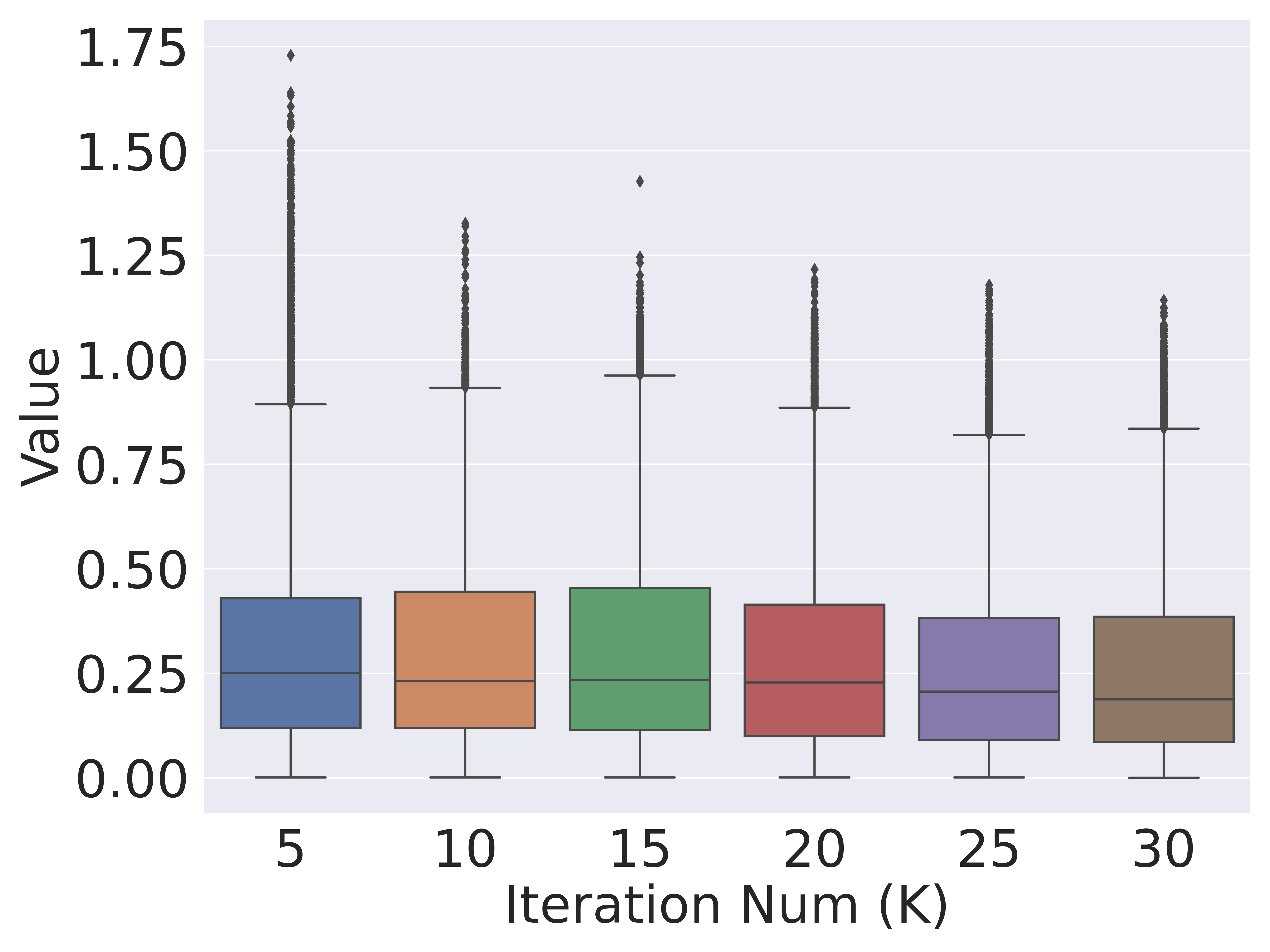}
            \caption{WGAN-GP}
        \end{subfigure}
        \hfill
        \begin{subfigure}[b]{0.30\textwidth}
            \includegraphics[width=\textwidth]{figures/gmm_potential2/D_UOTM_wocost.png}
            \caption{UOTM w/o cost}
        \end{subfigure}
        \\
        \begin{subfigure}[b]{0.30\textwidth}
            \includegraphics[width=\textwidth]{figures/gmm_potential2/D_OTM2.png}
            \caption{OTM}
        \end{subfigure}
        \qquad
        \begin{subfigure}[b]{0.30\textwidth}
            \includegraphics[width=\textwidth]{figures/gmm_potential2/D_UOTM_SP.png}
            \caption{UOTM}
        \end{subfigure}
        \vspace{-2pt}
        \caption{
        \textbf{Distribution of the absolute value of Average Rate of Change (ARC) of potential} $\frac{|v_\phi(y) - v_\phi(x)|}{\lVert y-x \rVert}$ for every 5K iterations. Due to the equi-Lipschitz property, $|\text{ARC}|$ of UOTM potential is stable during training. This stability contributes to the stable training of UOTM.
        %Visualization of the absolute value of the average rate of change of potential, i.e. $\frac{|v_\phi(y) - v_\phi(x)|}{\lVert y-x \rVert}$ for every 6K iterations.
        }
        % \label{fig:gmm-logit}
        \vspace{-15pt}
    \end{center}    
\end{figure}

% \jw{We appreciate the reviewer for the thoughtful comment. \textbf{We conducted additional experiments for Lipschitz continuity of OTM and UOTM on CIFAR-10, and added results in Appendix (Fig 11). }
% In the Toy dataset, we measured the Average Rate of Change (ARC) of potential $\frac{|v_\phi(y) - v_\phi(x)|}{\lVert y-x \rVert}$ between a randomly selected training data $x$ and another randomly chosen point $y$ within the data space.
% However, unlike the Toy dataset, the image dataset is extremely sparse in its ambient space (pixel space). Hence, randomly selecting point $y$ within the pixel space can yield undesirable results. Therefore, instead of measuring the Average Rate of Change (ARC) of potential, we measured the norm of the potential gradient $\| \nabla_{x} v_{\phi}(x)\|, \| \nabla_{y} v_{\phi}(y)\|$ at a random training data $x$ and a randomly generated data $y$. \textbf{The result shows a similar trend as in Fig 6.} The gradient norm of OTM potential is much larger than the gradient norm of UOTM potential throughout training on CIFAR-10.
% }

\newpage
\subsection{Additional Discussions on Scheduling} \label{appen:schedule}
\begin{table}[h]
\centering
    % \vspace{-10pt}
    \setlength\tabcolsep{4.0pt}
    \captionof{table}{
    %Results on an unconditional generation of CIFAR-10.
    %\textbf{Results on a CIFAR-10.}
    \textbf{Ablation Study on Schedule Intensity $(\alpha_{min}, \alpha_{max})$}.
    %\textbf{Schedule Intensity Ablation}}
    %\label{tab:compare-cifar10}
    }
    \vspace{-2pt}
    \begin{tabular}{ccccccc}
    \toprule
    Schedule Type &   (1/2, 2) & (1/3, 3) & (1/5, 5) & (1/10, 10) & (3, 3) & (5, 5)\\
    \midrule
    Cosine & 3.20 & 2.94 & 2.57 & 2.78 & \multirow{3}{*}{3.73} & \multirow{3}{*}{3.99} \\
    Linear & \textbf{2.70} & 2.97 & \textbf{2.51} & 2.77 &  &  \\
    Step & 3.29 & \textbf{2.85} & 2.78 & \textbf{2.70} &  &  \\
    \bottomrule
    \end{tabular}
\end{table}

\paragraph{Schedule Intensity Ablation}
% 2 -> 0.5
% 3 -> 0.33
% 5 -> 0.2
% 10 -> 0.1We conducted an ablation study on the $\alpha$-schedule intensity, i.e., $\alpha_{min}$ and $\alpha_{max}$.
% \jw{Which scheduling is best. While there is dependency, UOTM-SD is always better than the best OTM. In best case, UOTM-SD is better than UOTM.
% }
To analyze the effect of schedule intensity further, we evaluated our UOTM-SD model for four different scheduling intensities. For simplicity, we focused on symmetric ones i.e., $\alpha_{max}=k, \alpha_{min}=1/k$ for some $k>1$, while fixing $\tau=0.001$. Overall, the Linear Scheduling scheme provided the best result, achieving FID scores below 3 for all scheduling intensities. Nevertheless, the other two scheduling schemes also demonstrated robust performance. 
Moreover, we tested UOTM-SD without scheduling ($\alpha$-UOTM). Specifically, we tested the setting of $\alpha_{max}=\alpha_{min} = \alpha_{const} > 1$. In this case, the divergence weight $\alpha$ in Eq. \label{eq:uot_scale} is constant throughout training. When we set $\alpha_{const}=3, 5$, $\alpha$-UOTM showed FID scores of 3.73 and 3.99. This result demonstrates that $\alpha$-scheduling provides a method for harnessing the advantages of both the large $\tau$ regime and the small $\tau$ regime. Therefore, UOTM-SD outperforms $\alpha$-UOTM.

% \begin{table}[h]
%     \centering
%     \caption{\textbf{Comparison of $\tau$-robustness} when $(\alpha_{min}, \alpha_{max})=(1/3, 3)$.}
%     \begin{tabular}{cccccc}
%     \toprule
%     $\tau$ &   2e-4 & 5e-4 & 1e-3 & 2e-3 & 5e-3 \\
%     \midrule
%     UOTM-SD (Cosine) & 10.01 & 2.55 & 2.94 & 3.27 & \jw{14.05} \\
%     UOTM-SD (Linear) & 24.33 & 2.42 & 2.97 & 3.17 & 13.58 \\
%     UOTM-SD (Step) & 15.65 & 2.63 & 2.85 & 3.05 & \jw{10.63} \\
%     UOTM & 15.19 & 22.02 & 2.71 & 6.30 & 218.02 \\
%     OTM & 4.34 & 4.15 & 4.38 & 5.13 & 7.43 \\
%     \bottomrule
%     \end{tabular}
% \end{table}

% \begin{table}[h]
%     \centering
%     \caption{\textbf{Comparison of $\tau$-robustness} when $(\alpha_{min}, \alpha_{max})=(1/5, 5)$.}
%     \begin{tabular}{cccccc}
%     \toprule
%     $\tau$ &   2e-4 & 5e-4 & 1e-3 & 2e-3 & 5e-3 \\
%         \midrule
%         UOTM-SD (Cosine) & 3.60 & 2.99 & 2.57 & 2.95 & 5.42 \\
%         UOTM-SD (Linear) & 4.18 & 3.01 & 2.51 & 3.39 & 0 \\
%         UOTM-SD (Step) & 3.92 & 2.81 & 2.78 & 2.89 &5.34 \\
%         UOTM & 15.19 & 22.02 & 2.71 & 6.30 & 218.02 \\
%         OTM & 4.34 & 4.15 & 4.38 & 5.13 & 7.43 \\
%     \bottomrule
%     \end{tabular}
% \end{table}

\subsection{Additional Qualitative Results} \label{appen:images}
% \newpage
\begin{figure}[h]
    \centering
    \includegraphics[page=1,width=0.97\textwidth]{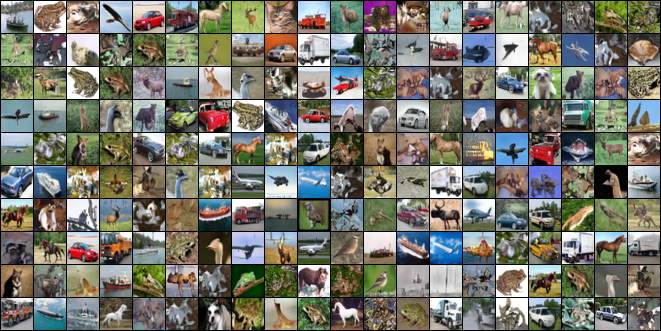}
    \vspace{-7pt}
    \caption{\textbf{Generated samples from UOTM with Small $\tau(=0.0002)$} on CIFAR-10 ($32\times32$).}
    \vspace{-5pt}
\end{figure}

\begin{figure}[h]
    \centering
    \includegraphics[page=1,width=0.97\textwidth]{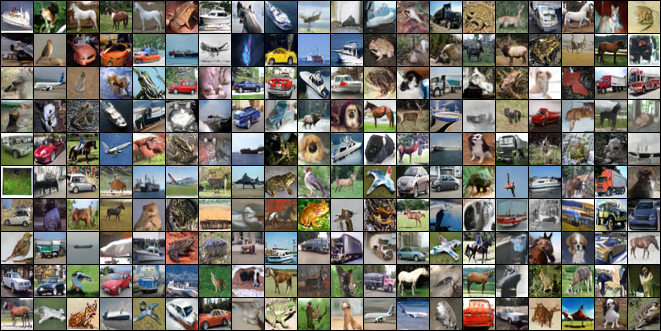}
    \vspace{-7pt}
    \caption{\textbf{Generated samples from UOTM with Optimal $\tau(=0.001)$} on CIFAR-10 ($32\times32$).}
    \vspace{-5pt}
\end{figure}

\begin{figure}[h]
    \centering
    \includegraphics[page=1,width=0.97\textwidth]{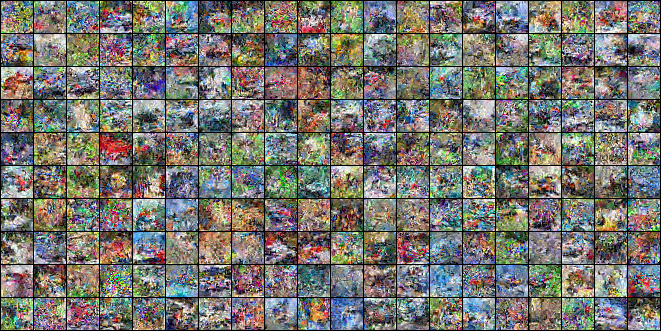}
    \vspace{-7pt}
    \caption{\textbf{Generated samples from UOTM with Large $\tau(=0.005)$} on CIFAR-10 ($32\times32$).}
    \vspace{-5pt}
\end{figure}

\begin{figure}[h]
    \centering
    \includegraphics[page=1,width=0.97\textwidth]{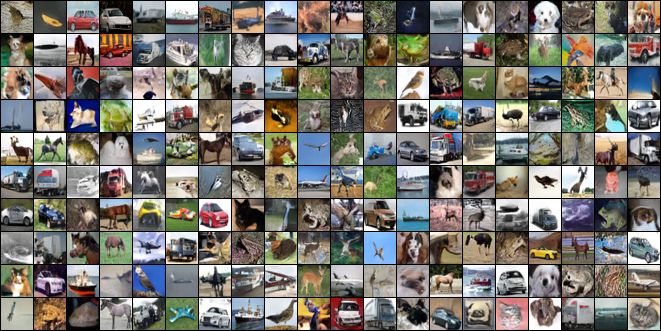}
    \vspace{-7pt}
    \caption{\textbf{Generated samples from UOTM-SD (Cosine)} on CIFAR-10 ($32\times32$).}
    \vspace{-5pt}
\end{figure}

\begin{figure}[h]
    \centering
    \includegraphics[page=1,width=0.97\textwidth]{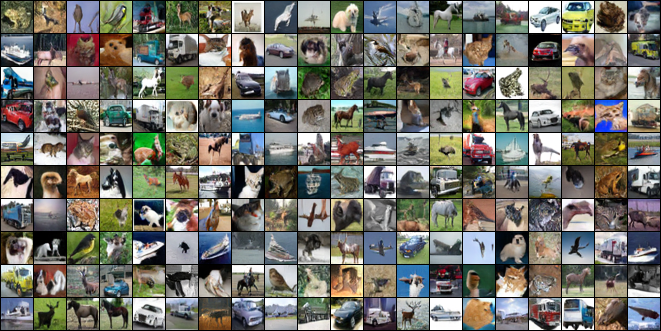}
    \vspace{-7pt}
    \caption{\textbf{Generated samples from UOTM-SD (Linear)} on CIFAR-10 ($32\times32$).}
    \vspace{-5pt}
\end{figure}

\begin{figure}[h]
    \centering
    \includegraphics[page=1,width=0.97\textwidth]{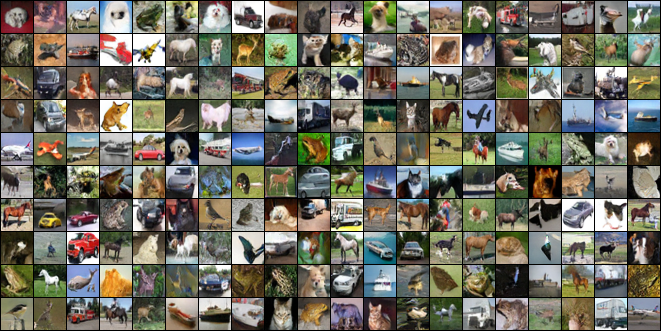}
    \vspace{-7pt}
    \caption{\textbf{Generated samples from UOTM-SD (Step)} on CIFAR-10 ($32\times32$).}
    \vspace{-5pt}
\end{figure}

\end{document}